\newcommand*\colourcheck[1]{%
	\expandafter\newcommand\csname #1check\endcsname{\textcolor{#1}{\ding{52}}}%
}
\newcommand*\colourxmark[1]{%
	\expandafter\newcommand\csname #1xmark\endcsname{\textcolor{#1}{\ding{56}}}%
}
\definecolor{navy}{RGB}{0,0,128}
\tikzstyle{every pin edge}=[<-,shorten <=1pt]
\tikzstyle{neuron}=[circle,fill=black!25,minimum size=17pt,inner sep=0pt]
\tikzstyle{input neuron}=[neuron, fill=green!50]
\tikzstyle{output neuron}=[neuron, fill=red!50]
\tikzstyle{hidden neuron}=[neuron, fill=blue!50]
\tikzstyle{small neuron}        =[hidden neuron, draw, minimum size=15pt]
\tikzstyle{small input neuron}  =[input neuron , draw, minimum size=15pt]
\tikzstyle{small output neuron} =[output neuron, draw, minimum size=15pt]
\tikzstyle{annot} = [text width=4em, text centered]
\tikzstyle{nnedge} = [-{stealth},shorten >=0.1cm, shorten <=0.05cm,line width=0.8pt,black]
\tikzstyle{edge} = [->,line width = 0.3pt, shorten >=0.2cm]
\tikzstyle{edgeWide} = [->,line width = 2pt, , shorten >=0.2cm]
\tikzset{every picture/.style={line width=0.75pt}} 
\tikzstyle{BadSquare}=[rectangle,fill=red!30!white,minimum size=25pt,inner 
\tikzstyle{InitSquare}=[rectangle,fill=green!30!white,minimum size=25pt,inner 
\newcommand{\mysubsection}[1]{\medskip\noindent\textbf{#1}}
\newcommand{\relu}{\text{ReLU}\xspace}
\newcommand{\verifyexplanation}{\text{\verify((Explanation$\setminus$\{f\})=v,N,$Q_{\neg
			c}$)}\xspace}
\newcommand{\verifymultistepexplanation}{\text{\verify($(E_1,\ldots,E_i\setminus
 f)$=$\mathcal{E}_{S_{[i]}},N,Q_{\neg a_i}$)}\xspace}
\newcommand{\verifycontrastiveexamplesingle}{\text{\verify($F\setminus 
c=s_i$,$N$,$Q_{\neg
			a_i}$)}\xspace}
\newcommand{\verifymultistepminimumexplanation}{\text{\verify($E\cdot 
(F')$=$\mathcal{E}_{S_{[i]}},N,Q_{\neg a_i}$)}\xspace}
\newcommand{\explanation}{\text{Explanation}\xspace}
\newcommand{\lime}{\texttt{LIME}\xspace}
\newcommand{\shap}{\texttt{SHAP}\xspace}
\newcommand{\sat}{\texttt{SAT}\xspace}
\newcommand{\unsat}{\texttt{UNSAT}\xspace}
\newcommand{\rcTwo}{\texttt{RC-2}\xspace}
\newcommand{\maxSat}{\texttt{MaxSAT}\xspace}
\newcommand{\pysat}{\texttt{PySat}\xspace}
\newcommand{\verify}{\texttt{verify}\xspace}
\newcommand{\up}{\texttt{UP}\xspace}
\newcommand{\down}{\texttt{DOWN}\xspace}
\newcommand{\allExplanationRecursiveSearchWithLineBreak}{\texttt{All-Explanation-\\Recursive-Search}\xspace}
\newcommand{\RIE}{\texttt{RIE}\xspace}
\newcommand{\enumeratecxpssinglestepWithLineBreak}{\texttt{Enumerate-All-Cxps-In-\\Single-Step}\xspace}
\newcommand{\forwardOutput}{\texttt{FORWARD}\xspace}
\newcommand{\leftOutput}{\texttt{LEFT}\xspace}
\newcommand{\rightOutput}{\texttt{RIGHT}\xspace}
\newcommand{\marabou}{\textit{Marabou}\xspace}
\newtheorem{lemma}{Lemma}
\newif\ifcomments
\newif\ifoutline
\newif\iflong
\begin{document}

\IEEEoverridecommandlockouts
\DeclareRobustCommand{\IEEEauthorrefmark}[1]{\smash{\textsuperscript{\footnotesize
 #1}}}

	
	\title{Formally Explaining Neural Networks\\within Reactive Systems} 
	
	\author{
		\IEEEauthorblockN{
			Shahaf Bassan\IEEEauthorrefmark{1,$^*$},
			Guy Amir\IEEEauthorrefmark{1,$^*$},
			Davide Corsi\IEEEauthorrefmark{2},
			Idan Refaeli\IEEEauthorrefmark{1},
			and
			Guy Katz\IEEEauthorrefmark{1}
		}
		\IEEEauthorblockA{
			\IEEEauthorrefmark{1}The Hebrew University of Jerusalem,
			 \texttt {\{shahaf, guyam, idan0610, guykatz\}@cs.huji.ac.il}
			\\ 
			\IEEEauthorrefmark{2}University of Verona, \texttt 
			{davide.corsi@univr.it}
		}
		
		\thanks{$*$ Both authors contributed equally.}
	}

\maketitle
\thispagestyle{plain}
\pagestyle{plain}

\begin{abstract}
  Deep neural networks (DNNs) are increasingly being used as
  controllers in reactive systems. However, DNNs are highly opaque,
  which renders it difficult to explain and justify their actions. To
  mitigate this issue, there has been a surge of interest in explainable AI
  (XAI) techniques, capable of  pinpointing the input features that caused the DNN to
  act as it did. Existing XAI techniques typically
  face two limitations:
  \begin{inparaenum}[(i)]
  \item they are heuristic, and do not provide
    formal guarantees that the explanations are correct; and
  \item they often apply to ``one-shot'' systems,
    where the DNN is invoked independently of past invocations, as
    opposed to reactive systems.
  \end{inparaenum}
  Here, we begin bridging this gap, and propose a formal
  DNN-verification-based XAI
  technique for reasoning about multi-step, reactive systems. We suggest methods for
  efficiently calculating succinct explanations, by exploiting the
  system's transition constraints in order to curtail the search space
  explored by the underlying verifier.  We evaluate our approach on
  two popular benchmarks from the domain of automated navigation; and
  observe that our methods allow the efficient computation of
  minimal and minimum explanations, significantly outperforming the
  state of the art. We also demonstrate that our methods
  produce formal explanations that are more reliable than competing,
  non-verification-based XAI techniques.
\end{abstract}


\section{Introduction}
\label{sec:Introduction}

Deep neural networks (DNNs)~\cite{lecun2015deep} are used in numerous key
domains, such as computer vision~\cite{krizhevsky2017imagenet},
natural language processing~\cite{devlin2018bert}, computational
biology~\cite{angermueller2016deep}, and more~\cite{CoMaPoFaCa23}. However, despite their tremendous
success, DNNs remain ``black boxes'', uninterpretable by
humans. This issue is concerning, as DNNs are prone to critical
errors~\cite{staff2019case, zhou2019metamorphic} and unexpected
behaviors~\cite{angwin2016machine, goodfellow2014explaining}.

DNN opacity has prompted significant research on explainable AI (XAI)
techniques~\cite{ribeiro2016should, ribeiro2018anchors,
  lundberg2017unified}, aimed at explaining the decisions made by
DNNs, in order to increase their trustworthiness and reliability. Modern XAI methods are useful and scalable, but
they are typically heuristic; i.e., there is no provable guarantee that the
produced explanation is correct~\cite{ignatiev2019validating, camburu2019can}. This
hinders the applicability of these approaches to critical systems, 
where regulatory bars are high~\cite{marques2022delivering}.

These limitations provide ample motivation for 
\emph{formally} explaining DNN
decisions~\cite{ignatiev2020towards, marques2022delivering,
  hoffman2018metrics, camburu2019can}. And indeed, the formal
verification community has suggested harnessing recent developments in DNN
verification~\cite{wang2021beta, MaCoCiFa23, GeMiDrTsChVe18, PoAbKr20,
  OkWaSeHa20, HuKwWaWu17, muller2021prima,SoTh19, SeDeDrFrGhKiShVaYu18, GoKaPaBa18, WuOzZeIrJuGoFoKaPaBa20, BaShShMeSa19, CoMaFa21}
to produce  provable explanations for
DNNs~\cite{ignatiev2019abduction, bassan2022towards,
  ignatiev2020towards}. Typically,
such approaches consider a particular input to the DNN, and return a
subset of its features that caused the DNN to classify the input as it
did. These subsets are called \emph{abductive explanations},
\emph{prime implicants} or
\emph{PI-explanations}~\cite{shih2018symbolic, ignatiev2019abduction,
  bassan2022towards}. This line of work constitutes a promising step
towards more reliable XAI; but so far, existing work has focused on
explaining decisions of ``one-shot'' DNNs, such as image and tabular data classifiers~\cite{ignatiev2019abduction, bassan2022towards, ignatiev2020contrastive},
and has not addressed more complex systems.

Modern DNNs are often used as controllers within elaborate reactive
systems, where a DNN's decisions affect its future invocations.
A prime example is \emph{deep reinforcement learning}
(\emph{DRL})~\cite{Li17}, where DNNs learn
control policies for complex systems~\cite{silver2017mastering,
  zhang2019end, lekharu2020deep, luong2019applications, brunke2022safe,
  aradi2020survey, PoCoMa21}. Explaining the decisions of DRL agents
(XRL)~\cite{puiutta2020explainable, madumal2020explainable,
  heuillet2021explainability, juozapaitis2019explainable} is an
important domain within XAI; but here too, modern XRL
techniques are heuristic, and do not provide formally correct explanations.
    
In this work, we make a first attempt at formally defining abductive
explanations for \emph{multi-step decision processes}. We propose
novel methods for computing such explanations and supply the
theoretical groundwork for justifying the soundness of these
methods. Our framework is model-agnostic, and could be applied to
diverse kinds of models; but here, we focus on DNNs, where producing
abductive explanations is known to be quite
challenging~\cite{barcelo2020model, ignatiev2019abduction,
  bassan2022towards}. With DNNs, our technique allows us to reduce the
number of times a network has to be unrolled, circumventing a
potential exponential blow-up in runtime; and also allows us to
exploit the reactive system's transition constraints, as well as the
DNN's sensitivity to small input perturbations, to curtail the search
space even further.
    
For evaluation purposes, we implemented our approach as a
proof-of-concept tool, which is publicly available as an artifact accompanying 
this paper~\cite{ArtifactRepository}. We used this tool to automatically 
generate explanations
for two popular DRL benchmarks: a navigation system on an abstract,
two-dimensional grid, and a real-world robotic navigation system. Our
evaluation demonstrates that our methods significantly outperform
state-of-the-art, rigorous methods for generating abductive
explanations, both in terms of efficiency and in the size of
the explanation generated. When comparing our approach to modern,
heuristic-based XAI approaches, our explanations were found to be
significantly more precise. We regard these results as strong evidence
of the usefulness of applying verification in the context of XAI.
    
The rest of this paper is organized as follows:
Sec.~\ref{sec:background} contains background on DNNs, their
verification, and their formal
explainability. Sec.~\ref{formal-k-step-explanations-section} contains our 
definitions for formal abductive explanations and contrastive examples
for reactive systems. In
Sec.~\ref{sec:formal_k_explanations_computation} we propose
different methods for computing such abductive explanations.
We then evaluate these approaches in Sec.~\ref{sec:evaluation},
followed by a discussion of related work in
Sec.~\ref{sec:RelatedWork}; and we 
conclude in Sec.~\ref{sec:Conclusion}.

\section{Background}
\label{sec:background}
        
\mysubsection{DNNs.} Deep neural networks (DNNs)~\cite{lecun2015deep}
are directed, layered graphs, whose nodes are referred to as 
\emph{neurons}. They propagate data from the first (\emph{input})
layer, through intermediate (\emph{hidden}) layers, and
finally onto an \emph{output} layer. A DNN's output is
calculated by assigning values (representing input \emph{features}) to the 
input layer, and then
iteratively calculating the neurons' values in subsequent layers. In 
classification, each output neuron corresponds to a
\emph{class}, and the input is classified as the class matching the 
greatest output.
Fig.~\ref{fig:neural_network_example_1} depicts a toy DNN.  The input
layer has three neurons and is followed by a weighted-sum layer that
 calculates an affine transformation of the
 input values. For example, given input $V_1=[1,1,1]^T$, the second layer
evaluates to $V_2=[7,8,11]^T$. This is followed by a \relu{} layer,
which applies the $\relu(x)=\max(0,x)$ function to each
value in the previous layer, resulting in $V_3=[7,8,11]^T$. The output
layer computes the weighted sum
$V_4=[15,-4]^T$. Because the first output neuron has the greatest
value, $V_1$ is classified as the
output class corresponding to that neuron.

\begin{figure}[h]
	\begin{center}
		\includegraphics[width=0.3\textwidth]{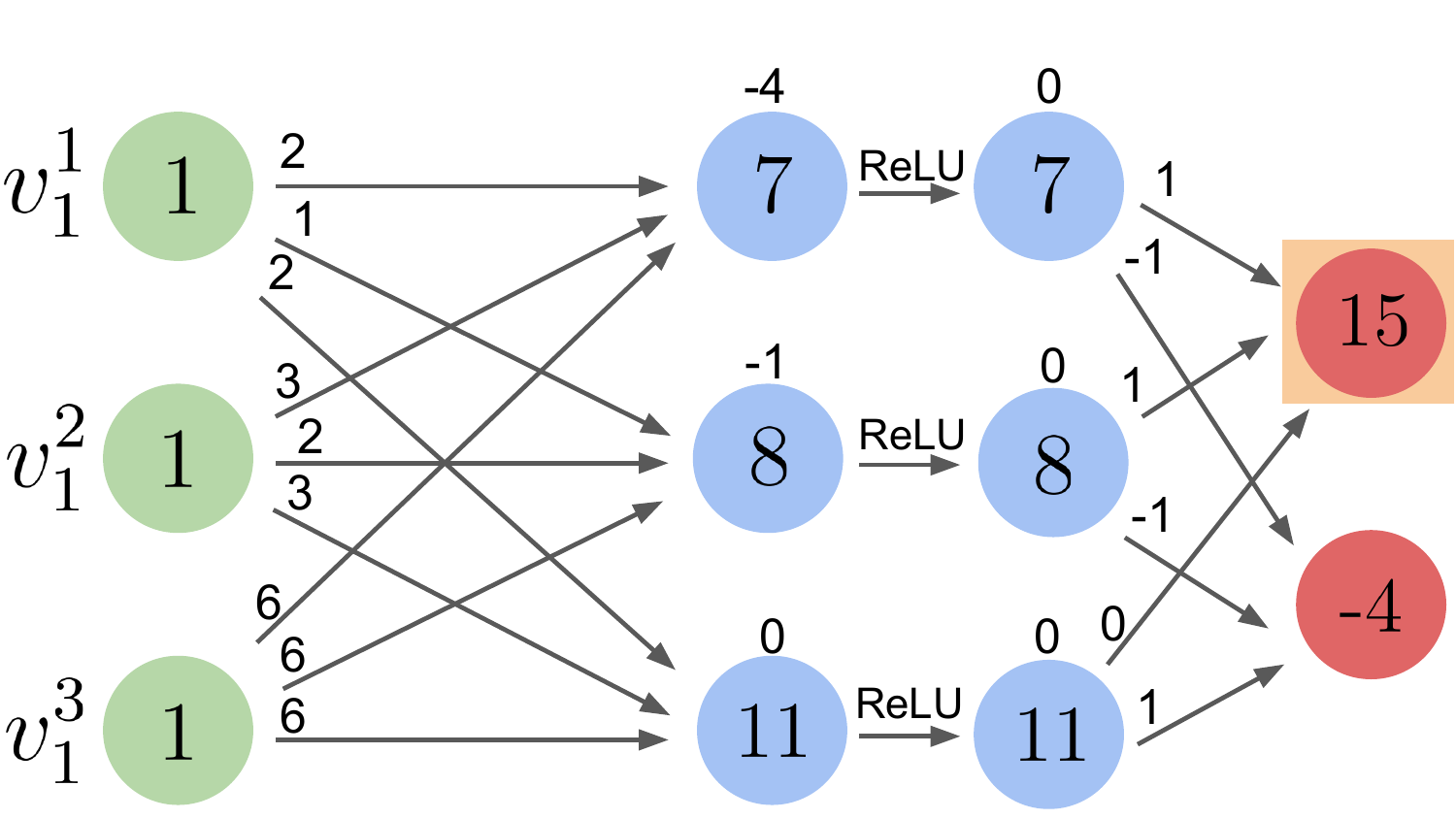}
		\caption{A toy DNN.}
		\label{fig:neural_network_example_1}
	\end{center} 
\end{figure}

\mysubsection{DNN Verification.}  We define a DNN verification query
as a tuple $\langle P, N, Q\rangle$, where $N$ is a DNN that maps an 
input vector $x$ to an output vector $y=N(x)$, $P$ is a predicate over
$x$, and $Q$ is a predicate over $y$~\cite{katz2017reluplex}. A DNN
verifier needs to answer whether there exists some input $x'$ that
satisfies $P(x') \wedge Q(N(x'))$ (a \sat{} result) or not (an
\unsat{} result). It is common to express $P$ and $Q$ in the logic of
real arithmetic~\cite{LiArLaBaKo20}. The problem of verifying DNNs is
known to be NP-Complete~\cite{katz2017reluplex}.
	
\mysubsection{Formal Explanations for Classification DNNs.}
A classification problem
is a tuple $\langle F, D, K, N\rangle$, where
\begin{inparaenum}[(i)]
\item $F=\{1,\ldots,m\}$ is the feature set;
\item $D=\{D_1,D_2,\ldots,D_m\}$ are the domains of
  individual features, and the
  entire feature space is
  $\mathbb{F}=({D_1 \times D_2 \times \ldots\times D_m}$);
\item $K=\{c_1,c_2,\ldots,c_n\}$ represents the set of all classes; and
\item $N:\mathbb{F}\to K$ is the classification function, represented by a neural network.
\end{inparaenum}
 A \emph{classification instance} is a pair $(v,c)$, where
$v\in \mathbb{F}$, $c\in K$, and $c=N(v)$. Intuitively, this 
means that  $N$ maps the input $v$ to class $c$.
 
 Formally explaining the instance $(v,c)$ entails determining \emph{why}  $v$ is
classified as $c$. An \emph{explanation} (also known as an \emph{abductive explanation}) is
defined as a subset of features, $E\subseteq F$, such that fixing
these features to their values in $v$ guarantees that the input is
classified as $c$, regardless of features in
$F\setminus E$. The features \emph{not}
part of the explanation are \emph{``free''} to take on any arbitrary
value, but cannot affect the classification. 
Formally, given 
 an input $v=(v_1,\ldots,v_m)\in \mathbb{F}$ classified by the neural
network to $N(v)=c$, we define an explanation  as a subset of
features $E\subseteq F$, such that:
\begin{equation}
  \label{eq:explanation}
  \forall x\in \mathbb{F}.\quad \bigwedge_{i\in E}(x_{i}=v_{i})\to(N(x)=c)
\end{equation}

We demonstrate formal explanations using the running example from
Fig.~\ref{fig:neural_network_example_1}.  For simplicity, assume
that each input can only take the values $0$ or $1$.
Fig.~\ref{fig:neural_network_explanation} shows that the set
$\{ v_1^1, v_1^2 \}$ is an explanation for the input vector
$V_1=[1,1,1]^T$: setting the first two features in
$V_1$  to $1$ ensures that the classification is unchanged, regardless
of the values the third feature takes.

\begin{figure}[h]
  \centering
  {\includegraphics[width=0.3\textwidth]{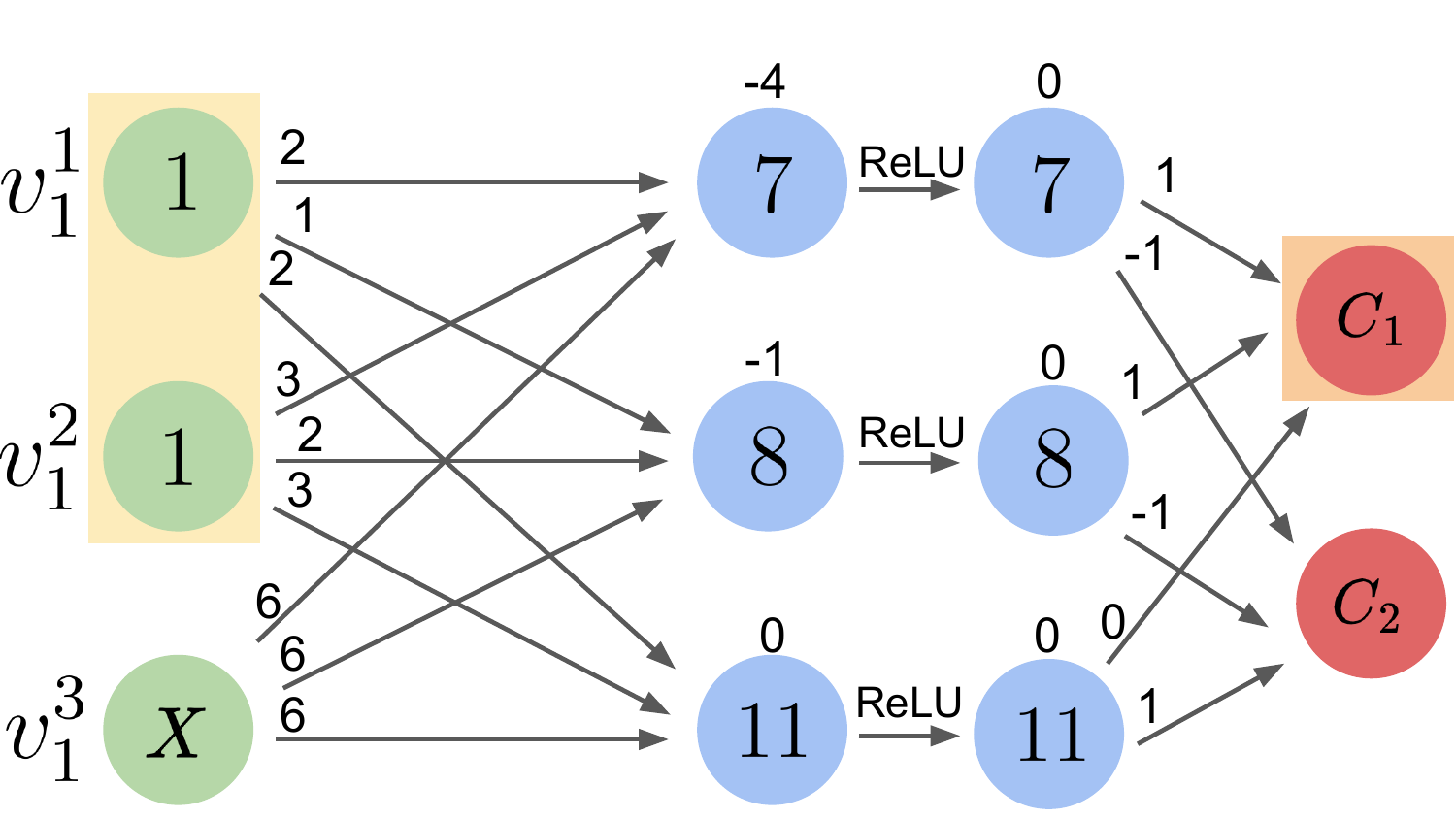}}
  \hfill
  \caption{$\{ v_1^1, v_1^2 \}$ is  an explanation for input $V_1=[1,1,1]^T$.}
  \label{fig:neural_network_explanation}
\end{figure}

A candidate explanation $E$ can be verified through a 
verification query
$\langle P,N,Q\rangle = \langle E=v,N,Q_{\neg c}\rangle$, where $E=v$ means that all of the features
in $E$ are set to their corresponding values in $v$, and  $Q_{\neg c}$
implies that the classification of this query is \emph{not} $c$.  If this query is  \unsat, then $E$ is a
valid explanation for the instance $(v,c)$.

It is straightforward to show that the  set of all features is
a trivial explanation. However, smaller
explanations typically provide more meaningful information
regarding the decision of the classifier; and we thus 
focus on finding \emph{minimal} and \emph{minimum} explanations. 
 A \emph{minimal explanation} is an explanation $E\subseteq F$
 that ceases to be an
 explanation if any of its features are removed:
 \begin{equation}
   \begin{aligned}
     &(\forall x\in \mathbb{F}.\quad \bigwedge_{i\in
       E}(x_{i}=v_{i})\to(N(x)=c))\ \wedge \\
     &(\forall j\in E.\  \exists y\in \mathbb{F}.\quad \bigwedge_{i\in E\setminus j}(y_{i}=v_{i})\wedge(N(y)\neq c))
   \end{aligned}
\end{equation}
 
A minimal explanation for our running example, $\{ v_1^1, v_1^2 \}$, is depicted in
Fig.~\ref{fig:neural_network_minimal_explanation} of the appendix.

A \emph{minimum explanation} 
is a subset
$E\subseteq F$ which is a minimal explanation of minimum size; i.e.,
 there is no other minimal
explanation $E^{\prime}\neq E$ such that $|E'|<|E|$.
Fig.~\ref{fig:neural_network_minimum_explanation} of the appendix shows that
$\{v_1^3 \}$
is a minimal explanation of minimal cardinality, and is hence 
 a minimum explanation in our example.

    \mysubsection{Contrastive Examples.}
\label{sec:contrastive-examples}
We define a contrastive example (also known as a
\emph{contrastive explanation (CXP)}) as a subset of features
$C\subseteq F$, whose alteration may
cause the classification of $v$ to change. More formally:
\begin{equation}
  \label{eq:contrastive_examples}
  \exists x\in \mathbb{F}.\quad \bigwedge_{i\in F\setminus C}(x_{i}=v_{i})\wedge(N(x)\neq c)
\end{equation}
A contrastive example for our running example appears in
Fig.~\ref{fig:neural_network_contrastive_example}.

    \begin{figure}[h]
    	\begin{center}
    		\includegraphics[width=0.30\textwidth]{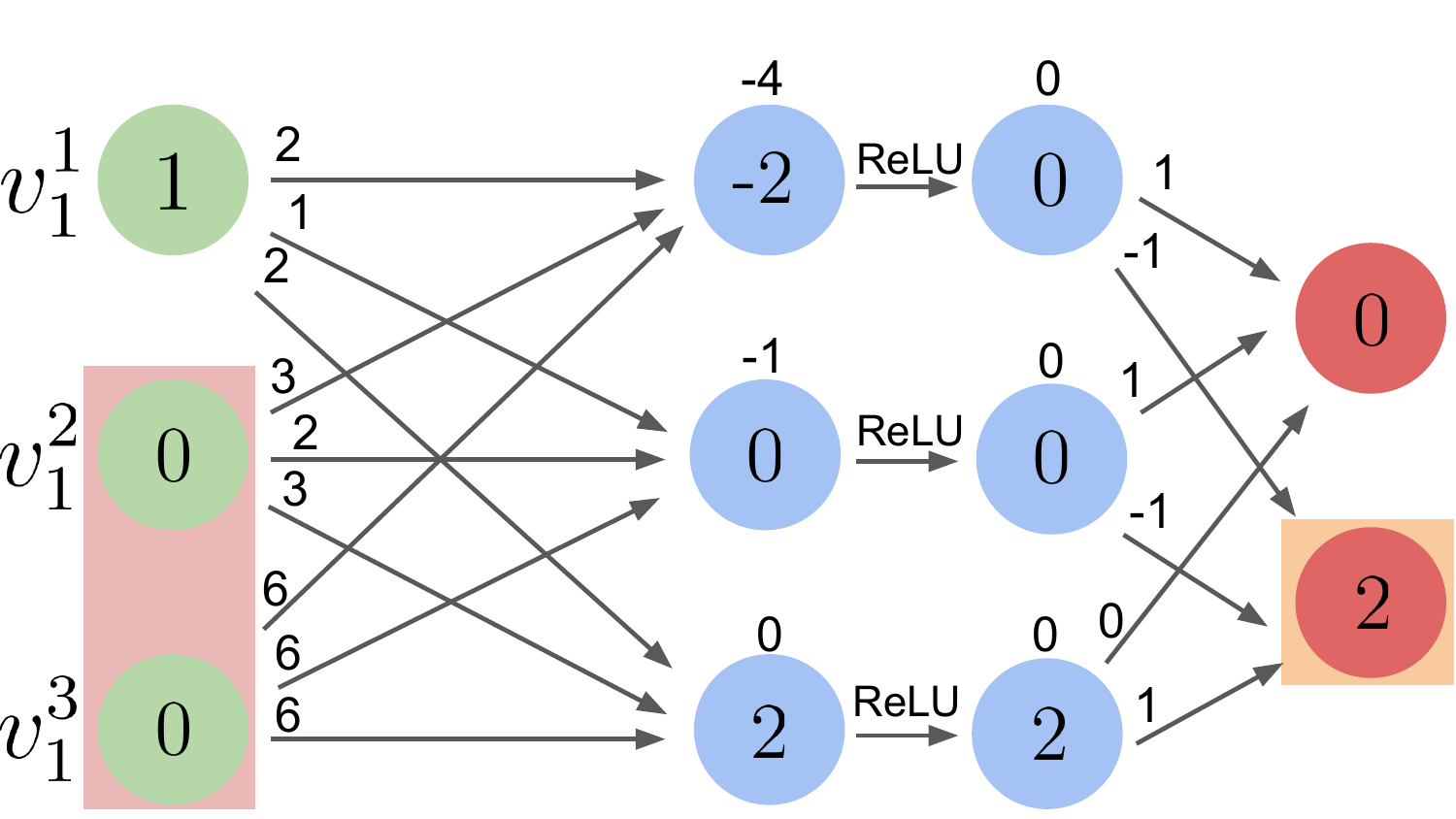}
    		\caption{$\{ v_1^2, v_1^3 \}$ is a contrastive example for $V_1=[1,1,1]^T$.}
    		\label{fig:neural_network_contrastive_example}
    	\end{center}
    \end{figure}

Checking whether $C$ is a contrastive example can be performed using
the query
$\langle P,N,Q\rangle = \langle(F\setminus C)=v,N,Q_{\neg c}\rangle$:
$C$ is contrastive iff the quest is \sat{}.
Any set
containing a contrastive example is contrastive, and so
we consider only  contrastive examples that are minimal, i.e.,
which do not contain any smaller contrastive examples.

Contrastive examples have an important property: every explanation
contains at least one element from every contrastive example~\cite{ignatiev2020contrastive, bassan2022towards}.  This can be
used for showing that a \emph{minimum hitting set} (MHS; see 
Sec.~\ref{sec:appendix:mhsDefinition} of the appendix) of all
contrastive examples is a minimum
explanation~\cite{ignatiev2016propositional, reiter1987theory}. In
addition, there exists a duality between contrastive examples and
explanations~\cite{ignatiev2020contrastive, ignatiev2015smallest}:
minimal hitting sets of all contrastive examples are minimal
explanations, and minimal hitting sets of all explanations are minimal
contrastive examples. This relation can be proved by reducing explanations and contrastive examples to minimal unsatisfiable sets and minimal correction sets, respectively, where this duality is known to hold~\cite{ignatiev2020contrastive}. Calculating an MHS is NP-hard, but can
be performed in practice using modern MaxSAT or
MILP solvers~\cite{li2021maxsat, ilog2018cplex}. The
duality is thus
useful since computing contrastive
examples and calculating their MHS is often more efficient than
directly computing minimum explanations~\cite{ignatiev2019abduction,
  ignatiev2020contrastive, bassan2022towards}.


%

  \section{K-Step Formal Explanations}
\label{formal-k-step-explanations-section}
A reactive system is a tuple $R=\langle S, A, I, T\rangle$, where $S$
is a set of states, $A$ is a set of actions, $I$ is a predicate over
the states of $S$ that indicates initial states, and
$T\subseteq S\times A\times S$ is a transition relation. In our
context, a reactive system has an associated neural network
$N:S\to A$. A $k$-step execution $\mathcal{E}$ of $R$ is a sequence
of $k$ states $(s_1,\ldots,s_k)$, such that $I(s_1)$ holds, and for
all $1\leq i\leq k-1$ it holds that $T(s_i,N(s_i),s_{i+1})$. We use
$\mathcal{E}_S=(s_1,\ldots,s_k)$ to denote the sequence of $k$ states
visited in $\mathcal{E}$, and $\mathcal{E}_A=(a_1,\ldots,a_k)$ to
denote the sequence of $k$ actions selected in these states. More
broadly, a reactive system can be considered as a deterministic, finite-state transducer Mealy automaton~\cite{shahbaz2009inferring}. 
Our goal
is to better understand $\mathcal{E}$, by finding abductive
explanations and contrastive examples that explain why $N$ selected
the actions in $\mathcal{E}_A$.

\mysubsection{K-Step Abductive Explanations.} Informally, we define an
explanation $E$ for a $k$-step execution $\mathcal{E}$ as a subset of
features of each of the visited states in $\mathcal{E}_S$, such that
fixing these features (while freeing all other features) is
sufficient for forcing the DNN to select the actions in
$\mathcal{E}_A$. More formally, $E=(E_1,\ldots,E_{k})$, such that
$\forall x_1, x_2,\ldots,x_k\in \mathbb{F}$,
\begin{equation}
\label{k-abductive-explanation-equation}
  \big
  (
  \bigwedge_{i=1}^{k-1}
    T(x_{i},N(x_{i}),x_{i+1})
    \wedge
    \bigwedge_{i=1}^{k}
    \bigwedge_{j\in E_i}(x^{j}_{i}=s^{j}_{i})
    \big)
\to
  \bigwedge_{i=1}^{k}
N(x_i)=a_{i}
  \end{equation}


\begin{figure}
  \begin{center}
    \includegraphics[width=0.3\textwidth]{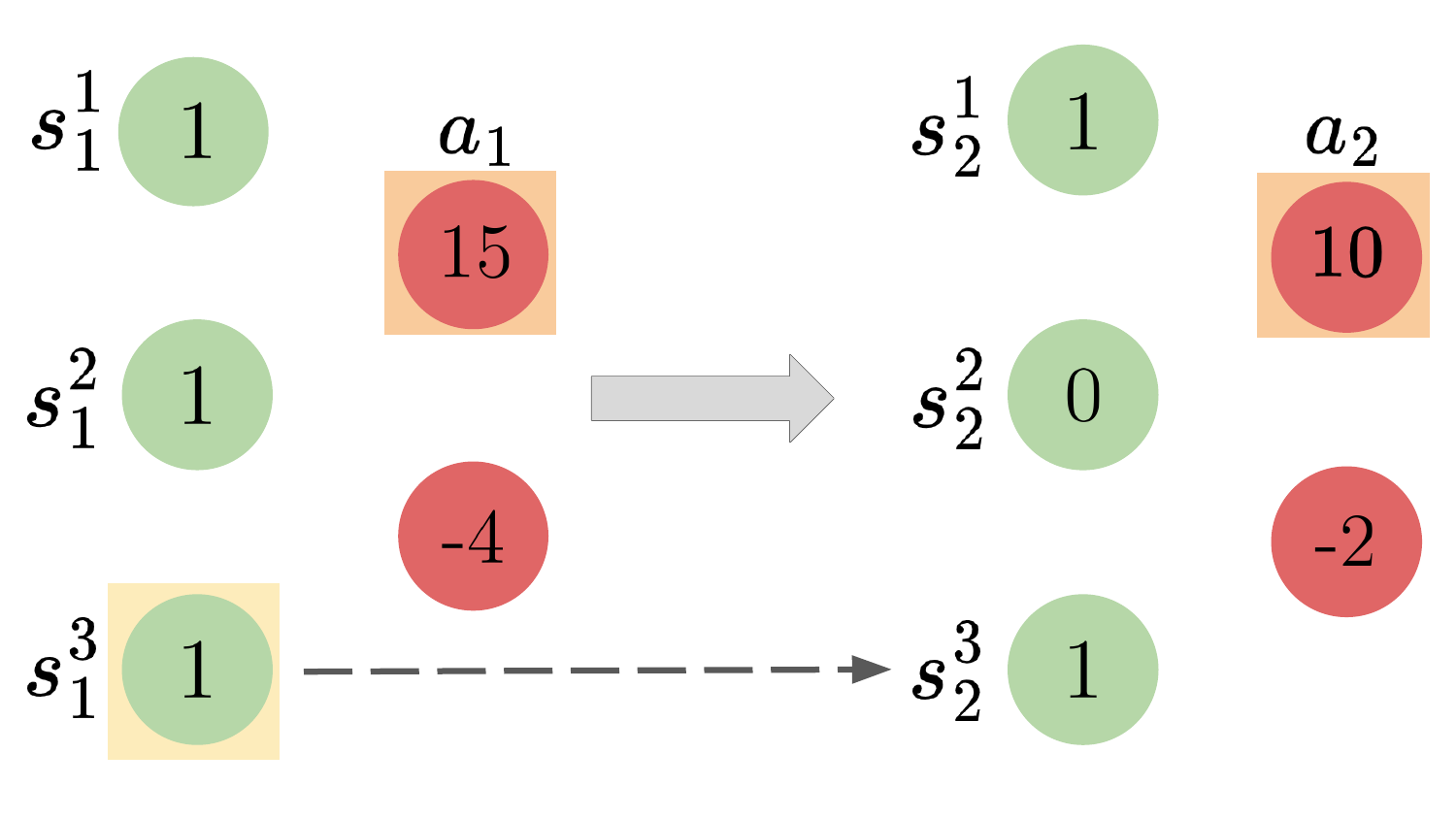}
    \caption{$(\{s^3\},\emptyset)$  is a (minimum) multi-step explanation for 
    $\mathcal{E}$.}
    \label{fig:k-explanation-example}
  \end{center}
\end{figure}
We continue with our running example.  Consider the transition
relation $T=\{(s,a,s')\ |\ s^3=s'^3\}$; i.e., we can transition from
state $s$ to state $s'$ provided that the third input neuron has the
same value in both states, regardless of the action selected in $s$.
Observe the $2$-step execution
$\mathcal{E}: s_1=(1,1,1)\stackrel{c_1}{\to}s_2=(1,0,1)\stackrel{c_1}{\to}$,
depicted in Fig.~\ref{fig:k-explanation-example} (for simplicity, we
omit the network's hidden neurons), and suppose we wish to explain
$\mathcal{E}_A=\{c_1,c_1\}$.  Because $\{s^3\}$ is an explanation for
the first step, and because fixing $s^3_1$  also fixes the value of $s_2^3$,
 it follows that fixing $s_1^3$ is sufficient to guarantee that action
 $c_1$ is selected twice --- i.e., $(\{s^3\},\emptyset)$
is a multi-step explanation for $\mathcal{E}$.

Given a candidate $k$-step explanation, we can check its validity by
encoding Eq.~\ref{k-abductive-explanation-equation} as a DNN
verification query. This is achieved by \emph{unrolling} the network
$N$ for $k$ subsequent steps; i.e., by encoding a network that is $k$
times larger than $N$, with input and output vectors that are $k$
times larger than the original. We must also encode the transition
relation $T$ as a set of constraints involving the input values, to mimic $k$ time-steps within a single feed-forward pass.  We
use $N_{[i]}$ to denote an unrolling of the neural network $N$ for $i$
steps, for $1\leq i\leq k$.

Using the unrolled network $N_{[k]}$, we encode the negation of
Eq.~\ref{k-abductive-explanation-equation} as the query
$\langle P,N,Q\rangle = \langle E=\mathcal{E}_S,N_{[k]},Q_{\neg \mathcal{E}_A}\rangle$, where
$E=\mathcal{E}_S$ means that we restrict the features in each subset $E_i\in E$ to
their corresponding values in $s_i$; and $Q_{\neg \mathcal{E}_A}$
indicates that in some step $i$, an action that is not $a_i$ was
selected by the DNN.
An \unsat{} result for this query indicates that $E$ is an explanation
for $\mathcal{E}$, because fixing $E$'s features to their values
forces the given sequence of actions to occur.

We can naturally define a \emph{minimal} $k$-step explanation as a 
$k$-step explanation that ceases to be a $k$-step explanation when we remove
any of its features. A \emph{minimum} $k$-step explanation is a minimal 
$k$-step explanation of the lowest possible cardinality; i.e., there does not 
exist a $k$-step explanation $E'=(E'_1, 
E'_2,\ldots,E'_k)$ such that $\sum_{i=1}^k|E'_i| < 
\sum_{i=1}^k|E_i|$.

\mysubsection{K-Step Contrastive Examples.} A contrastive example $C$ for
an execution $\mathcal{E}$ is a subset of features whose alteration can
cause the selection of an action not in
$\mathcal{E}_A$. A $k$-step contrastive example is
depicted in Fig.~\ref{fig:k-contrastive-example}: altering the
features $s_1^3$ and $s_2^3$ may cause
action $c_2$ to be chosen instead of $c_1$ in the second step.
Formally, $C$ is an ordered set of (possibly
empty) subsets $C=(C_1,C_2,\ldots,C_k)$, such that $C_i\subseteq F$, and
for which $\exists x_1,x_2,\ldots,x_k \in \mathbb{F}$ such that 
\begin{align}
  \begin{split}
  \label{eq:contrastive_examples}
  &\big(\bigwedge_{i=1}^{k-1} T(x_{i},N(x_{i}),x_{i+1}) \big)
  \wedge\\
  &\big(\bigwedge_{i=1}^{k}\bigwedge_{j\in F\setminus
    C_i}(x^{j}_{i}=s^{j}_{i}) \big)
  \wedge
  \big(\bigvee_{i=1}^k N(x_i)\neq a_i \big)
\end{split}
\end{align}
Similarly
to multi-step explanations, 
$C$ is a multi-step contrastive example iff the verification
query:
$\langle P,N,Q\rangle = \langle(F\setminus C_1, F\setminus
C_2,\ldots,F\setminus C_k)=\mathcal{E}_S,N_{[k]},Q_{\neg
  \mathcal{E}_A}\rangle$
is \sat{}.

  \begin{figure}
  \begin{center}
    \includegraphics[width=0.3\textwidth]{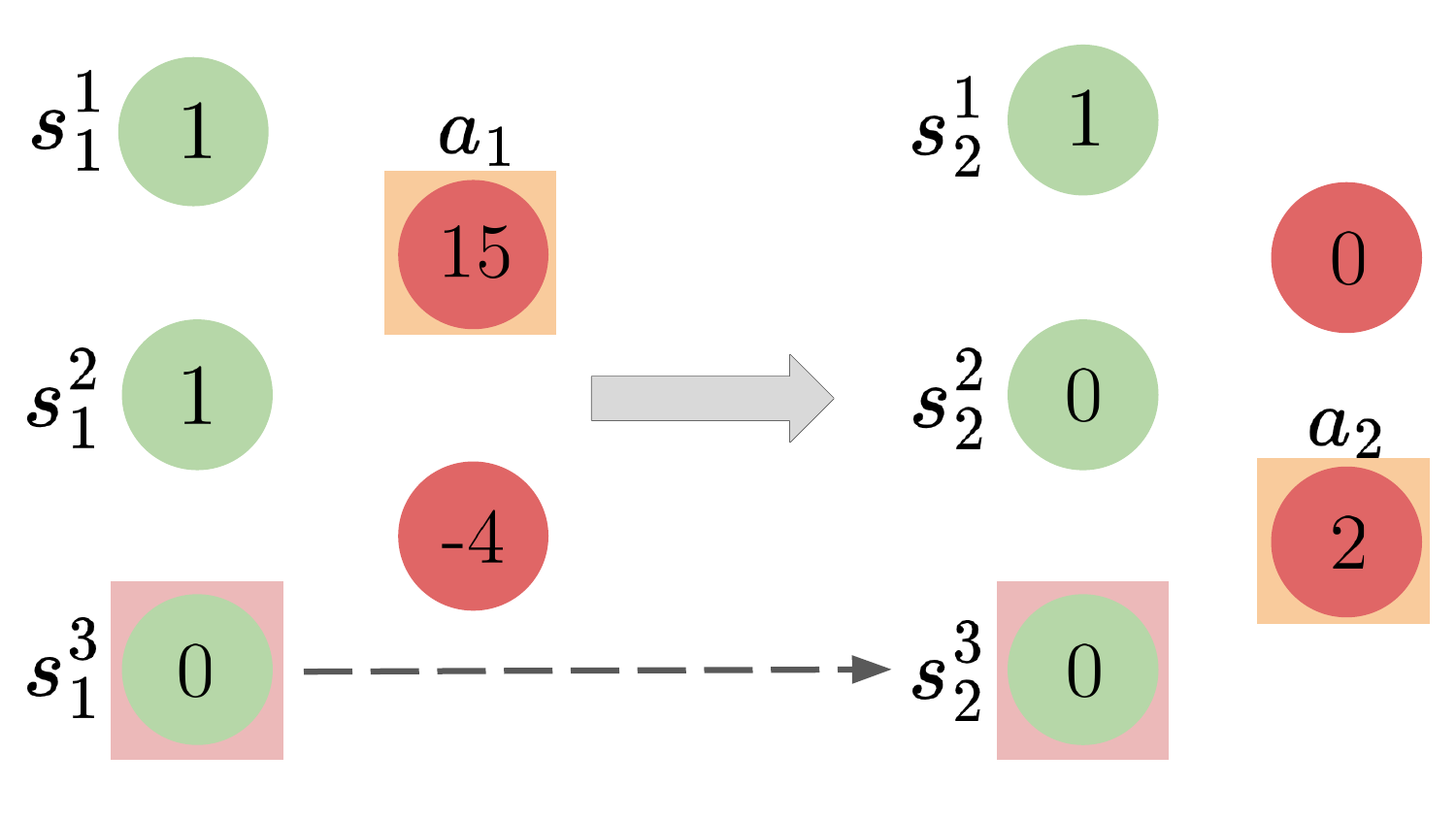}
    \caption{$(\{s^3\}, \{s^3\})$  is a multi-step contrastive example for 
$\mathcal{E}$.}
    \label{fig:k-contrastive-example}
  \end{center}
\end{figure}

\section{Computing Formal K-Step Explanations}
\label{sec:formal_k_explanations_computation}

We now propose four different methods for computing formal $k$-step
explanations, focusing on \emph{minimal} and \emph{minimum}
explanations.  All four methods use an underlying DNN verifier to
check candidate explanations, but differ in how they
enumerate different explanation candidates until ultimately
converging to an answer. We begin with the more straightforward
methods. 

\mysubsection{Method 1: A Single, K-Sized Step.}
\label{method-1-section}
The first method is to encode the negation of
Eq.~\ref{k-abductive-explanation-equation} by unrolling all $k$ steps
of the network, as described in
Sec.~\ref{formal-k-step-explanations-section}.  This transforms the
problem into explaining a non-reactive, single-step system (e.g., a
``one-shot'' classifier). We can then use any existing abductive explanation algorithm for
explaining the unrolled DNN (e.g.,~\cite{ignatiev2019abduction,
  bassan2022towards, ignatiev2020contrastive}).

This method is likely to produce small explanation sets but is
extremely inefficient. Encoding $N_{[k]}$ results in an input space
roughly $k$ times the size of any single-step encoding. Such an
unrolling for our running example is depicted in
Fig.~\ref{fig:method1Scheme}.  Due to the NP-completeness of DNN
verification, this may cause an exponential growth in the verification
time of each query.  Since finding minimal explanations
requires a linear number of queries (and for minimum
explanations --- a worst-case exponential number), this may cause a substantial
increase in runtime.
    \begin{figure}[ht]
	\begin{center}
		\includegraphics[width=0.3\textwidth]{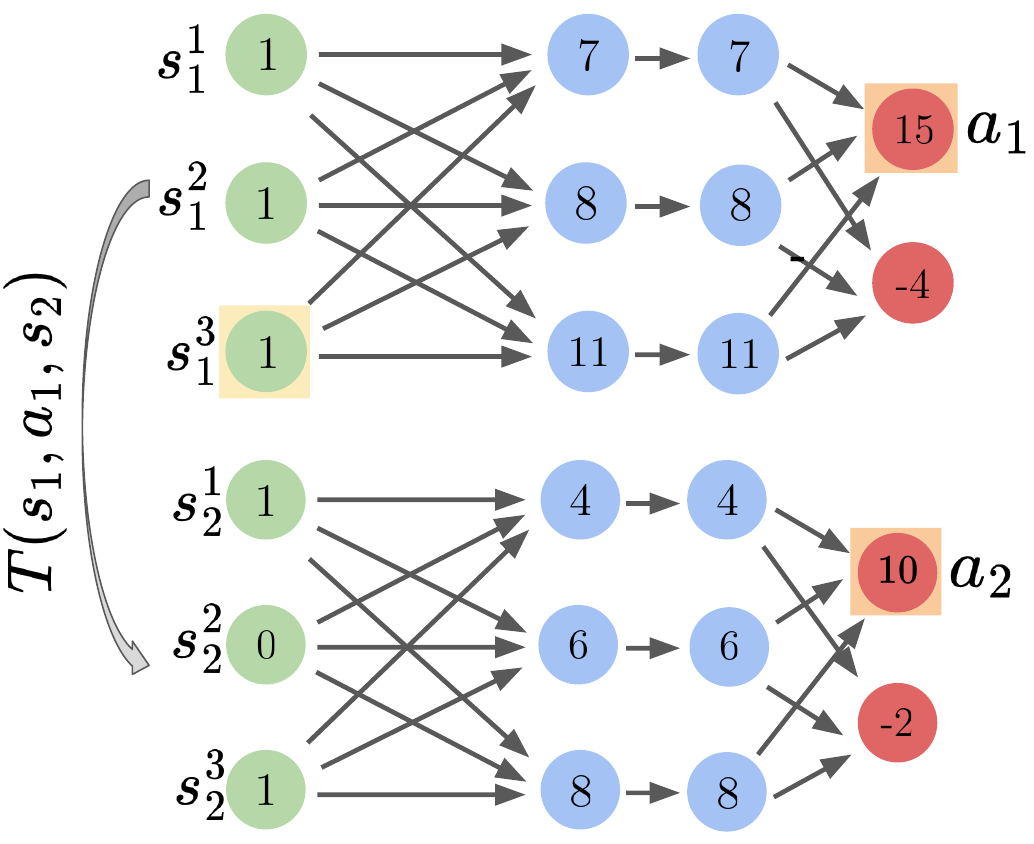}
	\end{center}
	\caption{Finding explanations using a 2-step unrolling.}
	\label{fig:method1Scheme}
      \end{figure}
      
\mysubsection{Method 2: Combining Independent, Single-Step Explanations.}
\label{section-method-2}
Here, we dismantle any $k$-step execution into $k$
individual steps. Then, we \emph{independently}
compute an explanation for each step, using any existing algorithm, and
without taking the transition relation into account. Finally, we
concatenate these explanations to form a multi-step explanation.  Fixing
the features of the explanation in each step ensures that the ensuing
action remains the same, guaranteeing the soundness of the
combined explanation.



\begin{figure}
	\centering
	\begin{center}
		\includegraphics[width=0.3\textwidth]{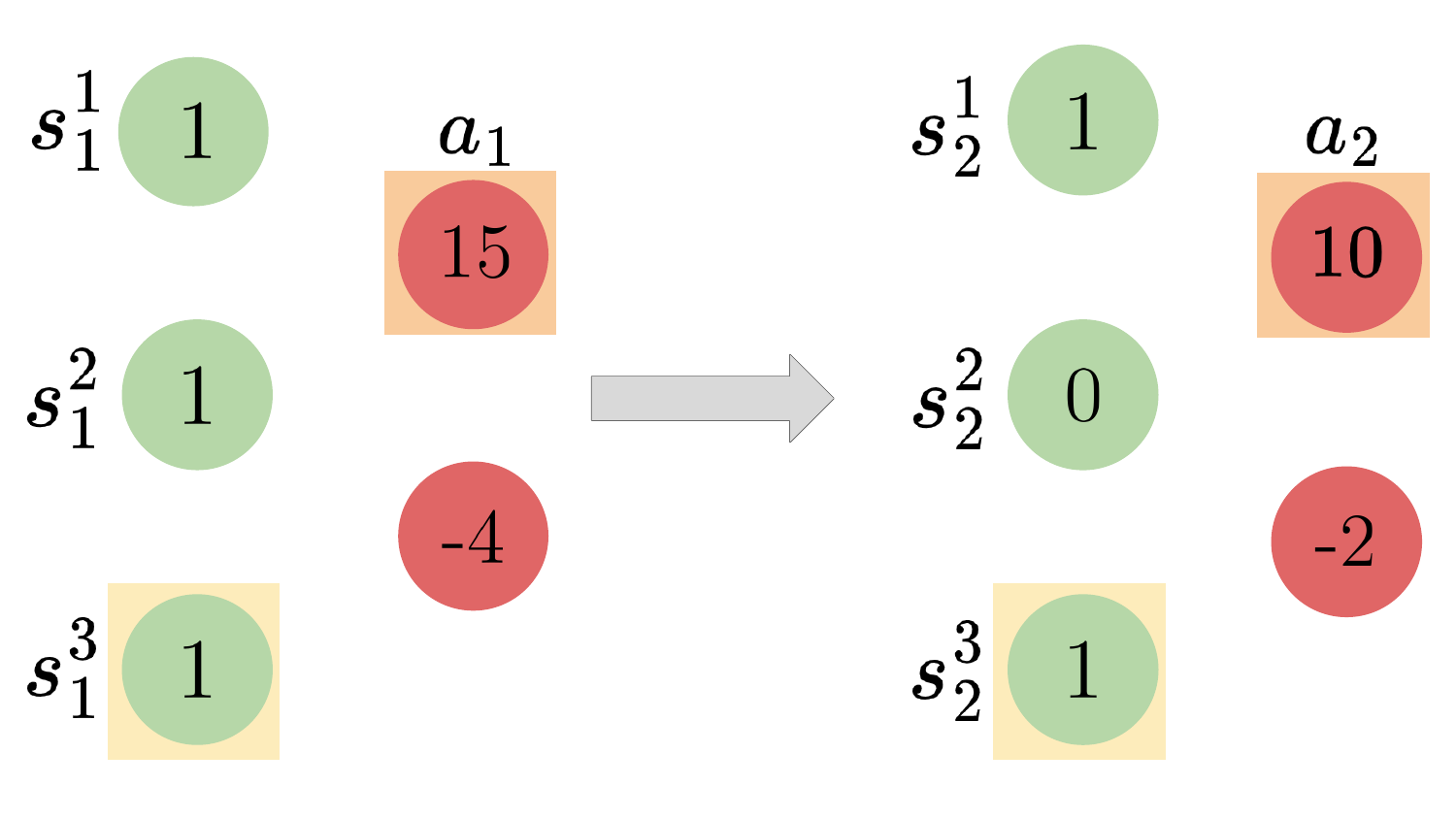}
	\end{center}
	\caption{Explaining each step individually.}
	\label{fig:method2Scheme}
\end{figure}
The downside of this method is that the resulting $E$ need not be
minimal or minimum, even if its constituent $E_i$ explanations are
minimal or minimum themselves; see
 Fig.~\ref{fig:method2Scheme}. In this instance, finding a
minimum explanation for each step results in the 2-step explanation
$(\{s^3\},\{s^3\})$, which is \emph{not minimal} --- even though its
components are minimum explanations for their respective steps. The
reason for this phenomenon is that this method ignores the transition
constraints and information flow across time-steps. This can
result in larger and less meaningful explanations, as we
later show in Sec.~\ref{sec:evaluation}.

\mysubsection{Method 3: Incremental Explanation Enumeration.} We
now suggest a scheme that takes into consideration the transition
constraints between steps (unlike Method 2), but which encodes the
verification queries for validating explanations in a more efficient
manner than Method 1.
The scheme relies on the following lemma:

\begin{lemma}
\label{method3_second_lemma}
Let $E=(E_1, E_2, \ldots, E_k)$ be a $k$-step explanation for
execution $\mathcal{E}$, and let $1\leq i \leq k$ such that
$\forall j>i$ it holds that $E_j=F$. Let $E'$ be the set obtained by
removing a set of features $F'\subseteq E_i$ from $E_i$, i.e.,
$E'=(E_1, \ldots, E_{i-1},E_i\setminus F', E_{i+1}, \ldots, E_k)$.
In this case,  fixing the  features in $E'$ prevents any changes in the first 
$i-1$ actions $(a_1,\ldots,a_{i-1})$; and if any of the last $k-i+1$ actions 
$(a_i,\ldots,a_k)$ change, then $a_i$ must also change.
\end{lemma}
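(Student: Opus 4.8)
The plan is to establish the two assertions of the lemma separately, starting with the (simpler) statement about the last $k-i+1$ actions. Fix any assignment $x_1,\dots,x_k\in\mathbb{F}$ that satisfies the transition constraints together with the fixing imposed by $E'$. Since $E'_j=E_j=F$ for every $j>i$, this fixing pins $x_j=s_j$ for all $j>i$, and hence $N(x_j)=N(s_j)=a_j$ for each such $j$. Consequently, among the actions $a_i,\dots,a_k$ only $a_i$ is not already pinned to its value in $\mathcal{E}_A$, so if any one of $a_i,\dots,a_k$ changes, the change must occur at index $i$; this is precisely the second assertion. (When $i=k$ this is vacuous, and when $i=1$ the first assertion below is vacuous as well.)

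For the first $i-1$ actions, the idea is to promote the given witness of $E'$ to a witness of the original explanation $E$. Given $x_1,\dots,x_k$ as above, define $\hat{x}_\ell:=x_\ell$ for $\ell\le i-1$ and $\hat{x}_\ell:=s_\ell$ for $\ell\ge i$. I would check that $\hat{x}$ satisfies the antecedent of \cref{k-abductive-explanation-equation} for $E$: the prefix of $\hat{x}$ inherits the $E$-fixing from $x$ (on indices below $i$ the components of $E$ and $E'$ agree), the suffix satisfies the $E$-fixing trivially (those coordinates equal $s_\ell$), the transitions on indices $1,\dots,i-2$ are inherited from $x$, and the transitions on indices $i,\dots,k-1$ are exactly the transitions of the execution $\mathcal{E}$. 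Since $E$ is a $k$-step explanation for $\mathcal{E}$, \cref{k-abductive-explanation-equation} applied to $\hat{x}$ gives $N(\hat{x}_\ell)=a_\ell$ for all $\ell$, and in particular $N(x_\ell)=N(\hat{x}_\ell)=a_\ell$ for every $\ell<i$, which is the first assertion.

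The one part of this argument that is not routine bookkeeping --- and the step I expect to be the main obstacle --- is discharging the single transition that links the modified prefix to the modified suffix, namely $T(x_{i-1},N(x_{i-1}),s_i)$, since $\hat{x}_i=s_i$ may differ from $x_i$ on the features in $F'$. To handle it I would use that $x$ already satisfies $T(x_{i-1},N(x_{i-1}),x_i)$, that the execution satisfies $T(s_{i-1},a_{i-1},s_i)$, and that steps $i+1,\dots,k$ are pinned to $\mathcal{E}_S$, so that the part of the transition relation that governs reaching $s_i$ is already pinned down by the coordinates fixed in $E_{i-1}$ together with the pinned suffix; informally, since $E$ is valid and steps beyond $i$ are pinned, relaxing $E_i$ alone cannot manufacture a prefix $x_1,\dots,x_{i-1}$ that both respects $E_1,\dots,E_{i-1}$ and fails to force $a_1,\dots,a_{i-1}$. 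Once this transition obligation is met, both assertions follow, and combined they show that a candidate obtained from $E$ by shrinking only $E_i$ can be validated by a single query that merely checks whether $a_i$ changes.
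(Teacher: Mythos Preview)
Your argument for the second assertion is correct and cleaner than the paper's: since $E'_j=F$ for every $j>i$, any admissible $x$ has $x_j=s_j$ and hence $N(x_j)=a_j$ for all $j>i$, so a deviation among $a_i,\dots,a_k$ can only occur at index $i$. The paper reaches the same conclusion via a longer contradiction argument.

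For the first assertion you have correctly located the obstruction, and your informal workaround does not close it --- in fact it cannot, because the stated hypotheses are insufficient. Take $k=2$, two binary features, $T(s,a,s')$ iff $s'^{1}=s^{1}$, $N(s)=c_1$ iff $s^{1}=1$, and the execution $s_1=(1,0)$, $s_2=(1,1)$ with $a_1=a_2=c_1$. Then $E=(\emptyset,\{1\})$ is a valid $2$-step explanation: fixing $x_2^{1}=1$ propagates back through $T$ to force $x_1^{1}=1$, so both actions are $c_1$. But with $i=2$ and $F'=\{1\}$ one obtains $E'=(\emptyset,\emptyset)$, and the assignment $x_1=x_2=(0,0)$ satisfies all constraints of $E'$ while changing $a_1$. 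So relaxing $E_i$ alone \emph{can} manufacture a bad prefix, contrary to your informal claim. The paper's own proof has the same gap in a different guise: it invokes a preliminary lemma whose proof simply asserts that $(E_1,\dots,E_{i-1})$ is an $(i-1)$-step explanation ``contradicting the assumption,'' without justifying why this should follow from $E$ being a $k$-step explanation. What actually makes the incremental algorithm correct is that, by construction, $(E_1,\dots,E_{i-1})$ has already been verified to be an $(i-1)$-step explanation before phase $i$ begins; adding this as an explicit hypothesis renders the first assertion immediate, and together with your argument for the second assertion yields a clean proof.
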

A proof appears in Sec.~\ref{sec:appendix:additionalProofs} of the appendix.
The lemma states that ``breaking'' an explanation $E$ of $\mathcal{E}$
at some step $i$ (by removing features from the $i$'th step), given
that the features in steps $i+1,\ldots,k$ are fixed, causes $a_i$ to change before any other action. In this scenario, we can determine whether
$E$ explains $\mathcal{E}$ using a simplified verification query: we
can check whether $(E_1,\ldots,E_i)$ explains the first $i$ steps of
$\mathcal{E}$, regardless of steps $i+1,...,k$. If so, then $a_i$
cannot change; and from Lemma~\ref{method3_second_lemma}, no action in
$\mathcal{E}_A$ can change, and $(E_1,\ldots,E_k)$ is an explanation
for $\mathcal{E}$. Otherwise, $E$ allows an action in $\mathcal{E}_A$
to change, and it does not explain $\mathcal{E}$.  We can leverage
this property to efficiently enumerate candidates as part of a
search for a minimal/minimum explanation for $\mathcal{E}$, as
explained next.


\mysubsection{Finding Minimal Explanations with Method 3.}
A common approach for finding minimal explanations for a ``one-shot''
classification instance is via a greedy algorithm, which dispatches a
linear number of queries to the underlying
verifier~\cite{ignatiev2019abduction}. Such an algorithm can start
with the explanation set to be the entire feature space, and then
iteratively attempt to remove features. If removing a
feature allows misclassification, the algorithm keeps it as part of
the explanation; otherwise, it removes the feature and continues. A pseudo-code
for this approach appears in Alg.~\ref{alg:greedy-minimal}.

\begin{algorithm}
	\algnewcommand\algorithmicforeach{\textbf{for each}}
	\algdef{S}[FOR]{ForEach}[1]{\algorithmicforeach\ #1\ \algorithmicdo}
	\caption{\texttt{Greedy-Minimal-Explanation}\xspace}\label{alg:greedy-minimal}
	\textbf{Input} $N$ (DNN), $F$ ($N$'s features), $v$ (values), $c$ 
	(predicted class)
	\begin{algorithmic}[1]
		\State \explanation$\gets\ F$
		\ForEach {$f \in F$}
		\If{\verifyexplanation is \unsat{}}
		\State{\explanation $\gets$ \explanation$\setminus\{f\}$}
		\EndIf
		\EndFor
		\State \Return Explanation
	\end{algorithmic}
\end{algorithm}

We suggest performing a similar
process for explaining $\mathcal{E}$.  We start by fixing
all features in all states of $\mathcal{E}$ to their values; i.e., we
start with $E=(E_1,\ldots,E_k)$ where $E_i=F$ for all $i$, and then
perform the following steps:

First, we iteratively remove individual features from $E_1$, each time
  checking whether the modified $E$ remains an explanation for
  $\mathcal{E}$. Since all features in steps $2,\ldots,k$ are fixed,
  it follows from Lemma~\ref{method3_second_lemma} that checking
  whether the modified $E$ explains $\mathcal{E}$ is equivalent to
  checking whether the modified $E_1$ explains the selection of
  $a_1$. Thus, we perform a process that is identical to the one in
  the greedy Alg.~\ref{alg:greedy-minimal} for finding a minimal
  explanation for a ``one-shot'' classification DNN. At the end of
  this phase, we are left with $E=(E_1,\ldots,E_k)$ where $E_i=F$ for
  all $i>1$ and $E_1$ was reduced by removing features from it. We
  keep all current features in $E$ fixed for the following steps.

Second, we begin to
  iteratively remove features from $E_2$, each time checking whether
  the modified $E$ still explains $\mathcal{E}$. Since the features in
  steps $3,\ldots,k$ are entirely fixed, it suffices (from Lemma~\ref{method3_second_lemma}) to check whether the
  modified $(E_1,E_2)$ explains the selection of the first two actions 
  $(a_1,a_2)$ of $\mathcal{E}_A$. This is performed by
  checking whether
    \begin{equation}
    \label{eq:k_explanation-small-example}
    \begin{aligned}
    &(\forall x_1, x_2\in \mathbb{F}.\quad T(x_{1},a_1, x_{2}) \wedge
    \bigwedge_{j\in E_1}(x^{j}_{1}=s^{j}_{1})\wedge \\
    & \bigwedge_{j\in E_2}(x^{j}_{2}=s^{j}_{2}))\to N(x_2)=a_2
      \end{aligned}
    \end{equation}
    We do not need to require that $N(x_1)=a_1$ (as in Method 1) ---
    this is guaranteed by Lemma~\ref{method3_second_lemma}. This is
    significant, because it exempts us from encoding the neural
    network twice as part of the verification query. We denote the
    negation of Eq.~\ref{eq:k_explanation-small-example} for
    validating $(E_1,E_2)$ as:
    $\langle P,N,Q\rangle = \langle
    (E_1,E_2)=\mathcal{E}_{S_{[2]}},N,Q_{\neg a_2}\rangle$.
    
  Third, we continue this iterative process for all $k$ steps of
    $\mathcal{E}$, and find the minimal explanation for each step
    separately. In step $i$, for each query we encode $i$
    transitions and check whether the modified $E$ still explains
    the first $i$ steps of $\mathcal{E}$ (by encoding $\langle
    (E_1,\ldots,E_i)=\mathcal{E}_{S_{[i]}},N,Q_{\neg a_i}\rangle$),
    which \emph{does not} require encoding the DNN
    $i$ times. The
    correctness of each step follows directly from
    Lemma~\ref{method3_second_lemma}.
    
  The pseudo-code for this process appears in
  Alg.~\ref{alg:minimal_multi_step_explanation_alg}. 
  The minimality
  of the resulting explanation holds because removing any
  feature from this explanation would allow the action in that step to
  change (since minimality is maintained in each step of the
  algorithm).  An example of the first two iterations of this process
  on our running example appears in Fig.~\ref{fig:method3Scheme}:
  in the first iteration, we attempt to remove features
  from the first step, until converging to an explanation $E_1$. In
  the second iteration, while the features in $E_1$ remain fixed to
  their values, we encode the constraints of the transition relation
  $T(s_1,a_1,s_2)$ between the first two steps, and dispatch queries
  to verify candidate explanations for the second step --- until
  converging to a minimal explanation $(E_1,E_2)$. In this case, 
  $E_2=\emptyset$, and $(\{s^3\},\emptyset)$ is a valid explanation for the 
  2-step execution, since fixing the value of 
  $s_1^3$ determines the value
  of $s_2^3$ --- which forces the selection of $a_2$ in the second
  step.

\begin{figure}
\centering
\subfloat[First iteration]{\includegraphics[width=6cm]{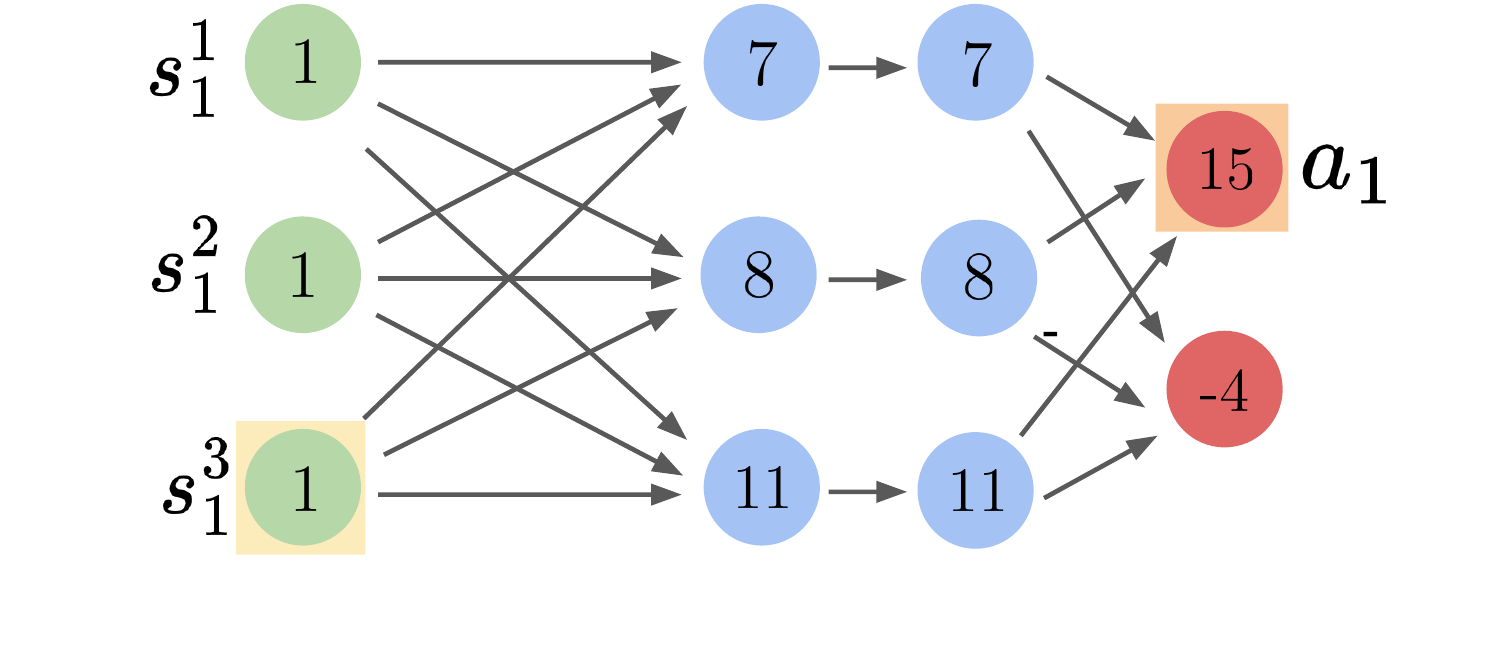}\label{fig:Ex_Im}}\quad
\subfloat[Second iteration]{\includegraphics[width=6cm]{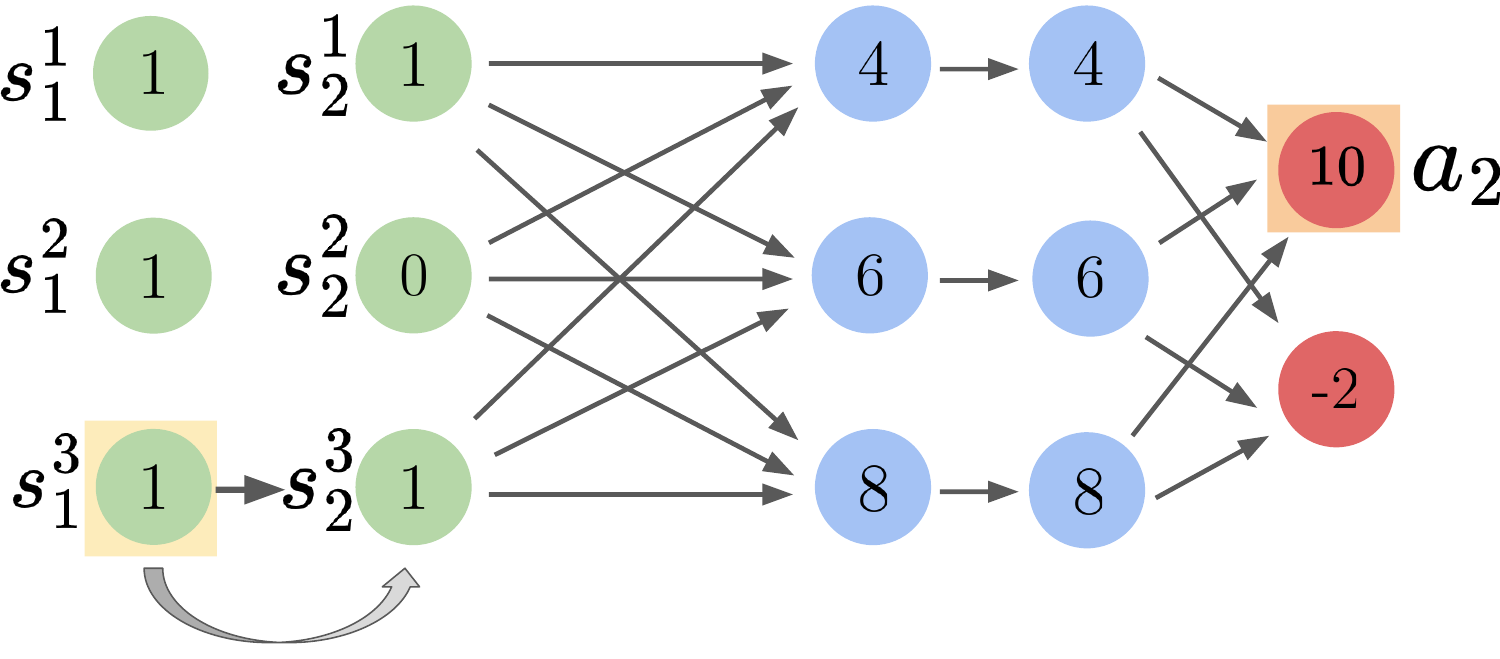}\label{fig:Ex_Im2}}
\caption{Running Method 3 for finding minimal explanations, for two
  iterations.}
	\label{fig:method3Scheme}
\end{figure}


We emphasize that incrementally enumerating candidate explanations for
a $k$-step execution in this way is preferable to simply finding a
minimal explanation by encoding verification queries that encompass
all $k$-steps, \`a la Method 1:
\begin{inparaenum}[(i)]
\item in each iteration, we dispatch a verification query involving
  only a single 
invocation of the DNN, thus circumventing the linear growth in the network's
size --- which causes an exponential worst-case
increase in verification times; and
\item in each iteration, we do not need to encode the entire set of
  $k$ disjuncts (from the negation of
  Eq.~\ref{k-abductive-explanation-equation}), since we only need to
  validate $a_i$ on the $i$'th iteration, and not all actions of
  $\mathcal{E}_A$.
\end{inparaenum}


\begin{algorithm}
		\algnewcommand\algorithmicforeach{\textbf{for each}}
		\algdef{S}[FOR]{ForEach}[1]{\algorithmicforeach\ #1\ \algorithmicdo}
    	\textbf{Input} $N$ (DNN), $F$ ($N$'s features), $\mathcal{E}$ 
    	(execution of length $k$ to explain)
		\caption{\texttt{Incremental-Minimal-Explanation\\-Enumeration}}
		\label{alg:minimal_multi_step_explanation_alg}
		\begin{algorithmic}[1]
			\State{\explanation $\gets (E_1,\ldots,E_k)$ where $E_i=F$ for all $1\leq i \leq k$}
   		\ForEach {$i \in \{1,...,k\}$ \text{and} $f \in E_i$}\label{lst:line:startregularupper}
			\If{\verifymultistepexplanation is \unsat{}}\label{lst:line:simplifiedlineminimal}
			\State{$E_i \gets E_i\setminus f$}
			\EndIf
			\EndFor\label{lst:line:endregularupper}
            \State \Return Explanation
		\end{algorithmic}
\end{algorithm}



\mysubsection{Finding Minimum Explanations with Method 3.}
We can also use our proposed enumeration to efficiently find 
\emph{minimum} explanations, using a recursive approach. In each
step $i=1,\ldots,k$, we iterate over all the possible explanations, each
time considering a candidate explanation and recursively invoking the
procedure for step $i+1$. In this way, we iterate over all the 
possible multi-step explanation candidates and can return the
smallest one that we find. This process is described in
Alg.~\ref{alg:minimum_multi_step_explanation_alg}.



Finding a minimum explanation in this manner is superior to using
Method 1, for the same reasons noted before. In addition, the
exponential blowup here is in the number of explanations in each step,
and not in the entire number of features in each step --- which is
substantially smaller in many cases. Nevertheless, as the method
advances through steps, it is expected to be significantly harder to
iterate over all the candidate explanations. We discuss more efficient
ways for finding global minimum explanations in Method 4.

\begin{algorithm}[ht]
	\algnewcommand\algorithmicforeach{\textbf{for each}}
	\algdef{S}[FOR]{ForEach}[1]{\algorithmicforeach\ #1\ \algorithmicdo}
	\caption{\texttt{Incremental-Minimum-Explanation-\\Enumeration}}
	\label{alg:minimum_multi_step_explanation_alg}
	\textbf{Input} $N$ (DNN), $F$ ($N$'s features), $\mathcal{E}$ (execution to 
	explain) \Comment{\textcolor{blue}{Global Variables}}
	\begin{algorithmic}[1]
		\State{AllExplanations $\gets$ 
			\Call{\allExplanationRecursiveSearchWithLineBreak}{$\emptyset$, 1}}
		\State \Return $E\in$ AllExplanations such that $E$ is with minimum 
		cardinality
		
	\end{algorithmic}
\end{algorithm}

\begin{algorithm}[ht]
	\algnewcommand\algorithmicforeach{\textbf{for each}}
	\algdef{S}[FOR]{ForEach}[1]{\algorithmicforeach\ #1\ \algorithmicdo}
	\caption{\texttt{All-Explanation-Recursive-Search}}
	\label{alg:upper-thread}
	\textbf{Input} E (explanation), i (step number)
	\begin{algorithmic}[1]
		
		\If{i = k}
		\State \Return E
		\EndIf
		\State{AllExplanations $\gets \emptyset$}
		\ForEach {subset $F'$ of $F$}
		\If{\verifymultistepminimumexplanation is
			\unsat{}}\label{lst:line:simplifiedlineminimum}
		\State{Explanations $\gets$ 
			\allExplanationRecursiveSearchWithLineBreak(E $\cdot\ 
			(F')$, i+1)}
		\State{AllExplanations $\gets$ AllExplanations $\cup$ Explanations}
		\EndIf
		\EndFor
		\State \Return AllExplanations
		
	\end{algorithmic}
\end{algorithm}

\mysubsection{Method 4: Multi-Step Contrastive Example Enumeration.}
As mentioned earlier, a common approach for finding minimum
explanations is to find all contrastive examples, and then calculate
their minimum hitting set (MHS). Because DNNs tend to be sensitive to
small input perturbations~\cite{su2019one}, small contrastive examples
are often easy to find, and this can expedite the process
significantly~\cite{bassan2022towards}. When performing this procedure
on a multi-step execution $\mathcal{E}$, we show that it is possible
to enumerate contrastive example candidates in a more efficient manner
than simply using the encoding from Method 1.

\begin{lemma}
\label{method4_second_lemma}
Let $\mathcal{E}$ be a $k$-step execution, and let
$C=(C_1,\ldots,C_k) $ be a minimal contrastive example for
$\mathcal{E}$; i.e., altering the features in $C$ can cause at
least one action in $\mathcal{E}_A$ to change. Let
$1\leq i \leq k$ denote the index of the first action
$a_i$ that can be changed by features in $C$. It holds that: $C_i \neq \emptyset$; $C_j=\emptyset$ for
all $j>i$; and if there exists some $l<i$ such that
$C_l\neq \emptyset$, then all sets $\{C_{l}, C_{l+1},\ldots,C_i\}$
are not empty.
\end{lemma}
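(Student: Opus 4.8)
The plan is to prove the three assertions separately, using throughout three simple facts: (a) since $\mathcal{E}$ is an execution, $N(s_\ell)=a_\ell$ and $T(s_\ell,a_\ell,s_{\ell+1})$ hold for all $\ell$; (b) if $C_\ell=\emptyset$ then every witness $(x_1,\dots,x_k)$ of $C$'s contrastiveness has all features of step $\ell$ pinned, hence $x_\ell=s_\ell$ and $N(x_\ell)=a_\ell$; and (c) by the choice of $i$, every witness has $N(x_\ell)=a_\ell$ for all $\ell<i$, while some witness $x^{\ast}$ has $N(x^{\ast}_i)\neq a_i$. I also use that being a contrastive example is upward closed under componentwise inclusion of the $C_\ell$'s (enlarging a $C_\ell$ only weakens the pinning constraints), so to contradict minimality it suffices to exhibit a componentwise strictly smaller contrastive example. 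Given this, $C_i\neq\emptyset$ is immediate: if $C_i=\emptyset$, fact (b) forces $N(x_i)=a_i$ in every witness, contradicting (c).

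For the two structural claims I would use one uniform ``surgery'' on witnesses: whenever a whole step $m\le i$ is pinned ($C_m=\emptyset$), splice the nominal prefix $(s_1,\dots,s_m)$ onto the tail $(x_{m+1},\dots,x_k)$ of any witness $x$. The result is again a witness --- the transitions inside $(s_1,\dots,s_m)$ hold by (a); the junction transition $T(s_m,a_m,x_{m+1})$ is exactly $x$'s transition at step $m$ (since $x_m=s_m$ and $N(x_m)=a_m$); later transitions and all pinnings are inherited; and $x$'s changed action survives because, by (b) and (c), it must occur at a step $>m$. This exhibits $(\emptyset,\dots,\emptyset,C_{m+1},\dots,C_k)$ as a contrastive example, strictly smaller than $C$ as soon as some $C_l\neq\emptyset$ with $l\le m$; taking the contrapositive, if some $C_l\neq\emptyset$ for $l<i$ then no $C_m$ with $l<m<i$ can be empty, which together with $C_i\neq\emptyset$ is the contiguity claim. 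For $C_j=\emptyset$ $(j>i)$ I would apply the mirror-image surgery --- splice the nominal \emph{suffix} $(s_{i+1},\dots,s_k)$ onto the action-changing prefix $(x^{\ast}_1,\dots,x^{\ast}_i)$ --- which would show $(C_1,\dots,C_i,\emptyset,\dots,\emptyset)$ contrastive, strictly smaller than $C$ if some $C_j\neq\emptyset$ with $j>i$.

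The main obstacle is exactly this last splice. Unlike the prefix case, the junction transition is now $T(x^{\ast}_i,N(x^{\ast}_i),s_{i+1})$, and because $N(x^{\ast}_i)\neq a_i$ there is no a~priori reason the perturbed state $x^{\ast}_i$ can rejoin the nominal trajectory. I would deal with this by taking $j$ maximal with $C_j\neq\emptyset$, so that by (b) the witness $x^{\ast}$ is already on the nominal trajectory from step $j+1$ onward and only the segment $x^{\ast}_{i+1},\dots,x^{\ast}_j$ deviates; then a short induction that peels steps $j,j{-}1,\dots,i{+}1$ off from the back --- re-invoking minimality of $C$ at each stage, or equivalently appealing to the duality between $k$-step contrastive examples and $k$-step explanations via Lemma~\ref{method3_second_lemma} --- should let me drop a feature from $C_j$ and contradict minimality. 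Getting that induction to close cleanly is the delicate point; the rest is bookkeeping.
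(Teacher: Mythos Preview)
Your arguments for $C_i\neq\emptyset$ and for contiguity below $i$ are correct and, if anything, cleaner than the paper's: you use the direct observation that $C_\ell=\emptyset$ forces $x_\ell=s_\ell$ and hence $N(x_\ell)=a_\ell$, whereas the paper first establishes the tail claim and only then routes the argument for $C_i\neq\emptyset$ through Lemma~\ref{method3_second_lemma}. Your prefix-splice surgery for contiguity is exactly the right idea and goes through as written.

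The problem is the tail claim $C_j=\emptyset$ for $j>i$, and you have put your finger on the exact difficulty. The paper's own proof simply asserts that because $(C_1,\dots,C_i)$ is contrastive for the first $i$ steps, $(C_1,\dots,C_i,\emptyset,\dots,\emptyset)$ is contrastive for all of $\mathcal{E}$; but extending a witness $(x^{\ast}_1,\dots,x^{\ast}_i)$ to a full $k$-step witness with $x_{i+1}=s_{i+1},\dots,x_k=s_k$ requires precisely the junction transition $T(x^{\ast}_i,N(x^{\ast}_i),s_{i+1})$ you flag as unjustified. Your proposed back-peeling induction cannot close this, and in fact no argument can under the definition in Eq.~\eqref{eq:contrastive_examples}. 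Take $k=2$, a single binary feature per step, $N(0)=a$, $N(1)=b$, the deterministic relation $T(s,\alpha,s')\iff s'=[\alpha{=}b]$, and the execution $s_1=s_2=0$ with $a_1=a_2=a$. Then $C=(\{1\},\{1\})$ is contrastive via the witness $(1,1)$, and $i=1$; yet $(\{1\},\emptyset)$ is \emph{not} contrastive because any witness would need $T(1,b,0)$, which fails, and $(\emptyset,\{1\})$ is not contrastive because $T(0,a,\cdot)$ forces $x_2=0$. So $C$ is a minimal contrastive example with $C_2\neq\emptyset$ and $2>i$. The underlying phenomenon is that once $a_i$ changes, the transition relation may forbid rejoining the nominal trajectory at $s_{i+1}$, so freeing features downstream of step $i$ can be genuinely necessary to keep the witness on a feasible path. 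To salvage the statement you would need an extra hypothesis on $T$ (for instance, that every pair $(x,N(x))$ can transition to $s_{i+1}$), and your induction sketch would only go through under some such assumption.
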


The lemma gives rise to the following scheme.  We  examine
some contrastive example $C'$ of \emph{a set of subsequent steps of
  $\mathcal{E}$}. For simplicity, we discuss the case where $C'=
(C'_i)$
involves only a single step $i$; but the technique generalizes to
subsets of steps, as well. Such a $C'_i$ can be found using a
``one-shot'' verification query on step $i$, without encoding the transition
relation or unrolling the network. Our goal is to use $C'$ to find
many contrastive examples for $\mathcal{E}$, and use them in computing
the MHS.  We observe that there are three possible cases:
\begin{enumerate}
\item $C=(\emptyset, \ldots, \emptyset, C'_i, \emptyset, \ldots, \emptyset)$ already constitutes a 
  contrastive example for $\mathcal{E}$. In this case, we say that
  $C'=(C'_i)$ is an \emph{independent contrastive example}.
\item The features in $C'_i$ can cause a skew from $\mathcal{E}$ only
  when features from preceding steps $l,\ldots, i-1$ (for some $l<i$)
  are also altered.  In this case, we say that $C'$ is a
  \emph{dependent contrastive example}, and that it depends on steps
  $l,\ldots,i-1$; and together, the features from all these steps form
  the contrastive example
  $C=(\emptyset,\ldots,\emptyset, C_l, \ldots, C_{i-1},C'_i,\emptyset,\ldots,\emptyset)$ for
  $\mathcal{E}$.

\item $C'$ is a \emph{spurious contrastive example}: the first $i-1$
  steps in $\mathcal{E}$, and the constraints that the transition
  relation imposes, prevent the features freed by $C'_i$ from causing
  any action besides $a_i$ to be selected in step $i$.

 \end{enumerate}

 Fig.~\ref{fig:step_cxp_option_examples} illustrates the three
 cases. The first case is identical to the one from
 Fig.~\ref{fig:k-contrastive-example}, where $(\{s^3\})$ is a dependent
 contrastive example of the second step, which depends on the
 previous step and is part of a larger contrastive example:
 $(\{s^3\},\{s^3\})$. In the second case, assume that $T$ requires
 that $s_3^1+s_3^2\neq1$ for any feasible transition.  Thus, the
 assignment for $s_2^3$ which may cause the second action in the
 sequence to change is not reachable from the previous step, and hence
 $(\{s^3\})$ is a spurious contrastive example of the second step. In the third case, 
 assume that $T$ allows all transitions, and hence $(\{s^3\})$ is an
 independent contrastive example for the second step; and so
 $(\emptyset,\{s^3\})$ is a contrastive example of the entire
 execution.

\begin{figure*}[ht] 
	\centering
\centering
\subfloat[Dependent]{\includegraphics[width=4.4cm]{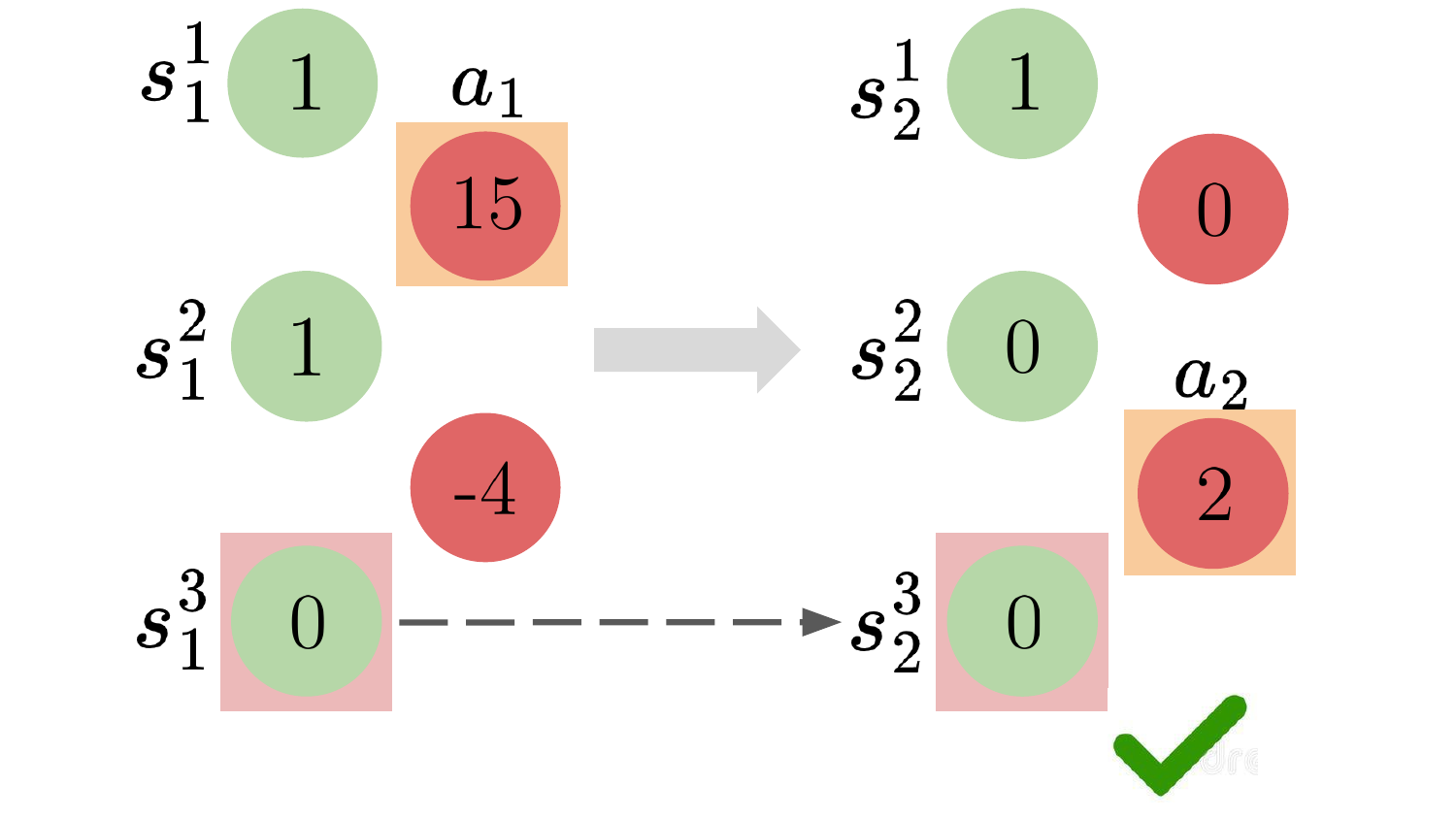}\label{fig:dependent_cxp}}
\subfloat[Spurious]{\includegraphics[width=4.4cm]{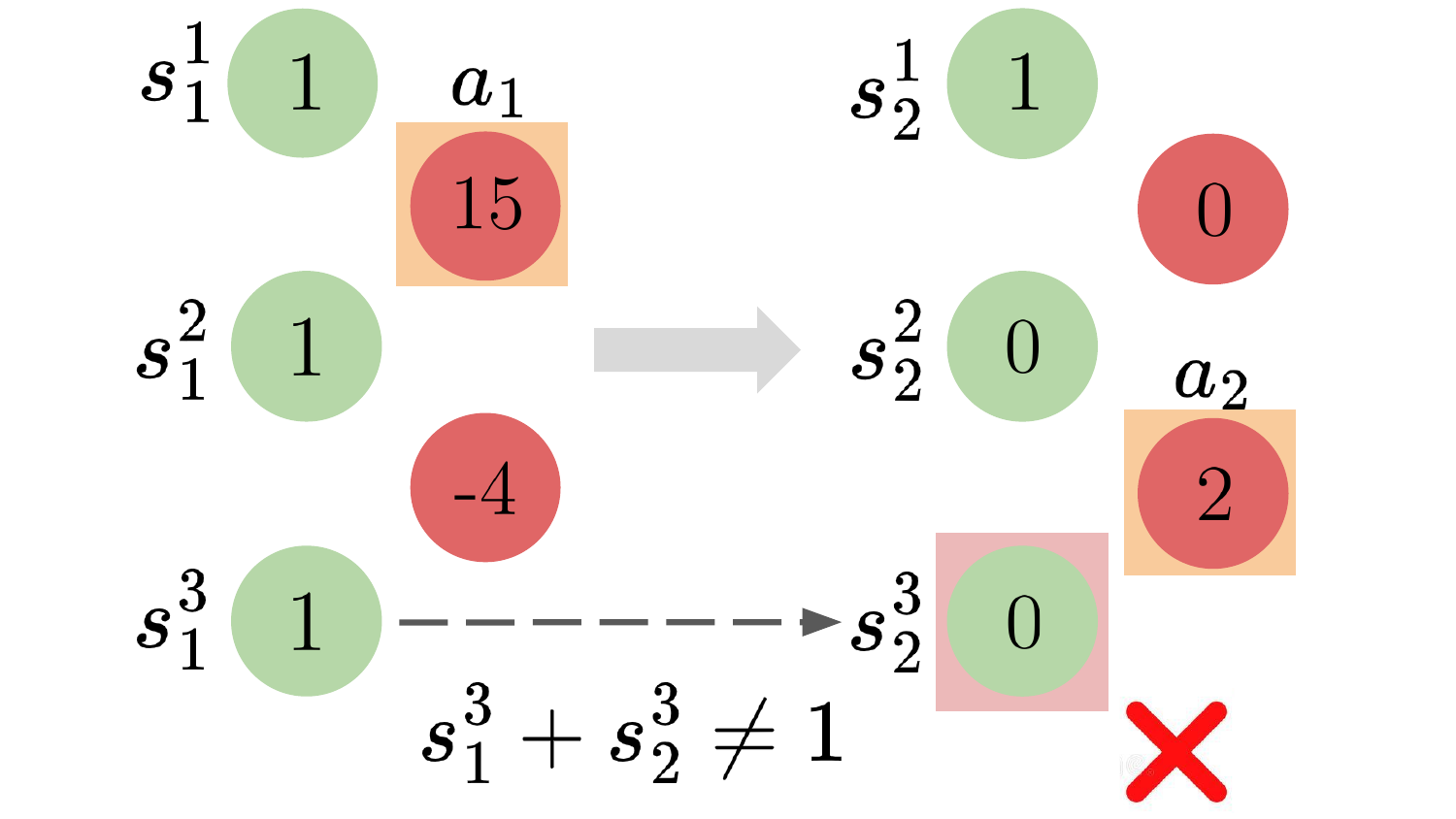}\label{fig:spurius_cxp}}
\subfloat[Independent]{\includegraphics[width=4.4cm]{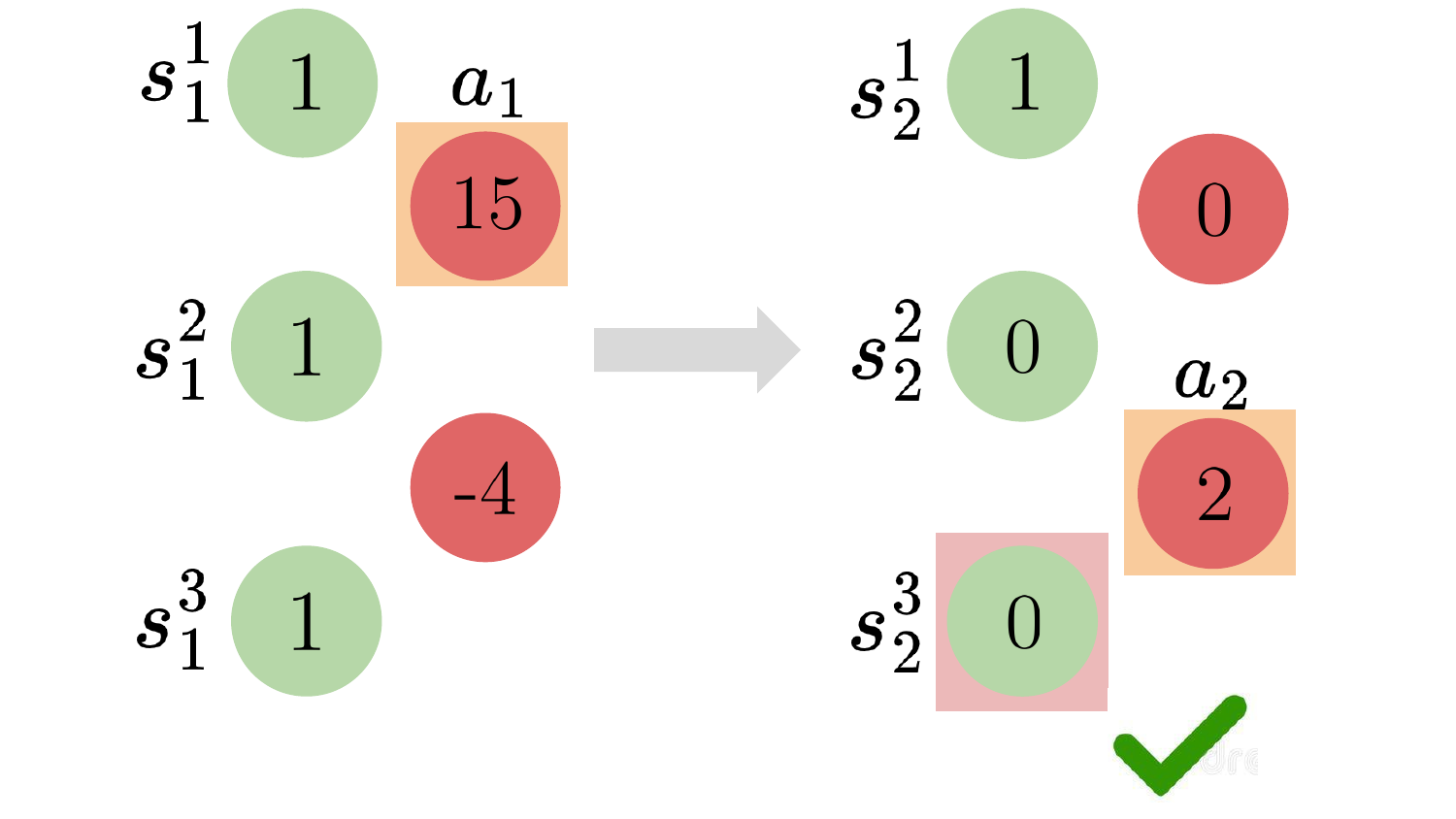}\label{fig:independent_cxp}}
\caption{$(\{s^3\})$ as a dependent, spurious and independent contrastive 
example.}
	\label{fig:step_cxp_option_examples}
\end{figure*}

It follows from Lemma~\ref{method4_second_lemma} that one of
these three cases must always apply. We next explain how
verification can be used to classify each contrastive example of a
subset of steps into one of these three categories. If $C'$ is
independent, it can be used as-is in computing the MHS; and if it is
spurious, it should be ignored. In the case where $C'$ is dependent,
our goal is to find all multi-step contrastive examples that contain
it, for the purpose of computing the MHS. We next describe a recursive
algorithm, termed \emph{reverse incremental enumeration} (RIE), that
achieves this.

\mysubsection{Reverse Incremental Enumeration.}
%
%
%
Given a contrastive example $C'$ containing features from a set of
subsequent steps of $\mathcal{E}$, we propose to classify it into one
of the three categories by iteratively dispatching queries
that check the reachability of $C'$ from the previous steps of the
sequence. We execute this procedure by recursively enumerating
contrastive examples in previous steps. For simplicity, we assume again 
that  $C'=(C'_i)$ is a single-step contrastive example
of step $i$.
\begin{enumerate}
\item For checking whether $C'$ is an independent contrastive
  example of $\mathcal{E}$, we set
  $C_{i-1}=\emptyset$ and $C_i=C'_i$, and check whether
  $C=(C_{i-1},C_i)$ is a contrastive example for steps
  $i-1$ and $i$. This is achieved by dispatching the following query:
    $\exists x_{i-1}, x_i \in \mathbb{F}$ such that:
    \begin{equation}
\begin{aligned}
    \label{eq:contrastive_examples_for_second_step_case_1}
    &T(x_{i-1},N(x_{i-1}),x_i)
    \wedge\\
    &\big(\bigwedge_{l=i-1}^{i}\bigwedge_{j\in F\setminus C_l}(x^{j}_{l}=s^{j}_{l})\big)
  \wedge(N(x_i)\neq a_i)
\end{aligned}
\end{equation}
If the verifier returns \sat{},  $C'_i$ is independent of step  $i-1$, and hence independent of all steps
$1,\ldots,i-1$. Hence, $C'$ is an independent contrastive
example of $\mathcal{E}$.

\item If the query from
  Eq.~\ref{eq:contrastive_examples_for_second_step_case_1} returns
  \unsat{}, we now attempt to decide whether $C'$ is dependent. We
  achieve this through additional verification queries, again setting
  $C_i=C'_i$, but now setting $C_{i-1}$ to a \emph{non empty} set of
  features --- once for every possible set of features, separately. We
  again generate a query using the encoding from
  Eq.~\ref{eq:contrastive_examples_for_second_step_case_1}, and if the
  verifier returns \sat{} it follows that $C'$ is dependent on step
  $i-1$, and that $C''=(C_{i-1},C_i)$ is a contrastive example for
  steps $i-1$ and $i$.  We recursively continue with this enumeration
  process, to determine whether $C''$ is independent,
  dependent of step $i-2$, or a spurious contrastive example.
  
\item In case the previous phases determine that $C'$ is neither
  independent nor part of a larger contrastive example, we conclude
  that it is spurious.
  
\end{enumerate}
An example of a single reverse incremental enumeration step on a
contrastive example $C'$ in our running example is depicted in
Fig.~\ref{fig:reverse_incremental_batching_figure}, and its recursive
call is shown in Alg.~\ref{alg:rie-algorithm} (Cxps denotes the set of all
multi-step contrastive examples containing the initial $C'$).

\begin{figure*}[ht]
\centering

\subfloat[First iteration]{\includegraphics[width=6cm]{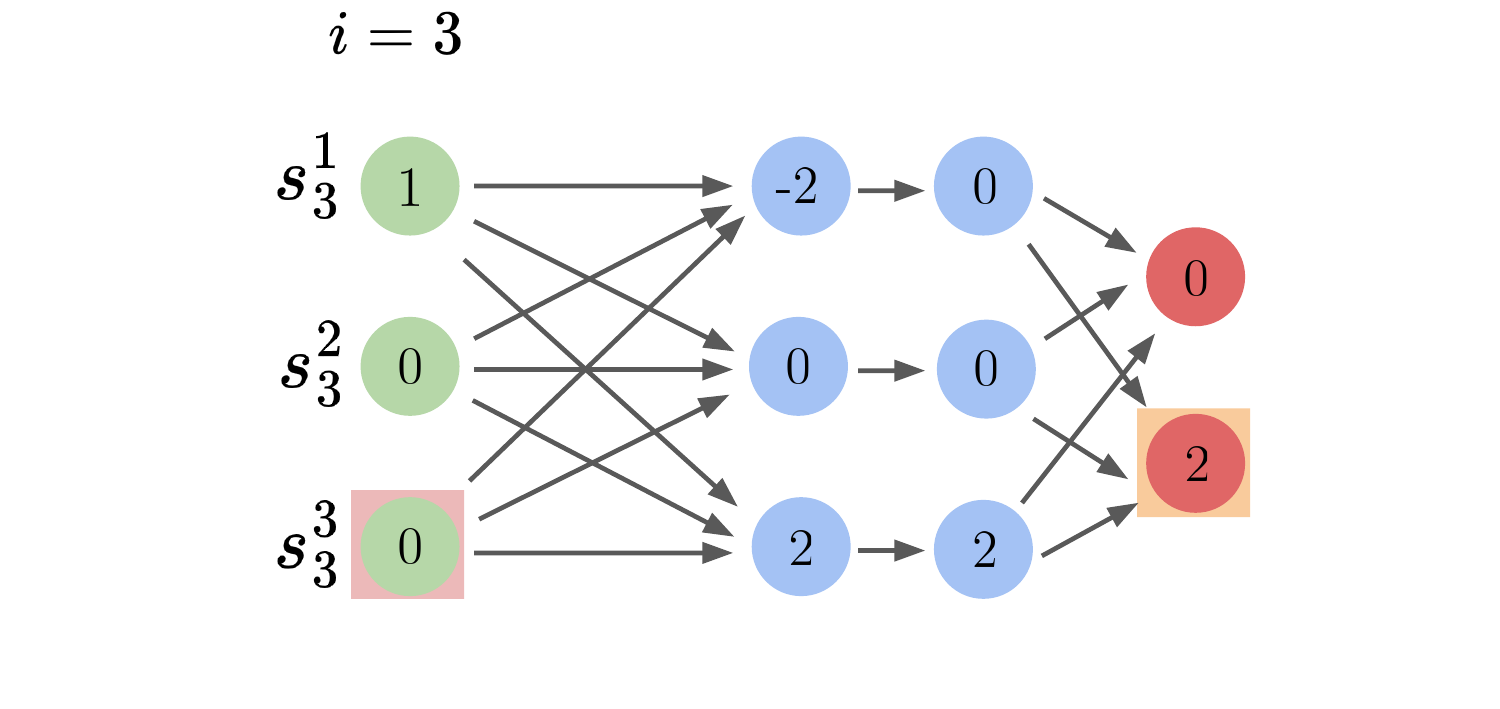}\label{fig:Ex_Im}}
\subfloat[Second iteration]{\includegraphics[width=6cm]{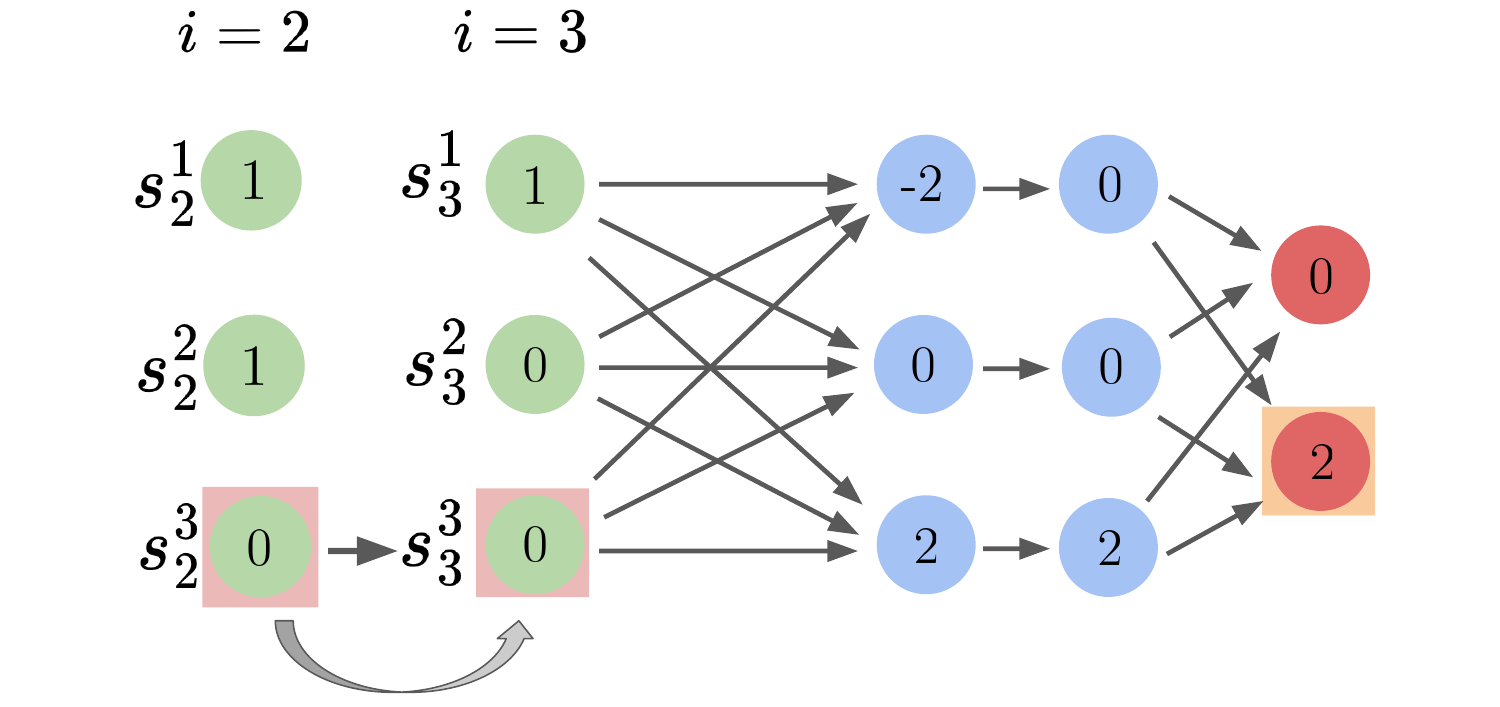}\label{fig:Ex_Im2}}\quad
\subfloat[Third iteration]{\includegraphics[width=6cm]{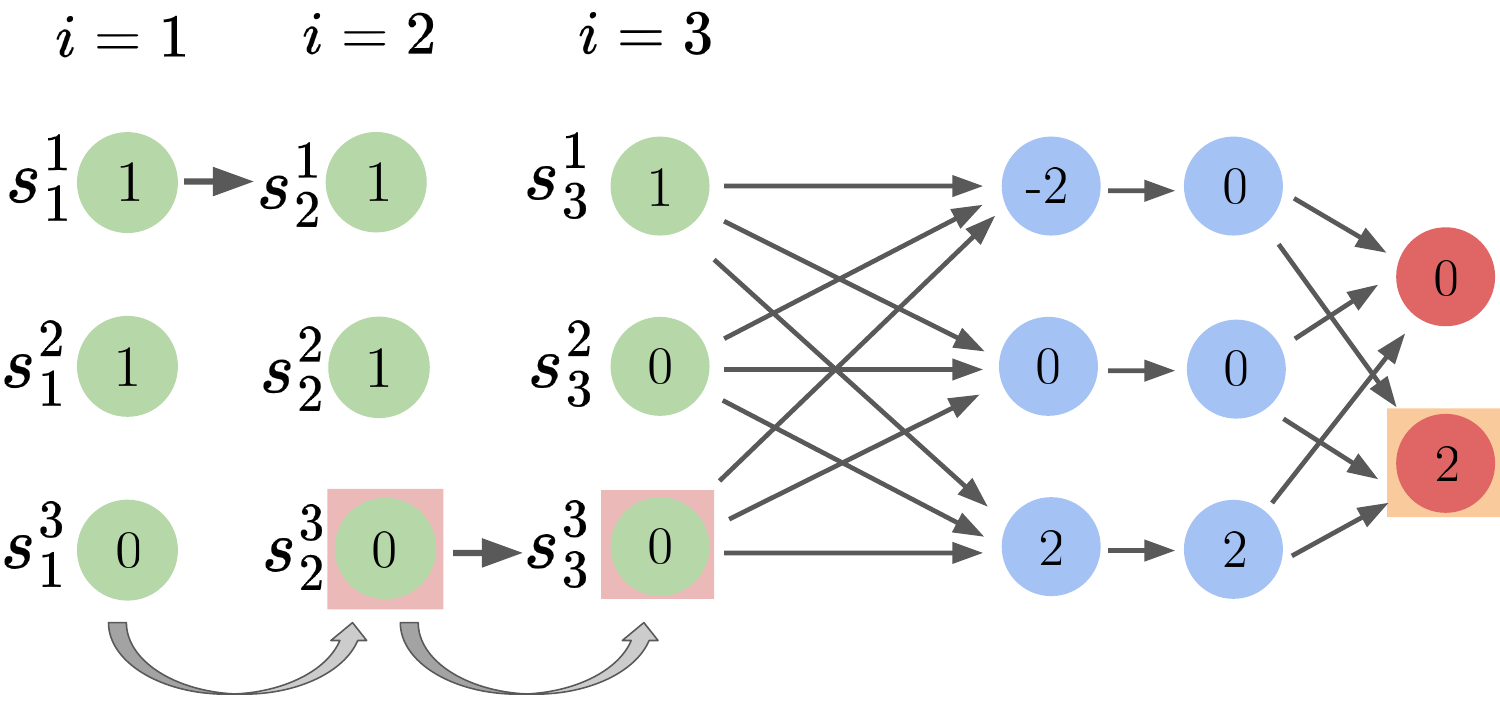}\label{fig:Ex_Im2}}
\caption{An illustration of reverse incremental enumeration. We start with a single-step contrastive example,
  $C'_3=\{s^3\}$ for the third step of the execution.  In
  the second iteration, we find that $(C'_3)$ is dependent on the
  previous step, and that $(\{s^3\},\{s^3\})$ constitutes a contrastive
  example for steps 2 and 3. In the third iteration, $(\{s^3\},\{s^3\})$
  is found to be independent of the first step, and hence $(\emptyset,\{s^3\},\{s^3\})$ is a
  contrastive example for $\mathcal{E}$.}
  \label{fig:reverse_incremental_batching_figure}

\end{figure*}

 \begin{algorithm}[ht]
		\algnewcommand\algorithmicforeach{\textbf{for each}}
		\algdef{S}[FOR]{ForEach}[1]{\algorithmicforeach\ #1\ \algorithmicdo}
		\caption{\texttt{Reverse Incremental Enumeration (\RIE)}}
		\label{alg:rie-algorithm}
    	\textbf{Input} i (starting index), j (reversed index), $C'=(C'_j,\ldots,C'_i)$
		\begin{algorithmic}[1]
            \If{j=1} 
            \State \Return C' \Comment{\textcolor{blue}{C' is trivially 
            independent of steps $j<1$}}
            \EndIf
            \If{$(\emptyset,C'_j,\ldots,C'_i)$ is a contrastive example of steps $j-1\ldots i$}
            \State \Return $(C_l\ |\ \forall 1\leq l\leq j-1, \ C_l=\emptyset)\cdot C'$ 
            \Comment{\textcolor{blue}{C' is independent of step $j-1$}}
			\EndIf
            \State{Cxps $\gets \emptyset$}
			\ForEach {subset $C_f$ of F} 
            \If{$(C_f,C'_j,\ldots,C'_i)$ is a contrastive example of steps $j-1\ldots i$}
            \State{Cxps $\gets$ Cxps $\cup$ 
            \Call{\RIE}{$i,j-1,C_f$}}\Comment{\textcolor{blue}{C' is dependent 
            of step $j-1$}}
			\EndIf
            \EndFor
            \State \Return Cxps \Comment{\textcolor{blue}{if Cxps is empty, C' 
            is spurious}}

		\end{algorithmic}
\end{algorithm}

Using  reverse incremental enumeration, we can find all
multi-step contrastive examples of
$\mathcal{E}$:
\begin{enumerate}
\item First, we find all contrastive examples for the first step of
  $\mathcal{E}$. This is again the same as finding contrastive
  examples of a ``one-shot'' classification problem, and can be
  performed efficiently~\cite{bassan2022towards}, via 
  Alg.~\ref{alg:all-cxps-single}. We first enumerate all
  contrastive examples of size $1$ (i.e., contrastive
  \emph{singletons}); then all contrastive examples of size $2$
  that do not contain contrastive singletons within them; and then
  continue this process for all $1\leq i \leq |F|$ (``skipping'' all
  non-minimal cases).

\item Next, we search for all contrastive examples for the second step
  of $\mathcal{E}$, in the same manner. We perform a reverse
  incremental enumeration on each contrastive example found,
  obtaining all contrastive examples for steps 1 and 2.

\item We continue iteratively, each time visiting a new step $i$ and
  reversely enumerating all contrastive examples that affect steps
  $1,\ldots,i$. We stop when we reach the final step, $i=k$.
\end{enumerate}
The full enumeration process for finding all contrastive examples of
$\mathcal{E}$ is described fully in Alg.~\ref{alg:all-cxps-multi}, which 
invokes Alg.~\ref{alg:all-cxps-single}.

\begin{algorithm}
	\algnewcommand\algorithmicforeach{\textbf{for each}}
	\algdef{S}[FOR]{ForEach}[1]{\algorithmicforeach\ #1\ \algorithmicdo}
	\caption{\texttt{Enumerate-All-Cxps}\xspace}\label{alg:all-cxps-multi}
	\textbf{Input} $N$ (DNN), $F$ ($N$'s features), $\mathcal{E}$ 
	(execution to explain) \Comment{\textcolor{blue}{Global 
			Variables}}		\begin{algorithmic}[1]
		\State{Cxps $\gets \emptyset$}
		\ForEach {$i \in \{1,...,k\}$}
		\State{CxpCandidates $\gets$ 
			\Call{\enumeratecxpssinglestepWithLineBreak}{i}}
		\ForEach {Cxp $\in$ CxpCandidates}
		\State{Cxps $\gets$ Cxps $\cup$ \Call{\RIE}{(Cxp), i, i}}
		\EndFor
		\EndFor
		\State \Return Cxps
		
	\end{algorithmic}
\end{algorithm}

\begin{algorithm}[ht]
	\algnewcommand\algorithmicforeach{\textbf{for each}}
	\algdef{S}[FOR]{ForEach}[1]{\algorithmicforeach\ #1\ \algorithmicdo}
	\caption{\texttt{Enumerate-All-Cxps-In-Single-Step}\xspace}\label{alg:all-cxps-single}
	\textbf{Input} $N$ (DNN), $F$ ($N$'s features), $\mathcal{E}$ (execution to 
	explain), i (step number)
	\begin{algorithmic}[1]
		\State{Cxps $\gets \emptyset$} \Comment{\textcolor{blue}{denotes the
				set of all contrastive examples}}
		\ForEach{$1\leq i \leq |F|$}
		\ForEach {subset $c$ of $F$ of length $i$ not containing sets from Cxps}
		\If{\verifycontrastiveexamplesingle is \sat{}}
		\State{Cxps $\gets$ Cxps $\cup$ c}
		\EndIf
		\EndFor
		\EndFor
		\State \Return Cxps
		
	\end{algorithmic}
\end{algorithm}




We also make the following observation: we can further expedite the enumeration 
process by discarding sets that contain contrastive examples within them 
since 
we are specifically searching for minimal contrastive examples. For instance, 
in the given example in Fig.~\ref{fig:reverse_incremental_batching_figure}, if 
we find $(\emptyset,{s^1},\emptyset)$ as a contrastive example for the entire 
multi-step instance, we no longer need to consider sets in step $2$ that 
contain $s^1$ when iterating in reverse from step $3$ to step $2$. Our 
evaluation shows that this approach can significantly improve performance as 
the increasing number of contrastive examples found in previous steps greatly 
reduces the search space.


Of course, our approach's worst-case complexity is still exponential in the 
number of steps, $k$, because each dependent contrastive example requires a 
recursive call that potentially enumerates all contrastive examples for the 
previous step. However, the number of recursive iterations is limited by the 
dependency between steps. For instance, if contrastive examples in step $i$ are 
only dependent on step $i-1$ and not on step $i-2$, the recursive iterations 
will be limited to $2$. Additionally, skipping the verification of candidates 
containing contrastive examples found in previous steps can also significantly 
reduce runtime.



\section{Evaluation} 
\label{sec:evaluation}

\mysubsection{Implementation and Setup.}  We created a
proof-of-concept implementation of all aforementioned approaches and 
benchmarks~\cite{ArtifactRepository}. To
search for explanations, our tool~\cite{ArtifactRepository}
dispatches verification queries using a backend DNN verifier (we use 
\marabou~\cite{KaHuIbJuLaLiShThWuZeDiKoBa19}, previously employed in additional 
studies~\cite{AmFrKaMaRe23, 
	AmWuBaKa21,AmScKa21,AmZeKaSc22,AmCoYeMaHaFaKa23,AmMaZeKaSc23,CaKoDaKoKaAmRe22,
	ReKa22},
although other
engines may also be used). The queries encode the architecture of the
DNN in question, the transition constraints between consecutive steps
of the reactive system, and the candidate explanation or contrastive
example being checked. Calculating the MHS, when relevant, was done 
using \rcTwo, a \maxSat-based tool of the \pysat 
toolkit~\cite{ignatiev2018pysat}.



\mysubsection{Benchmarks.} We trained DRL agents for two well-known reactive 
system benchmarks: GridWorld~\cite{SuBa18} and TurtleBot~\cite{TaPaLi17} (see 
Fig.~\ref{fig:benchmarks}). 
GridWorld involves an agent moving in a 2D grid, while TurtleBot is a 
real-world robotic navigation platform. These benchmarks have been extensively 
studied in the DRL literature. GridWorld has 8 input features per state, 
including agent coordinates, target coordinates, and sensor readings for 
obstacle 
detection. The agent can take $4$ possible actions: \up, \down, \leftOutput, or 
\rightOutput. TurtleBot has $9$ input features per state, including lidar 
sensor readings, target distance, and target angle. The agent has $3$ 
possible actions: \leftOutput, \rightOutput, or \forwardOutput. We trained our 
DRL agents with the state-of-the-art PPO algorithm~\cite{ShWoDh17}.
Additional details 
appear in 
Sec.~\ref{sec:appendix:Training} and~\ref{sec:appendix:TransitionEncoding} of 
the 
appendix.

\begin{figure}[H]
	\centering
	\captionsetup{justification=centering}
	\begin{center}
		\includegraphics[width=1.0\linewidth]{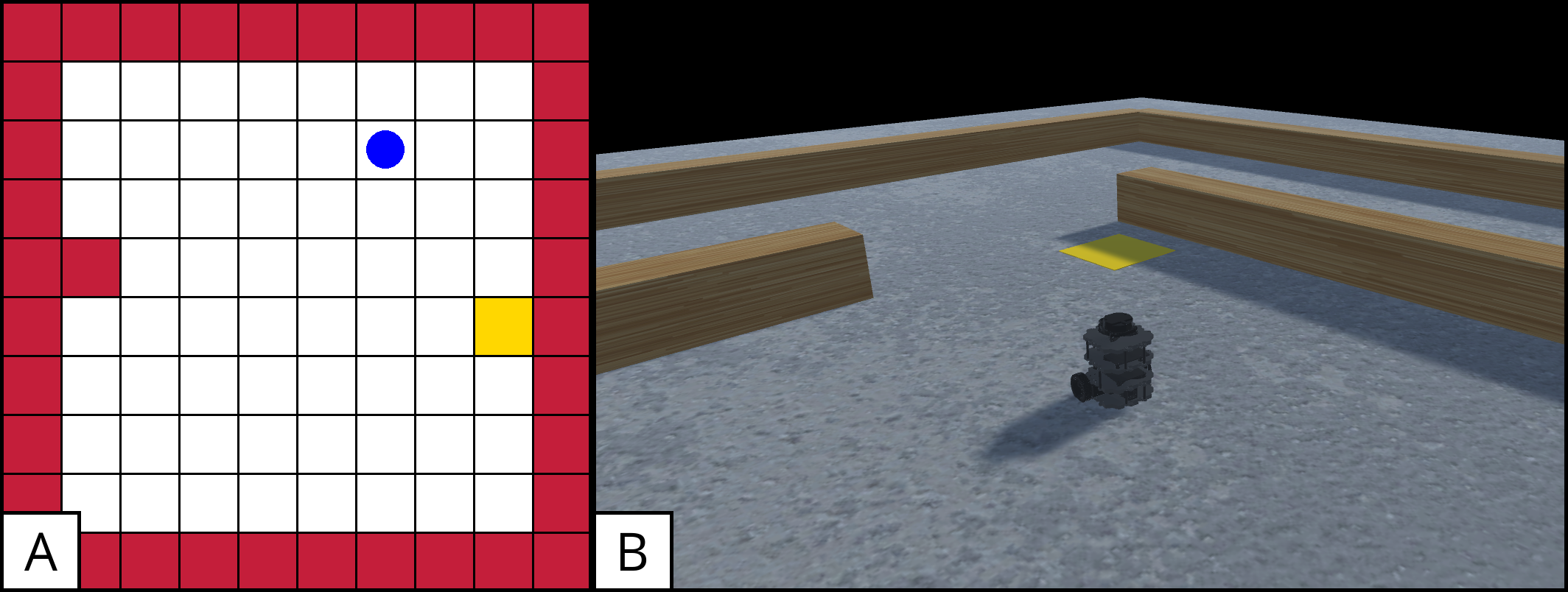}
	\end{center}
	\caption{Benchmarks: (A) GridWorld; and (B) TurtleBot.}
	\label{fig:benchmarks}
\end{figure}




\mysubsection{Generating Executions.}  We generated 200 unique
multi-step executions of our two benchmarks: $100$ GridWorld
executions (using $10$ agents, each producing $10$ unique executions
of lengths $6 \leq k \leq 14$), and $100$ TurtleBot executions (using
$100$ agents, each producing a single execution of length
$6 \leq k \leq 8$).  Next, from each $k$-step execution, we generated
$k$ unique sub-executions, each representing the first $i$ steps of
the original execution ($1\leq i\leq k$). This resulted in a total of
$931$ GridWorld executions and $647$ Turtlebot executions.  We used
these executions to assess the different methods for finding minimal
and minimum explanations. Each experiment ran with a timeout value of
$3\cdot i$ hours, where $i$ is the execution's length.

\mysubsection{Experiments.} We begin by 
comparing the performance of the  four methods discussed in 
Sec.~\ref{sec:formal_k_explanations_computation}:
\begin{inparaenum}[(i)]
\item encoding the entire network as a ``one-shot'' query;
\item computing individual explanations for each step;
\item incrementally enumerating explanations; and
\item reversely enumerating contrastive examples and calculating their
  MHS.
\end{inparaenum}
We note that we use Methods 1--3 to generate both minimal and
minimum explanations, whereas Method $4$ is only used to generate
minimum explanations. To generate minimum explanations using the
``one-shot'' encodings of Methods $1$ and $2$, we use the
state-of-the-art approach of Ignatiev et
al.~\cite{ignatiev2019abduction}. We use two common criteria for
comparison~\cite{ignatiev2019abduction,
  bassan2022towards, ignatiev2020contrastive}: the \emph{size} of
the generated explanations (small explanations tend to be more
meaningful), and the overall runtime and timeout ratios.



\mysubsection{Results.}  Results for the GridWorld benchmark are
presented in Table~\ref{table:gridWorld}. These results clearly indicate that 
Method 2 (generating
explanations in independent steps) was significantly faster in all
experiments, but generated drastically larger explanations --- about
two times larger when searching for a \emph{minimal} explanation, and
about five times larger for a \emph{minimum} explanation, on
average. This is not surprising; as noted earlier, the explanations
produced by such an approach do not take the transition constraints
into account, and hence, may be quite large. In addition, we note again
that this approach does not guarantee the minimality of the
combined explanation, even when combining minimal/minimum explanations for each 
step. 
The corresponding results for TurtleBot
appear in Sec.~\ref{sec:appendix:supplementaryResults} of the appendix, and 
also demonstrate similar outcomes.

\begin{table}
  \centering
    \caption{\textit{GridWorld}: columns from left to right: experiment type, 
	method name (and number), time and size of returned explanation (out 
	of 
	experiments that 
	terminated), and the percent of solved instances (the rest timed 
	out). The bold highlighting indicates the method that generated the 
	explanation with the optimal size.}
	
	\centering
        \resizebox{\columnwidth}{!}{%
	\begin{tabular}{c||c||c||ccc||c} 
		\hline
	\multirow{2}{*}{\textbf{setting}}                                       
		   & \multirow{2}{*}{\textbf{experiment}} 
		& \textbf{time (s)} & \multicolumn{3}{c||}{\textbf{size}}         & 
		\multirow{2}{*}{\begin{tabular}[c]{@{}c@{}}\textbf{solved}\\\textbf{(\%)}\end{tabular}}

		  \\
		&                                      &                              
		\textbf{avg.}     & \textbf{min} & \textbf{avg.} & \textbf{max} 
		&                                                                       
		                                                                        
		                            \\ 
		\hline
		\multirow{3}{*}{\begin{tabular}[c]{@{}c@{}}minimal\\(local)\end{tabular}}
		
		& one-shot (1)                             & 
	                  
		304               & 5            & 33            & 112          & 
		98

		\\
		
		& independent (2)                                                      
		& 1                 & 5            & 34            & 
		97           & 
		99.9                                                                    
                                                           
		\\
		
		& \textbf{incremental}  (3)                          & 
		 
		\textbf{1}                 & \textbf{5}            & \textbf{18}            & 
		\textbf{78}           & 
		\textbf{99.7}                                                                    
                                                           
		\\
		
		\hline
		\multirow{4}{*}{\begin{tabular}[c]{@{}c@{}}minimum\\(global)\end{tabular}}
		
		& one-shot   (1)                                                    & 
		405               & 5            & 14            & 32           & 
		29.8                                                                    
                                                           
		\\
		
		& independent   (2)                         & 
		 
		4               & 5            & 35            & 98         & 
		98.3

		\\
		
		& incremental   (3)                                                    
		& 
		1,396             & 5            & 7             & 
		9            & 
		17.9                                                                    
                                                         
		                         \\
		& \textbf{reversed}  (4)                               & 
		                        
		\textbf{39}                & \textbf{5}            & \textbf{7}             & 
		\textbf{16}           & 
		\textbf{99.7}                                                           
                                                            
        \\
		\hline
	\end{tabular}
        }
    \label{table:gridWorld}
\end{table}

When comparing the three approaches that can guarantee minimal
explanations, the incremental enumeration approach (Method 3) is
clearly more efficient than the ``one-shot'' scheme (running for about $1$ 
second compared to above $5$ minutes, on average, across all solved instances), 
as depicted in Fig.~\ref{fig:cumulative_time_minimal_explanation}. 
For the
minimum explanation comparison, the results show that the
reversed-enumeration-based strategy (Method 4) ran significantly faster
than all other methods that can find guaranteed minimum explanations:
on average, it ran for $39$ seconds, while the other methods ran for
more than $6$ and $23$ minutes. In addition, out of all methods
guaranteed to produce a minimum explanation, experiments that ran with
the ``reversed'' strategy hit significantly fewer timeouts.
The 
``reversed'' strategy outperforms the competing
methods significantly, on both benchmarks (see 
Fig.~\ref{fig:cumulative_time_minimum_explanation}).

Next, we analyzed the strategies at a higher resolution --- focusing 
on a \emph{step-wise} level comparison, i.e., on analyzing how the length of 
the execution 
affected runtime. 
The results (see 
Figs.~\ref{fig:res_gridworld_local}-~\ref{fig:res_turtlebot_full} of the 
appendix) 
demonstrate the drastic 
performance gain of our ``reversed'' strategy as $k$ increases: this strategy can efficiently find explanations for longer executions, 
while the competing ``one-shot'' strategy fails. This again is not surprising: 
when dealing with large numbers of steps, the transition function,
the unrolling of the network, and the underlying enumeration scheme become 
more taxing on the underlying verifier. A full analysis of both benchmarks, 
and all explanation types, appears in 
Sec.~\ref{sec:appendix:supplementaryResults} of the 
appendix. 
%

\begin{figure}[t]
	\centering
	{\includegraphics[width=0.24\textwidth]{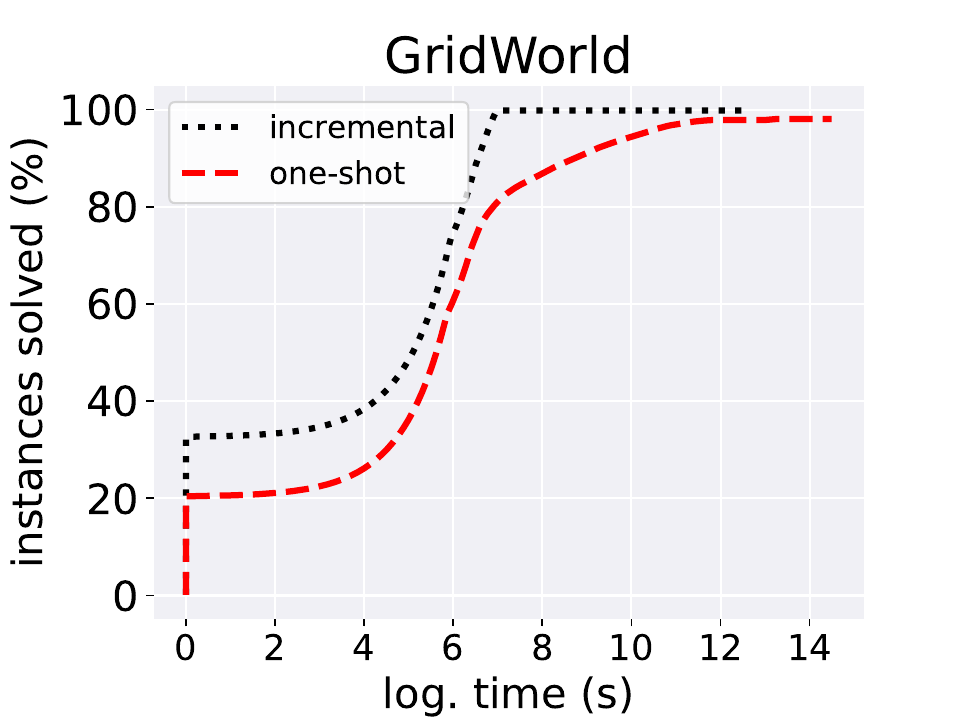}}
	{\includegraphics[width=0.24\textwidth]{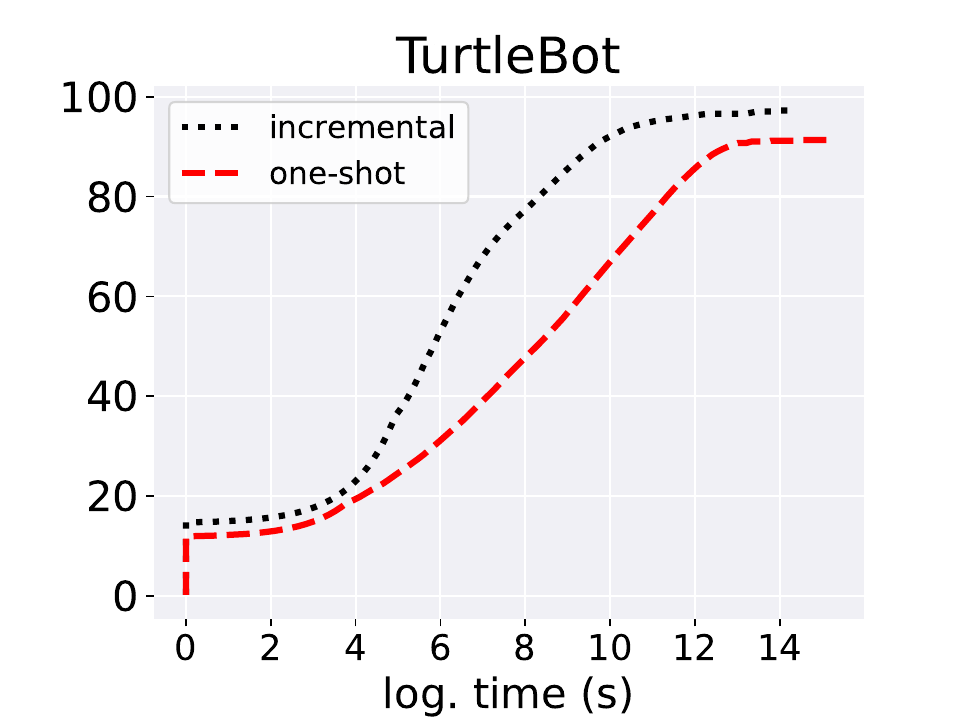}}
	\caption{\textit{Minimal explanation}: number of solved instances depending 
	on (accumulative) time, for the methods that guarantee minimality.}
	\label{fig:cumulative_time_minimal_explanation}
\end{figure}

\begin{figure}[t]
	\centering
	{\includegraphics[width=0.24\textwidth]{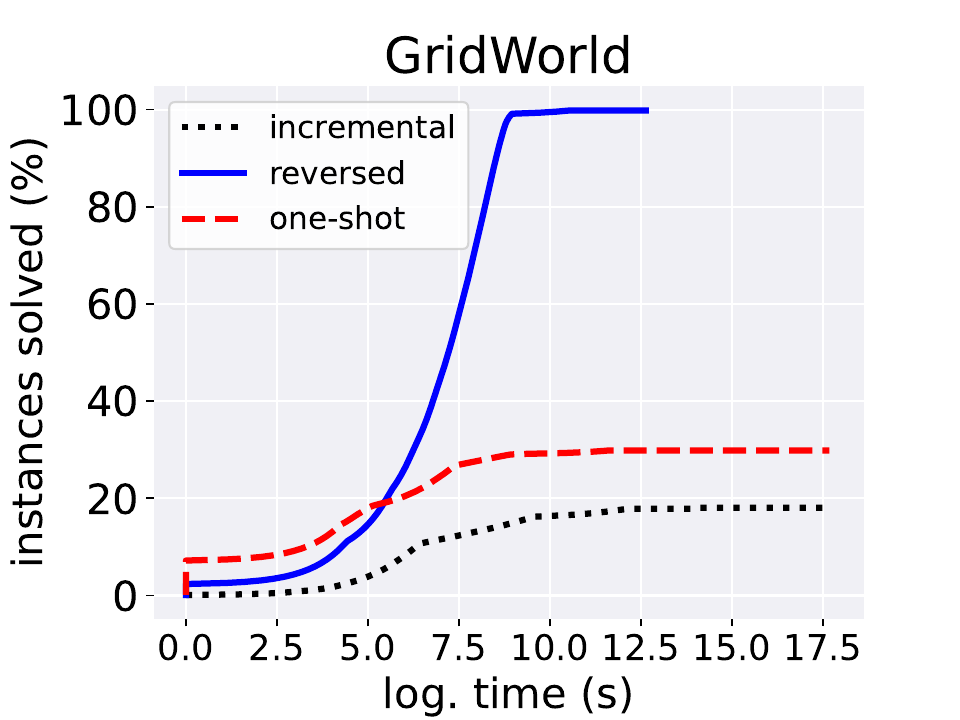}}
	{\includegraphics[width=0.24\textwidth]{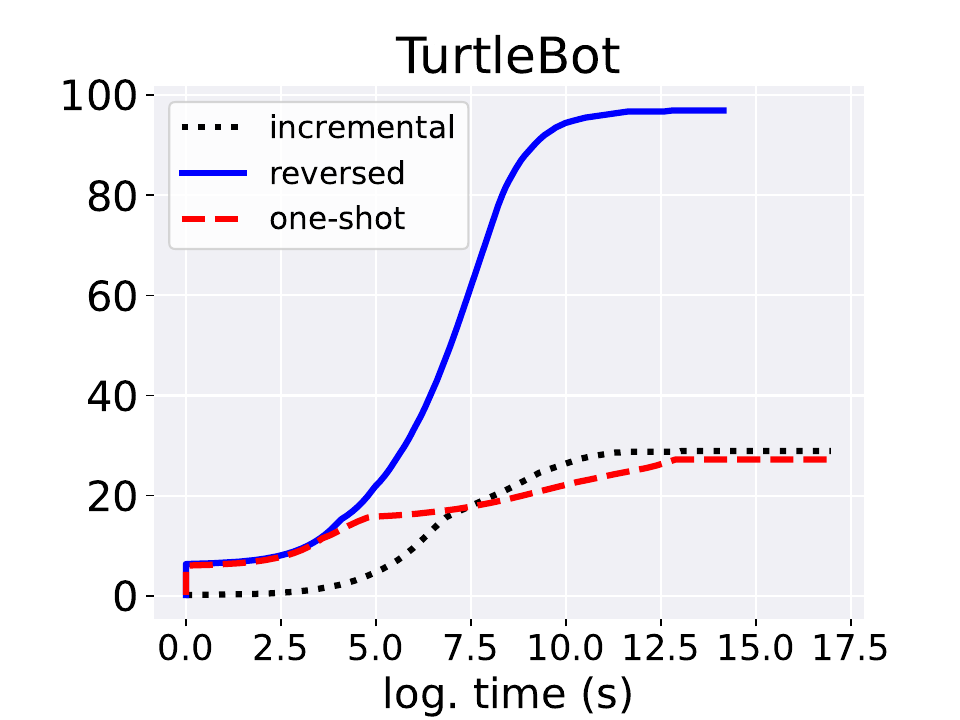}}
	\caption{\textit{Minimum explanation}: number of solved instances depending 
	on (accumulative) time, for the methods that guarantee minimality.}
	\label{fig:cumulative_time_minimum_explanation}
\end{figure}

\mysubsection{Explanation Example.}
We provide a visual example
of an instance from our GridWorld experiment identified as a
minimum explanation. The results (depicted in Fig.~\ref{fig:visual_example}) 
include a minimum explanation for an execution of $8$ steps. 
They show the following meaningful insight: fixing part of the agent's 
location sensors at the initial step, 
and a single sensor in the sixth step, is sufficient
for forcing the agent to move along the original path, regardless of
any other sensor reading.  

\begin{figure}[t]
	\centering
	\begin{center}		
		\includegraphics[width=1.0\linewidth]{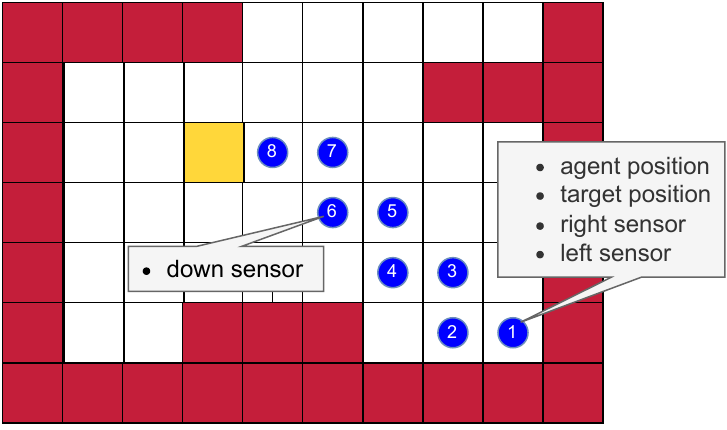}
	\end{center}
	\caption{\textit{GridWorld}: a $5$-sized explanation for an $8$-step 
		execution.
		The steps are numbered (in blue circles), the target
		is the yellow square, and the
		obstacles are depicted in red.}
	\label{fig:visual_example}
\end{figure}

\mysubsection{Comparison to Heuristic XAI Methods.} We
also compared our results to popular, non-verification-based,
heuristic XAI methods. Although these methods proved scalable, they often
returned unsound explanations when compared to our approach. For additional
details, see Section~\ref{sec:appendix:heuristicMethods} 
of the appendix.

\section{Related Work}

\label{sec:RelatedWork}
This work joins recent efforts on utilizing formal verification
to explain the decisions of ML
models~\cite{ignatiev2019abduction, shih2018symbolic,
	shi2020tractable, bassan2022towards, wu2022verix, fel2022don,
	la2021guaranteed }. 
Prior studies primarily focused on formally explaining \emph{classification} over various domains~\cite{ignatiev2019abduction,bassan2022towards,wu2022verix,ignatiev2019abduction,
	ignatiev2019validating,la2021guaranteed} or specific
model 
types~\cite{HuIzIgMa21,marques2020explaining,izza2020explaining,ignatiev2021sat,ignatiev2018sat}.
 while others explored alternative ways of defining 
explanations over classification tasks~\cite{wu2022verix, 
la2021guaranteed,izza2021efficient, 
	waeldchen2021computational, 
	AnPaDiCh19,PrAf20,mcmillan2020bayesian,HuCoMoPlMa23}.

Methods closer to our own have focused on formally
explaining DNNs~\cite{ignatiev2019abduction, bassan2022towards,
	wu2022verix, la2021guaranteed,HuMa23}, where the problem is known to be
complex~\cite{liberatore2005redundancy, ignatiev2019abduction}. This
work relies on the rapid development of DNN
verification~\cite{katz2017reluplex, ZhShGuGuLeNa20,
	GeLeXuWaGuSi22,
	AvBlChHeKoPr19,
	KoLoJaBl20,
	BaShShMeSa19,Al21,GuPuTa21}.  
There has also been ample work on heuristic 
XAI~\cite{ribeiro2016should,ribeiro2018anchors,lundberg2017unified,
	gunning2019xai, selvaraju2017grad}, including approaches for
explaining the decisions of reinforcement-learning-based reactive systems 
 (XRL)~\cite{puiutta2020explainable,
	madumal2020explainable, heuillet2021explainability,
	juozapaitis2019explainable}. However, these methods do not
provide formal guarantees. 


\section{Conclusion}
\label{sec:Conclusion}
Although DNNs are used extensively within reactive systems, they remain 
``black-box'' models, uninterpretable to humans. We seek to mitigate this 
concern by producing formal explanations for executions of
reactive systems. As far as we are aware, we are the first to provide a 
formal 
basis of explanations in 
this context, and to suggest methods for efficiently producing such
explanations --- 
significantly outperforming the competing approaches.
We also note that our approach is agnostic to the type of reactive system, and 
can be generalized beyond DRL systems, to any k-step reactive DNN system 
(including RNNs, LSTMs, GRUs, etc.).
Moving forward, a main extension could be scaling our method, beyond the simple 
DRLs evaluated here, to larger systems of higher complexity. Another 
interesting extension could include evaluating the attribution of the 
hidden-layer features, rather than just the input features.  

%

%
%

\mysubsection{Acknowledgments.}  The work of Bassan, Amir, Refaeli, and Katz was
partially supported by the Israel Science Foundation (grant number
683/18). The work of Amir was supported by a scholarship from the Clore Israel 
Foundation. 
The work of Corsi was partially supported by the ``Dipartimenti di Eccellenza 
2018-2022'' project, funded by the Italian Ministry of Education, Universities, 
and Research (MIUR).


{
	\bibliographystyle{abbrv}
	\bibliography{references}
}

\newpage
\onecolumn

\setcounter{section}{0}

{\huge{Appendix}}

\section{Examples of Minimal and Minimum Explanations}
\label{sec:appendix:minimalandMinimumExplanationExampleFigure}

We present figures depicting a minimal explanation, and a minimum explanation, 
for the toy DNN (depicted in Fig.~\ref{fig:neural_network_example_1}), and 
the input $V_1=[1,1,1]^T$.

\vspace{10pt}
\begin{figure}[H]
	\centering
	\includegraphics[width=0.32\textwidth]{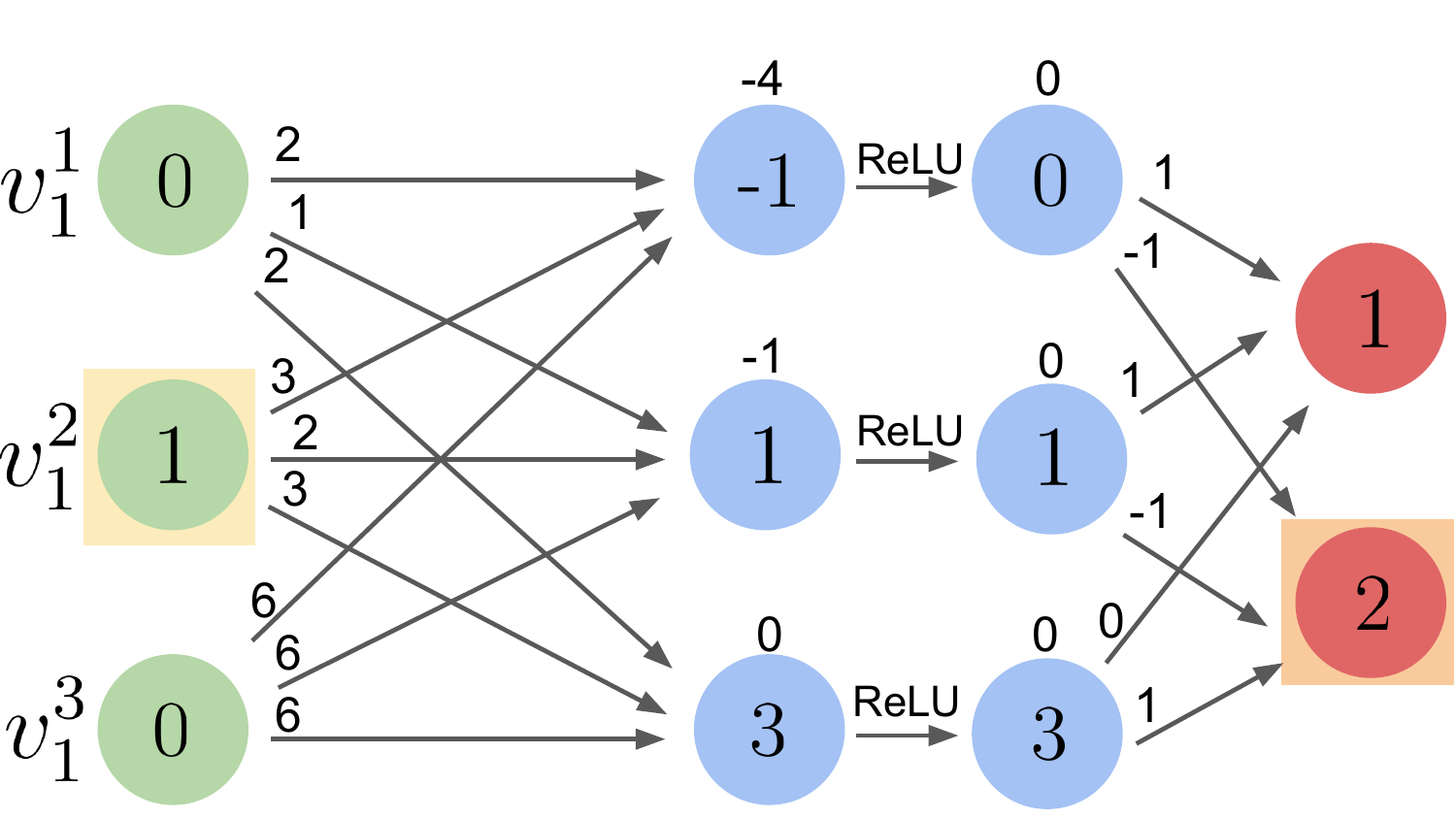}
	\hspace{40pt}
	\includegraphics[width=0.32\textwidth]{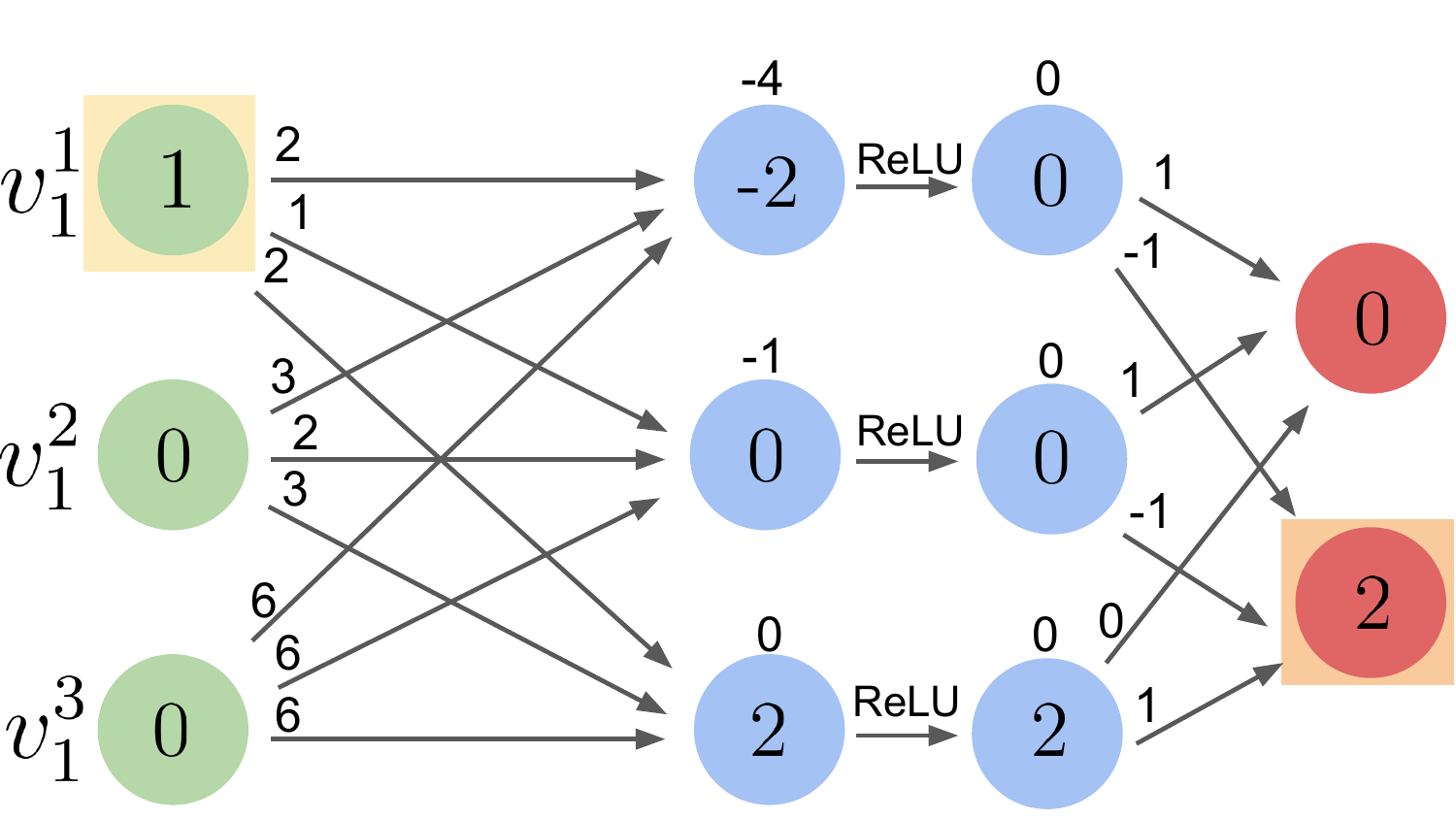}
	\caption{$\{ v_1^1, v_1^2 \}$ is a minimal explanation for input 
	$V_1=[1,1,1]^T$.}
	\label{fig:neural_network_minimal_explanation}
\end{figure}

\vspace{20pt}
\begin{figure}[ht]
	\centering
	{\includegraphics[width=0.32\textwidth]{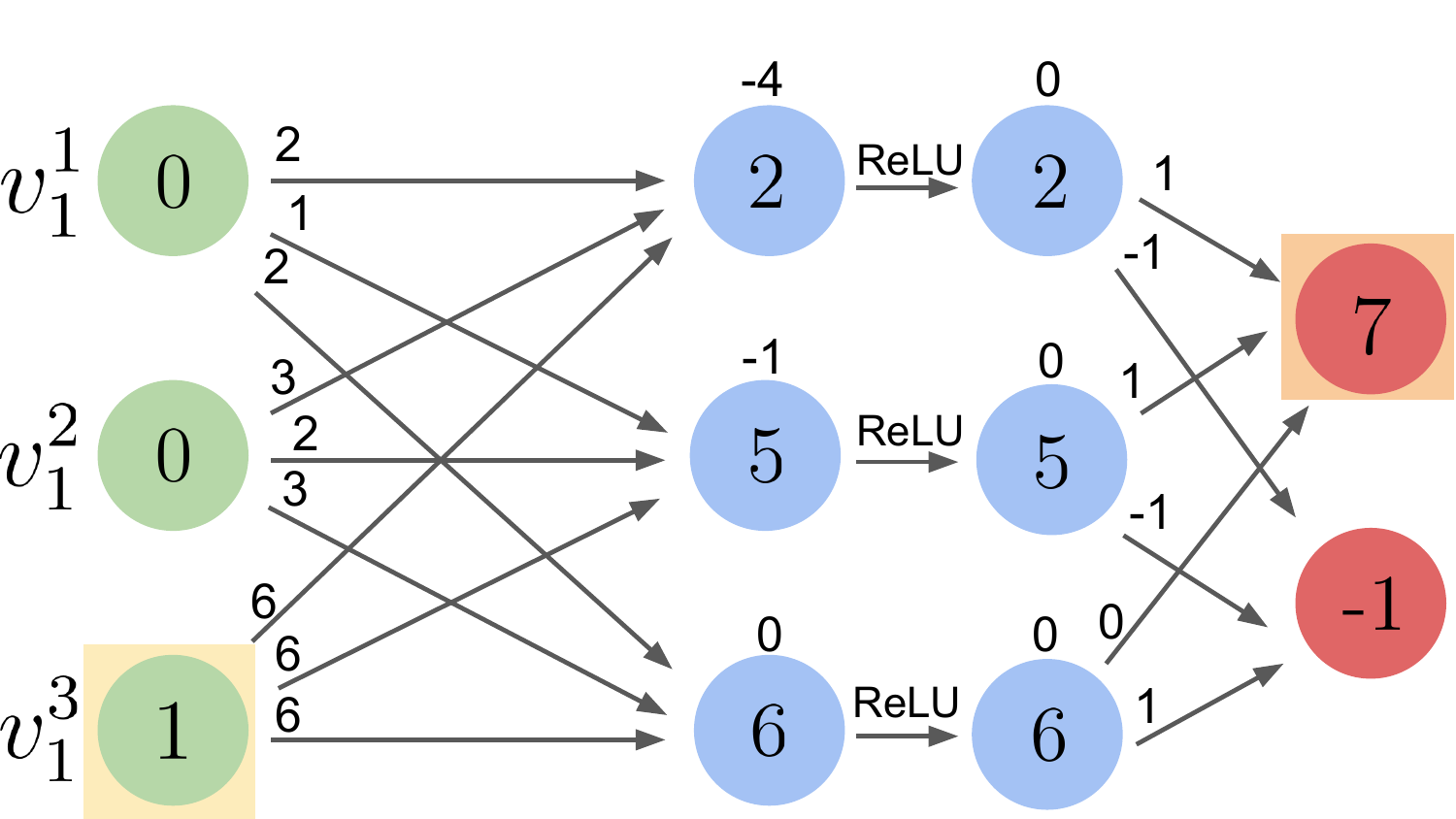}}
	\hfill
	{\includegraphics[width=0.32\textwidth]{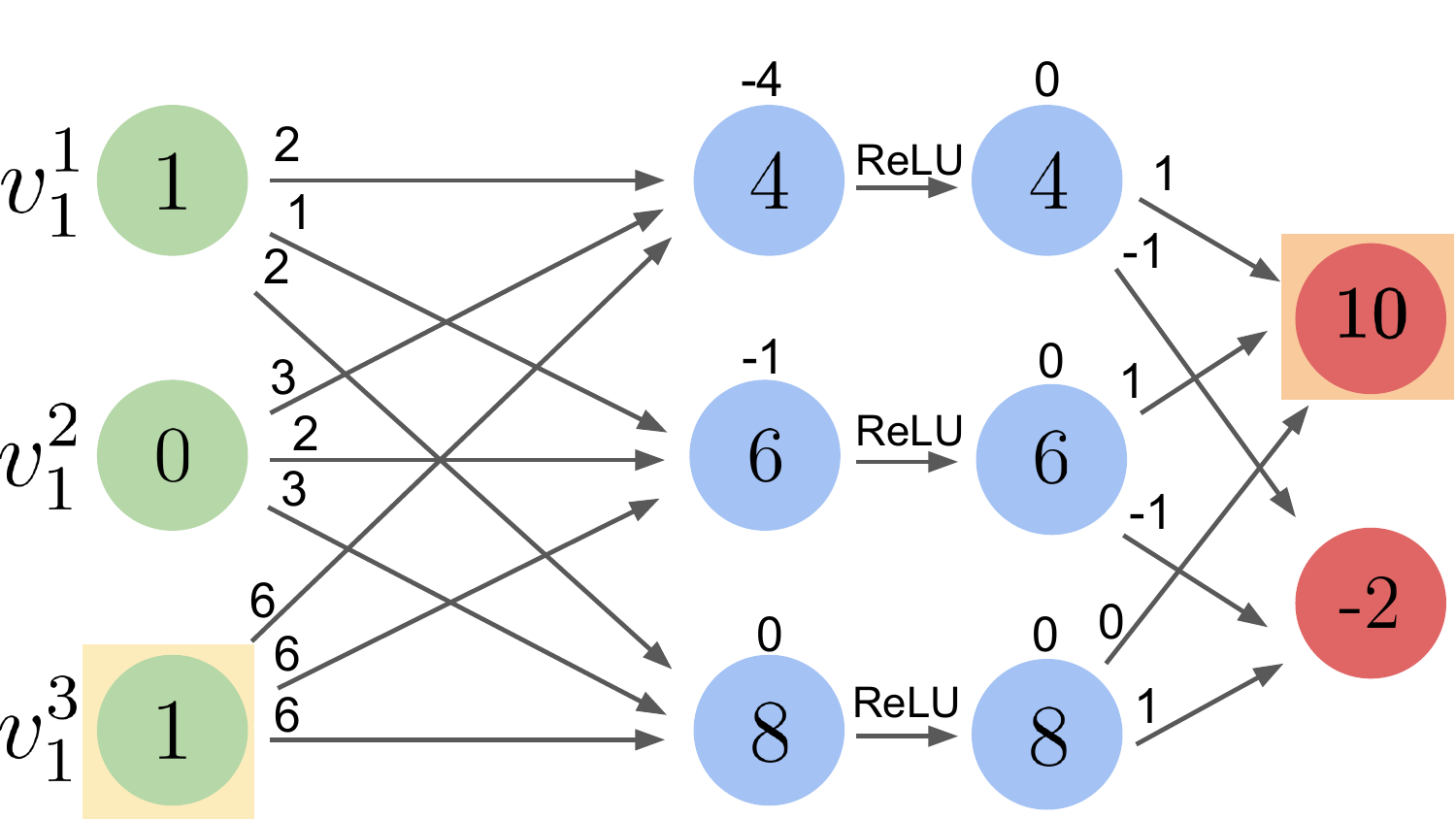}}
	\hfill
	{\includegraphics[width=0.32\textwidth]{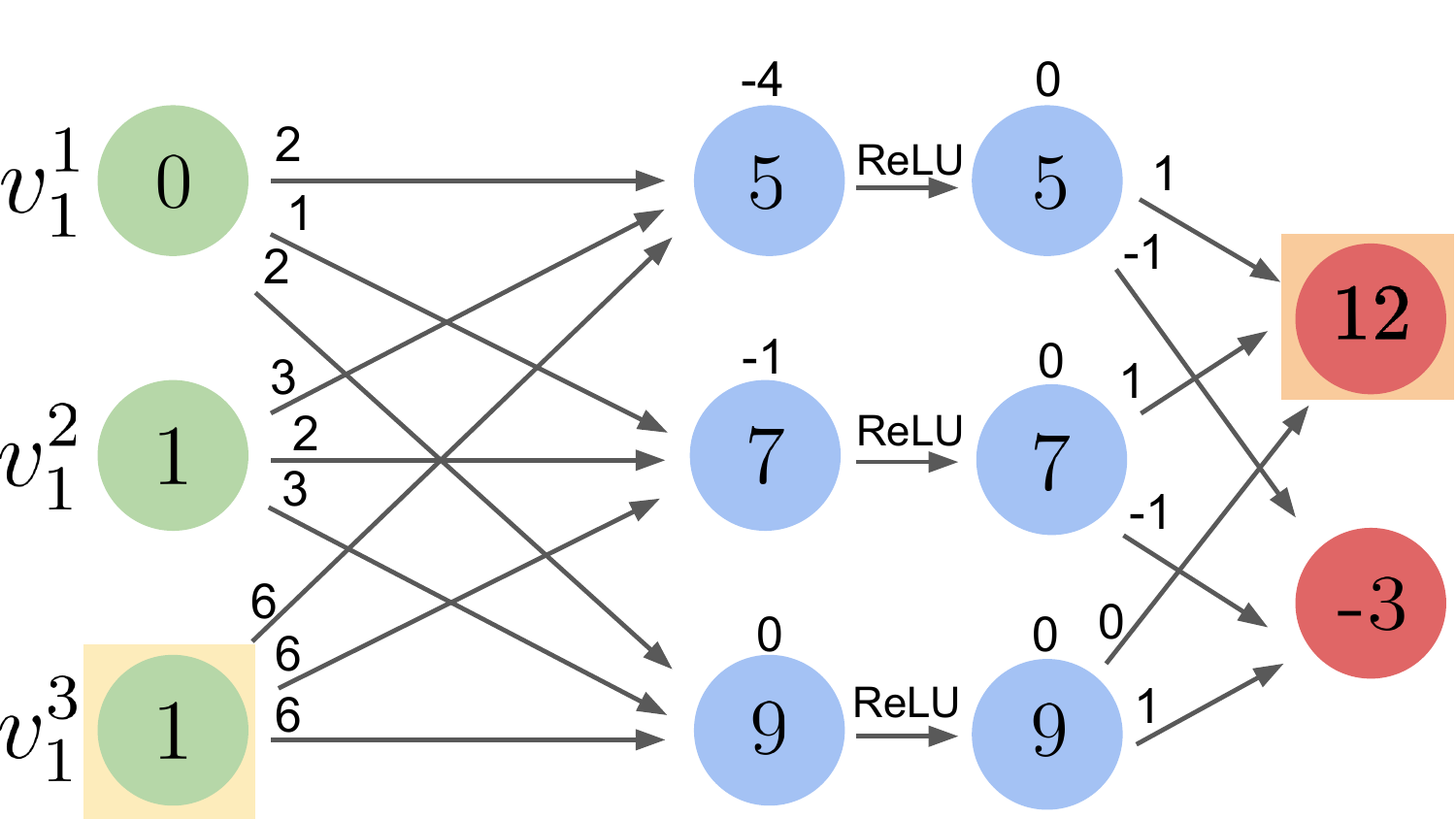}}
	\caption{$\{ v_1^3 \} $ is a minimum explanation for input $V_1=[1,1,1]^T$.}
	\label{fig:neural_network_minimum_explanation}
\end{figure}

\vspace{20pt}

\section{Minimum Hitting Set (MHS)}
\label{sec:appendix:mhsDefinition}

Given a collection $\mathbb{S}$ of sets from a universe U, a hitting set $h$ for
$\mathbb{S}$ is a set such that
$\forall S \in \mathbb{S}, h\cap S \neq \emptyset$. A hitting set $h$
is said to be \emph{minimal} if none of its subsets is a hitting set,
and \emph{minimum} if it has the smallest possible cardinality among
all existing hitting sets.

\section{Additional Proofs}
\label{sec:appendix:additionalProofs}

We present the full proofs for the three lemmas mentioned in this work.

\setcounter{lemma}{0}
  \begin{lemma}
  \label{method3_first_lemma_for_appendix}
  Let $E=\{E_1, E_2, \ldots E_k\}$ be a $k$-step explanation for
execution $\mathcal{E}$, and let $1\leq i \leq k$. Let $E'$ be the set
obtained by removing a set of features $F' \subseteq E_i$ from $E_i$,
i.e.,
$E'=(E_1, \ldots, E_{i-1},E_i\setminus F', E_{i+1}, \ldots, E_k)$. In
this case, fixing the features in $E'$ prevents the 
 first $i-1$ actions, $(a_{1}, a_{2}, \ldots a_{i-1})$, from changing.
  \end{lemma}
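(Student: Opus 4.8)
The idea is to reduce the statement to the definition of a $k$-step explanation (Eq.~\ref{k-abductive-explanation-equation}) by exploiting the fact that the first $i-1$ actions are ``insulated'' from whatever happens at step $i$. First I would unfold the goal: fix an arbitrary sequence $x_1,\ldots,x_k\in\mathbb{F}$ satisfying the hypothesis induced by $E'$, i.e.\ all the transition constraints $T(x_\ell,N(x_\ell),x_{\ell+1})$ together with $x_\ell^j=s_\ell^j$ for every $j$ in the $\ell$-th component of $E'$; the goal is to show $N(x_\ell)=a_\ell$ for all $\ell\le i-1$. The key observation is that $E'$ and $E$ differ only in their $i$-th component (where it is $E_i\setminus F'\subseteq E_i$) and coincide everywhere else, so the prefix $x_1,\ldots,x_{i-1}$ already fixes all features in $E_1,\ldots,E_{i-1}$ and respects the transitions $T(x_\ell,N(x_\ell),x_{\ell+1})$ for $\ell\le i-2$.

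Next I would turn this prefix into a genuine witness of the full $E$-hypothesis. Concretely, I would extend $x_1,\ldots,x_{i-1}$ into a length-$k$ sequence $\hat{x}_1,\ldots,\hat{x}_k$ that agrees with $x_1,\ldots,x_{i-1}$ on the first $i-1$ steps and, from step $i$ onward, follows a legal continuation that respects all remaining transition constraints and fixes all features in the $i$-th through $k$-th components of $E$ --- using here that the reactive system is a deterministic (Mealy) transducer and that $\mathcal{E}$ itself is a valid execution, so that the suffix $s_i,\ldots,s_k$ of $\mathcal{E}_S$ supplies such a completion. Once $\hat{x}_1,\ldots,\hat{x}_k$ is checked to satisfy every conjunct in the antecedent of Eq.~\ref{k-abductive-explanation-equation}, the assumption that $E$ is a $k$-step explanation for $\mathcal{E}$ yields $N(\hat{x}_\ell)=a_\ell$ for all $\ell$; since $\hat{x}_\ell=x_\ell$ for $\ell\le i-1$, this gives $N(x_\ell)=a_\ell$ for $\ell=1,\ldots,i-1$, as required.

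The main obstacle is precisely this completion step: one must argue that a sequence satisfying only the \emph{weaker} $E'$-hypothesis (its $i$-th state pinned down on fewer coordinates) can nonetheless be turned into a sequence satisfying the \emph{stronger} $E$-hypothesis without disturbing the first $i-1$ states, and hence their $N$-values. The delicate points are (i) that the transition at the ``seam'' between step $i-1$ and step $i$ stays valid, and (ii) that re-pinning the later states to their $\mathcal{E}$-values does not retroactively over-constrain the prefix. I would discharge both by keeping the prefix fixed and choosing the remainder of $\hat{x}$ to coincide with $\mathcal{E}_S$ from step $i$ on, invoking determinism of $T$ and the validity of $\mathcal{E}$ as an execution to certify that this is a legal completion. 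An alternative, cleaner route --- matching how the lemma is actually used in Method~3 --- is to first establish the auxiliary fact that the prefix $(E_1,\ldots,E_{i-1})$ is itself an $(i-1)$-step explanation of the truncated execution $(s_1,\ldots,s_{i-1})$, after which the claim is immediate since $E$ and $E'$ induce identical constraints on the first $i-1$ steps.
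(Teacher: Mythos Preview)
Your ``alternative, cleaner route'' is essentially the paper's own proof: the paper simply argues that if one of $a_1,\ldots,a_{i-1}$ changed then $(E_1,\ldots,E_{i-1})$ would not explain the first $i-1$ steps of $\mathcal{E}$, and declares this a contradiction with $E$ being a $k$-step explanation. The paper does not justify the implication from ``$E$ explains the $k$-step execution'' to ``its prefix explains the $(i-1)$-step prefix''; it takes it for granted.

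Your main completion argument is more honest about where the difficulty lies, but the fix you propose does not close the gap. Setting $\hat{x}_j=s_j$ for $j\ge i$ requires the seam transition $T(x_{i-1},N(x_{i-1}),s_i)$ to hold; determinism only says the successor of $x_{i-1}$ under $N(x_{i-1})$ is \emph{unique}, not that it equals $s_i$ --- for that you would need $x_{i-1}=s_{i-1}$ and $N(x_{i-1})=a_{i-1}$, the latter being precisely what you are trying to prove. A concrete obstruction: take one binary feature, $N(0)=a$, $N(1)=b$, $T(s,\alpha,s')\Leftrightarrow s'=s$, and execution $s_1=s_2=0$. Then $E=(\emptyset,\{1\})$ is a $2$-step explanation (fixing $x_2=0$ forces $x_1=0$ via $T$), yet with $i=2$, $F'=\{1\}$, $E'=(\emptyset,\emptyset)$ the sequence $x_1=x_2=1$ satisfies every $E'$-constraint with $N(x_1)\ne a_1$, and no $\hat{x}_2$ can meet both $T(x_1,N(x_1),\hat{x}_2)$ (which forces $\hat{x}_2=1$) and the $E_2$-constraint (which forces $\hat{x}_2=0$). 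So the seam really can fail, and the lemma as literally stated needs an extra hypothesis --- for instance that each prefix $(E_1,\ldots,E_j)$ already explains the first $j$ steps, which is exactly how it is invoked inside Method~3, where $E$ is built incrementally from $E_j=F$ downward.
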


  \begin{proof}
  After removing $F'$ from $E$, when validating if $E'$ is an explanation for 
  $\mathcal{E}$, then all features in $(E_1,\ldots,E_{i-1})$ are still fixed to 
  their corresponding values. Assume by contradiction that one of the actions 
  $(a_{1}, a_{2}, \ldots a_{i-1})$ changed, then $(E_1,\ldots,E_{i-1})$ is not 
  an explanation for the first $i-1$ steps of $\mathcal{E}$, contradicting the 
  assumption.  
  Hence, actions $(a_{1}, a_{2}, \ldots a_{i-1})$ were selected.
\end{proof}

\begin{lemma}
\label{method3_second_lemma_for_appendix}
Let $E=(E_1, E_2, \ldots, E_k)$ be a $k$-step explanation for
execution $\mathcal{E}$, and let $1\leq i \leq k$ such that
$\forall j>i$ it holds that $E_j=F$. Let $E'$ be the set obtained by
removing a set of features $F'\subseteq E_i$ from $E_i$, i.e.,
$E'=(E_1, \ldots, E_{i-1},E_i\setminus F', E_{i+1}, \ldots, E_k)$.
In this case,  fixing the  features in $E'$ prevents any changes in the first $i-1$ actions $(a_1,\ldots,a_{i-1})$, and if at least one of the last $k-i+1$ actions $(a_i,\ldots,a_k)$ changed, then $a_i$ must have changed.
\end{lemma}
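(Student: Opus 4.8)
The plan is to split the claim into its two conjuncts and dispatch them in turn. The first conjunct --- that fixing the features in $E'$ prevents the first $i-1$ actions $(a_1,\ldots,a_{i-1})$ from changing --- is already established: passing from $E$ to $E'$ alters only the $i$-th component, so every feature of $E_1,\ldots,E_{i-1}$ remains fixed to its value in $\mathcal{E}_S$, and Lemma~\ref{method3_first_lemma_for_appendix} applies verbatim. So all the new work is in the second conjunct: under $E'$, if some action among $(a_i,\ldots,a_k)$ changes, then $a_i$ changes.

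For the second conjunct I would argue directly. Fix any assignment $x_1,\ldots,x_k \in \mathbb{F}$ satisfying the transition constraints $\bigwedge_{l=1}^{k-1} T(x_l, N(x_l), x_{l+1})$ together with the feature constraints imposed by $E'$, and suppose some action in $(a_i,\ldots,a_k)$ differs from its value in $\mathcal{E}_A$. The key observation is essentially syntactic: since $E'$ changes only the $i$-th component, we still have $E'_j = E_j = F$ for every $j>i$, so the constraint $\bigwedge_{l \in E'_j}(x_j^l = s_j^l)$ is simply $\bigwedge_{l \in F}(x_j^l = s_j^l)$ --- i.e. it pins $x_j = s_j$ for all $j>i$. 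Since $\mathcal{E}$ is a genuine execution with action sequence $\mathcal{E}_A=(a_1,\ldots,a_k)$, this forces $N(x_j)=N(s_j)=a_j$ for every $j>i$. Hence the action that differs cannot be any of $a_{i+1},\ldots,a_k$, so it must be $a_i$, which is exactly the claim.

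I do not expect a real obstacle: the argument is short once one notices that ``$E_j=F$'' literally pins $x_j=s_j$ on the trailing steps. The only thing requiring care is the bookkeeping --- making explicit that moving from $E$ to $E'$ touches only the $i$-th set, so that both the ``prefix fully fixed through step $i-1$'' input to Lemma~\ref{method3_first_lemma_for_appendix} and the ``suffix fully fixed from step $i+1$'' argument above hold unchanged. It is also worth noting, as a remark, that this is precisely what licenses the simplified verification query used in Method~3: since a change at step $i$ is the first change that can occur, checking that $(E_1,\ldots,E_i)$ explains the first $i$ steps suffices, and the network need not be re-encoded for steps $i+1,\ldots,k$.
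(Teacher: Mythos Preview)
Your proposal is correct and follows essentially the same route as the paper: both invoke Lemma~\ref{method3_first_lemma_for_appendix} for the first conjunct, and for the second both rely on the key observation that $E'_j=F$ for $j>i$ forces $x_j=s_j$ and hence $N(x_j)=a_j$. The paper wraps this in a contradiction argument (assume some $a_j$ with $j>i$ changed while $a_i$ did not, then derive that no action could have changed), whereas you argue directly, but the substance is identical.
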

\begin{proof}
When fixing the features of $E'$ to their corresponding values, it holds from 
Lemma~\ref{method3_first_lemma_for_appendix} that actions 
$(a_1,\ldots,a_{i-1})$ were selected. Assume by contradiction that some $a_j$ 
such that $j>i$ changed, and that $a_i$ did not. If $a_i$ did not change, hence 
$(E_1,\ldots,E_i)$ is an explanation for the first $i$ steps of $\mathcal{E}$. 
More formally, $\forall x_1, x_2,\ldots,x_i \in \mathbb{F}$:

\begin{equation}
  \big
  (
  \bigwedge_{l=1}^{i}
    T(x_{l},N(x_{l}),x_{l+1})
    \wedge
    \bigwedge_{l=1}^{i}
    \bigwedge_{r\in E_{l}}(x^{r}_{l}=s^{r}_{l})
    \big)
\to
  \bigwedge_{l=1}^{i}
N(x_l)=a_{l}
  \end{equation}
  Since we know that $a_i$ occurred then we know that 
  $T(x_{i},N(x_{i}),x_{i+1})$ holds, and since all features in steps 
  $i+1,\ldots,k$ are fixed to their original values then fixing them clearly 
  determines $N(x_l)=a_l$ for all $l\geq i$ and that the transitions 
  $T(x_{l},N(x_{l}),x_{l+1})$ also hold. Overall we get that $\forall x_1, 
  x_2,\ldots,x_k \in \mathbb{F}$:
  \begin{equation}
  \big
  (
  \bigwedge_{l=1}^{k}
    T(x_{l},N(x_{l}),x_{l+1})
    \wedge
    \bigwedge_{l=1}^{k}
    \bigwedge_{r\in E_{l}}(x^{r}_{l}=s^{r}_{l})
    \big)
\to
  \bigwedge_{l=1}^{k}
N(x_l)=a_{l}
  \end{equation}
  meaning that $(E_1,\ldots,E_k)$ is an explanation for $\mathcal{E}$. Hence, 
  it is not possible for $a_j$ to be altered, contradicting the 
  assumption.  
\end{proof}

\begin{lemma}
\label{method4_second_lemma_for_appendix}
Let $\mathcal{E}$ be a $k$-step execution, and let
$C=\{C_1,\ldots,C_k\} $ be a minimal contrastive example for
$\mathcal{E}$; i.e., altering the features in $C$ can cause at
least one action in $\mathcal{E}_A$ to change. Let
$1\leq i \leq k$ denote the index of the first action
$a_i$ that can be changed by features in $C$. It holds that: $C_i \neq \emptyset$; $C_j=\emptyset$ for
all $j>i$; and if there exists some $l<i$ such that
$C_l\neq \emptyset$, then all sets $\{C_{l}, C_{l+1},\ldots,C_i\}$
are not empty.
\end{lemma}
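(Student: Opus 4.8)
The plan is to prove the three assertions using two facts throughout. First, by definition $i$ is the least index for which some \emph{$C$-consistent} state-sequence $(x_1,\dots,x_k)$ --- one obeying $T(x_l,N(x_l),x_{l+1})$ for $l<k$ and pinning $x_l^{j'}=s_l^{j'}$ for every $j'\notin C_l$ --- has $N(x_i)\neq a_i$; in particular, minimality of $i$ means every $C$-consistent sequence has $N(x_l)=a_l$ for all $l<i$ (otherwise some $a_l$ with $l<i$ would be changeable). Second, $C$ is minimal, so deleting a single feature from any $C_l$ yields a tuple that is no longer a contrastive example for $\mathcal{E}$. I will handle the claim $C_i\neq\emptyset$, then the contiguity claim, and finally $C_j=\emptyset$ for $j>i$, since the last reuses an idea from the contiguity argument.

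For $C_i\neq\emptyset$: if $C_i=\emptyset$, the feature constraint at step $i$ forces $x_i=s_i$ in every $C$-consistent sequence, so $N(x_i)=N(s_i)=a_i$; hence $a_i$ is not changeable by $C$, a contradiction. For contiguity, suppose $C_l\neq\emptyset$ for some $l<i$ but $C_m=\emptyset$ for some $m$ with $l<m<i$ (such $m$ is strictly below $i$ since $C_i\neq\emptyset$). Fix a $C$-consistent witness $w$ with $N(w_i)\neq a_i$; then $w_m=s_m$ and $N(w_p)=a_p$ for $p<i$. Replace the prefix of $w$ before step $m$ by that of $\mathcal{E}$, obtaining $w'=(s_1,\dots,s_m,w_{m+1},\dots,w_k)$. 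This is again valid: the transitions among $s_1,\dots,s_m$ hold in $\mathcal{E}$; the transition $T(s_m,N(s_m),w_{m+1})$ holds because $w_m=s_m$ and $T(w_m,N(w_m),w_{m+1})$ holds; the remaining transitions are inherited from $w$; $w'$ respects all feature constraints of $C$ and still has $N(w'_i)\neq a_i$ as $i>m$. Since $w'_l=s_l$, $w'$ also witnesses contrastiveness of the strictly smaller tuple obtained by deleting one feature from $C_l$, contradicting minimality; hence all of $C_l,\dots,C_i$ are nonempty.

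For $C_j=\emptyset$ when $j>i$: the key reduction is that it suffices to prove $C_{i+1}=\emptyset$. Once $C_{i+1}=\emptyset$, a witness $w$ with $N(w_i)\neq a_i$ has $w_{i+1}=s_{i+1}$, so $w'=(w_1,\dots,w_i,s_{i+1},s_{i+2},\dots,s_k)$ is $C$-consistent --- $T(w_i,N(w_i),s_{i+1})$ is inherited from $w$ and the later transitions come from $\mathcal{E}$ --- still changes $a_i$, and agrees with $\mathcal{E}$ at every step $>i$; hence it witnesses contrastiveness of $(C_1,\dots,C_i,\emptyset,\dots,\emptyset)$, a proper sub-tuple of $C$ as soon as some $C_j\neq\emptyset$ for $j>i$, contradicting minimality. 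To rule out $C_{i+1}\neq\emptyset$: for each $g\in C_{i+1}$, minimality forces every $C$-consistent sequence changing $a_i$ to also disagree with $s_{i+1}$ at feature $g$ (else that sequence would witness contrastiveness of $C$ with $g$ deleted from $C_{i+1}$); one then argues, using that the reactive system's transition relation lets the perturbed execution be continued consistently with $\mathcal{E}$ from step $i+1$ onward, that some witness can be chosen whose step-$(i+1)$ state agrees with $s_{i+1}$ on all of $C_{i+1}$, giving the contradiction.

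I expect the $C_j=\emptyset$ claim to be the main obstacle --- concretely, ruling out $C_{i+1}\neq\emptyset$, i.e., showing that a $C$-consistent sequence whose action has already changed at step $i$ can be ``rerouted'' so as to follow $\mathcal{E}$ at all later steps. The prefix-replacement trick used for contiguity is painless because prefixes of $\mathcal{E}$ are themselves valid partial executions consistent with any $C$; the analogous statement for suffixes after step $i$ is not automatic, since $T(x_i,N(x_i),s_{i+1})$ need not hold when $N(x_i)\neq a_i$, so this step must appeal to the structure of the underlying reactive system (and is where any additional well-formedness hypotheses on $T$ would be needed).
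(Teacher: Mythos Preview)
Your proofs of $C_i\neq\emptyset$ and of the contiguity claim are correct. The argument for $C_i\neq\emptyset$ is in fact more direct than the paper's: the paper first establishes $C_j=\emptyset$ for $j>i$ and then invokes the preceding lemma, whereas you simply note that $C_i=\emptyset$ forces $x_i=s_i$ and hence $N(x_i)=a_i$. Your explicit prefix-replacement construction for contiguity is the paper's ``$E_d=F$ blocks information flow past step $d$'' argument, made concrete at the level of a witness sequence.

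You have, however, correctly located a genuine gap in the claim $C_j=\emptyset$ for $j>i$ --- and the paper's own proof does not close it either. The paper's argument is a single step: since $(C_1,\dots,C_i)$ is a contrastive example for the first $i$ steps of $\mathcal{E}$, the tuple $(C_1,\dots,C_i,\emptyset,\dots,\emptyset)$ is a contrastive example for all of $\mathcal{E}$; if some later $C_j$ is nonempty this is strictly smaller than $C$, contradicting minimality. But under the defining equation for multi-step contrastive examples, promoting a witness $(x_1,\dots,x_i)$ for the first $i$ steps to one for all $k$ steps with $x_{i+1}=s_{i+1},\dots,x_k=s_k$ requires $T(x_i,N(x_i),s_{i+1})$ to hold, and since $N(x_i)\neq a_i$ on that witness, nothing in the general setup guarantees this transition. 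Your ``rerouting'' sketch and your remark that ``any additional well-formedness hypotheses on $T$ would be needed'' identify exactly this point. So the paper's proof and yours leave the same step open; the difference is only that you flag it explicitly.
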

\begin{proof}
Since $a_i$ denotes the first action that can be potentially changed by 
altering the values of $C$, then all actions 
$(a_1,...,a_{i-1})$ were selected, and thus $(F\setminus C_1,\ldots,F\setminus C_{i-1})$ is 
an explanation for the first $i-1$ steps of $\mathcal{E}$. It also holds that 
$(C_1,\ldots,C_{i})$ is a contrastive example for the first $i$ steps of 
$\mathcal{E}$, since altering its values can cause $a_i$ to change.

First, assume by contradiction that there exists some $C_j\neq \emptyset$ for 
some $j>i$. Since $(C_1,\ldots,C_i)$ is a contrastive example for the first $i$ 
steps of $\mathcal{E}$, then there exists some contrastive example 
$C'=(C_1,\ldots,C_i, \emptyset \ldots, \emptyset)$ for $\mathcal{E}$. Since 
$|C'|<|C|$, it thus holds that $C=(C_1,\ldots,C_k)$ is not minimal. Hence, 
$C_j=\emptyset$ for all $j>i$. 

Second, assume by contradiction that $C_i=\emptyset$. Since we proved that 
$C_j=\emptyset$ for all $j>i$ then 
$C=\{C_1,\ldots,C_{i-1},\emptyset,\ldots,\emptyset\}$. Let there be some 
$E=\{E_1,\ldots,E_k\}$ such that for all $1\leq i \leq k$ it holds that 
$E_i=F\setminus C_i$. Since $C_j=\emptyset$ for all $j>i$ then it holds that 
for all $j>i$, $E_j=F$. Since we also know that $(F\setminus 
C_1,\ldots,F\setminus C_{i-1})$ is an explanation for the first $i-1$ steps 
then it follows from Lemma~\ref{method3_second_lemma_for_appendix} that when 
fixing the 
values of $E$, and allowing the values of $C$ to alternate freely, then it 
holds that if some 
$a_l$ changed such that $l>i-1$ then $a_{i-1}$ must also change. But we know 
that $a_i$ was changed and that $a_{i-1}$ was selected, contradicting the 
assumption.  
Hence, $C_i\neq \emptyset$.  

Third, assume that there exists some $l<i$ such that $C_l\neq\emptyset$. Assume 
by contradiction that not all sets $\{C_{l}, C_{l+1},\ldots,C_i\}$ are not 
empty, i.e, there exists some $C_d=\emptyset$ such that $l\leq d\leq i$. Since 
$C_l\neq\emptyset$ and $C_i\neq\emptyset$, it follows that $l<d<i$. Let there 
be some $E=(F\setminus C_1, \ldots, F\setminus C_k)$. Since $(F\setminus 
C_1,\ldots,F\setminus C_{i-1})$ is an explanation for the first $i-1$ steps of 
$\mathcal{E}$, and $d\leq i-1$, then $(F\setminus C_1,\ldots,F\setminus C_{d})$ 
is an explanation for the first $d$ steps. Thus, fixing the features in $E$ to 
their corresponding values (and allowing the features in $C$ to alternate 
arbitrarily)  forces the first $d$ actions to occur. 
Since $C_d=\emptyset$ then $E_d=F$, meaning it is entirely fixed, and thus 
alternating the values of any one of the sets: $(C_1,\ldots,C_{d-1})$ clearly 
cannot affect any of the actions $(a_d,\ldots,a_k)$. Particularly, since $l<d$, 
alternating the values of $C_l$ cannot cause actions $(a_d,\ldots,a_k)$ to 
change. Hence, there exists some 
$C'=(C_1,\ldots,C_{l-1},\emptyset,C_{l+1},\ldots,C_k)$, which is also a 
contrastive example for $\mathcal{E}$. $|C'|<|C|$, and hence, it again holds 
that $C$ is not minimal, contradicting the 
assumption. 
  
\end{proof}

%



\section{Deep Reinforcment Learning}
\label{appendix-deep-reinforcment-learning}

Deep reinforcement learning (DRL)~\cite{Li17} is a specific paradigm in 
machine
learning that seeks to learn models that will be deployed within complex
and reactive environments. In DRL, a DNN \textit{agent} is trained to
learn a \emph{policy} $\pi$, that maps an observed \emph{state} $s$ to
an \emph{action} $a$. The policy can be either deterministic or
stochastic, depending on the chosen setting and the various learning
algorithms.
During training, a \textit{reward} $r_t$ is assigned to the agent at each 
time-step $t\in{0,1,2...}$, based on the action $a_t$ performed at time-step 
$t$. Various DRL training algorithms leverage the reward 
differently~\cite{SuBa18, SuMcSi99, ShWoDh17}. However, the final goal is to 
find the optimal policy $\pi$ that maximizes the \textit{expected cumulative 
	discounted reward}. 
In recent years, DRL-trained agents have demonstrated promising results in a 
large variety of tasks, 
from game playing~\cite{MnKaSi13} to robotic navigation~\cite{MaCoFa21b}, and 
more. Since DRL-based agents are deployed 
within reactive systems --- various DRL verification tools \emph{unroll} the 
DRL agent 
for a finite number of steps, before verifying the property of interest among 
these encoded time-steps~\cite{AmScKa21, ElKaKaSc21}.

\section{Training the DRL Models}
\label{sec:appendix:Training}
In the following section, we go into further detail about the hyperparameters 
applied during training and the specific implementation methods used. The 
training process was executed using the \textit{BasicRL} 
baselines\footnote{\url{https://github.com/d-corsi/BasicRL}}.


\mysubsection{General parameters and algorithmic implementation.}
For the training, we exploited the Proximal Policy Optimization (PPO) algorithm 
based on an Actor-Critic structure. The strategy for the critic's training is a 
pure Monte Carlo approach without temporal difference rollouts. The 
actor network is updated periodically after a sequence of data collection 
episodes. 
The actor update rule follows the original implementation of~\cite{ShWoDh17}. 
For reproducibility, we set the same random seed for the Random,
NumPy and TensorFlow Python modules.

\mysubsection{Parameters for the GridWorld environment.} 
\begin{itemize}
	\item \textit{memory limit}: None
	\item \textit{gamma}: 0.99
	\item \textit{trajectory update frequency:} 10
	\item \textit{trajectory reduction strategy:} sum
	\item \textit{actor-network size:} 2 layers of 8 neurons each
	\item \textit{critic batch size:} 128
	\item \textit{critic epochs:} 60
	\item \textit{critic-network size:} same as actor
	\item \textit{PPO clip:} 0.2
	\item \textit{reward}: $+1$ for reaching the target and $0$ 
	otherwise
	\item \textit{random seeds:} $[207, 700]$
\end{itemize}

\mysubsection{Parameters for the TurtleBot environment.} 
\begin{itemize}
	\item \textit{memory limit}: None
	\item \textit{gamma}: 0.99
	\item \textit{trajectory update frequency:} 10
	\item \textit{trajectory reduction strategy:} sum
	\item \textit{actor-network size:} 2 layers of 32 neurons each
	\item \textit{critic batch size:} 128
	\item \textit{critic epochs:} 60
	\item \textit{critic-network size:} same as actor
	\item \textit{PPO clip:} 0.2
	\item \textit{reward}: same as~\cite{AmCoYeMaHaFaKa23}
	\item \textit{random seeds:} $[49, 80, 99, 211, 233]$
\end{itemize}

All original agents can be found in our publicly-available artifact 
accompanying this paper~\cite{ArtifactRepository}. 

\section{Property Constraints \& Transition Functions}
\label{sec:appendix:TransitionEncoding}

Next, we provide details regarding the transition functions of both benchmarks. 
This, in turn, defined the queries which we dispatched to our backend verifier 
(we used \marabou~\cite{KaHuIbJuLaLiShThWuZeDiKoBa19}, which was previously 
used in additional 
settings~\cite{AmWuBaKa21,AmZeKaSc22,AmFrKaMaRe23,AmMaZeKaSc23, 
CaKoDaKoKaAmRe22,CoYeAmFaHaKa22, ReKa22,OsBaKa22, LaKa21, JaBaKa20}). We also 
note 
that in 
order to speed verification for the GridWorld queries, we also configured Marabou to incorporate the 
\textit{Gurobi} LP solver\footnote{\url{https://www.gurobi.com/}}.

\subsection{GirdWorld}

\mysubsection{Inputs.}
The DRL-based agent has $8$ inputs in total. These represent the location of 
the agent and the target, as well as discrete sensor reading values indicating 
the 
closest obstacle in each direction. More specifically, the DRL-based agent 
receives: 

\begin{itemize}
	\item $2$ inputs ($x_0$,$x_1$) representing the discrete 2D coordinates of 
	the 
	\emph{agent}.
	
	\item $2$ inputs ($x_2$,$x_3$) representing the discrete 2D coordinates of 
	the 
	\emph{target}.
	
	\item $4$ input \emph{sensor readings} ($x_4$,$x_5$,$x_6$,$x_7$) indicating 
	if the 
	agent 
	senses an obstacle in one of four directions: \up, \down, \leftOutput, or 
	\rightOutput. 
	
\end{itemize}

\mysubsection{Outputs.}
The agent has $4$ outputs, each representing one of four possible actions to 
move in the current step: \up, \down, \leftOutput, or \rightOutput.

\mysubsection{Trivial Bounds.} 
\begin{itemize}
	
	\item 
	all the DNN's inputs are normalized to the range $[0, 1]$.
	
	\item 
	the \emph{location} inputs (i.e., $x_i$ for $i\in[0,1,2,3]$) have a value 
	$v\in\{0.1, 0.2, \ldots, 1\}$. Each of these values 
	represents a separate location on one of the axes of the $10 X 10$ grid.
	
	\item 
	the \emph{sensor reading} inputs (i.e., $x_i$ for $i\in[4,5,6,7]$) have a 
	value $v\in\{0, \frac{1}{2}, 
	1\}$ indicating if, and how far, an obstacle is in the relevant direction. 
	For 
	example, if for the \rightOutput input sensor reading, the value is $1$, 
	then if the 
	agent will decide to move \rightOutput, it will collide; if the sensor 
	reading 
	value is $\frac{1}{2}$, then there is an obstacle two steps to the right 
	(and 
	hence two subsequent \rightOutput actions will result in a collision). 
	If the sensor reading is zero, then the closest obstacle on the right 
	direction is 
	at least three steps away from the current state.
	
\end{itemize}

\mysubsection{Transitions.} We will elaborate on the transitions for moving 
in a given direction $d\in\{\leftOutput,\rightOutput, \up, \down \}$  
(the transition function is symmetric along all axes and so it encodes all 
possible transitions). For a movement in direction $d$ at some time-step 
$t$:

	\begin{itemize}

		\item 
		\textit{agent's location on the axis matching the direction d: } 
		$x_{d-axis}^{t} = x_{d-axis}^{t+1} \pm 0.1$ (the sign depends on d)
		
		\item 
		\textit{agent's location on the axis orthogonal to the direction d: } 
		$x_{orthogonal-d-axis}^{t} = x_{orthogonal-d-axis}^{t+1}$
		
		\item 
		\textit{target's location does not change: } 
		$x_{2}^{t} = x_{2}^{t+1}$, 	$x_{3}^{t} = x_{3}^{t+1}$
		
		\item 
		\textit{obstacle sensor reading in the direction of movement: } \\
		$x_{sensor-d}^{t} \leq x_{sensor-d}^{t+1} \leq x_{sensor-d}^{t} + 
		\frac{1}{2}$
		
		\item 
		\textit{obstacle sensor reading in the direction of movement: } \\
		$x_{sensor-d}^{t} + x_{sensor-d}^{t+1} \in \{0,\frac{1}{2}, 1\}$
		
		\item 
		\textit{obstacle sensor reading in the opposite direction of movement: 
		} \\
		$x_{sensor-opposite-d}^{t} - \frac{1}{2} \leq 
		x_{sensor-opposite-d}^{t+1} 
		\leq x_{sensor-opposite-d}^{t} $
		
		\item 
		\textit{obstacle sensor reading in the opposite direction of movement: 
		} \\
		$x_{sensor-opposite-d}^{t} + x_{sensor-opposite-d}^{t+1} \in 
		\{0,\frac{1}{2}, 1\}$

	\end{itemize}

\subsection{TurtleBot}

\mysubsection{Inputs.}
The DRL-based agent has $9$ inputs in total:
\begin{itemize}
	\item $7$ inputs: ($x_0, x_1,\ldots, x_6$) representing the \emph{lidar 
	sensors}. Each set of subsequent inputs represents lidar sensors aimed at 
	$30\degree$ 
	between one another.

	\item $1$ input ($x_7$) indicating the \emph{angle} between the agent 
	and 
	the target. 

	\item $1$ input ($x_8$) indicating the \emph{distance} between the agent 
	and 
	the target.
\end{itemize}

\mysubsection{Outputs.}
The agent also has $3$ outputs: $<y_0, y_1, y_2>$, that correspond to the 
actions $<\forwardOutput, \leftOutput, \rightOutput>$.

\mysubsection{Trivial Bounds.} 
all the DNN's inputs are normalized 
to the range $[0, 1]$. 

\mysubsection{Transitions.} 
For simplicity, we focused on properties in which each one of the steps 
(except, perhaps, the last) is either \rightOutput or \leftOutput 
(see~\cite{AmCoYeMaHaFaKa23}):
\begin{itemize}
	\item \rightOutput action (output at time-step $t$):  
	\begin{itemize}
		\item \textit{bounds}:
		$x_i^{t} \in [0.2, 1] 
		\text{ 
			for } i = 
		[0, 1, 2, 3, 4, 5, 6, 8] \text{ and }\land 
		x_7^{t} \in 
		[0, 1]$
		\item \textit{lidar ``sliding 
		window''}: 		
		$\text{ 
			for } i = 
		[1, 2, 3, 4, 5, 6]$: $x_i^{t} = x_{i-1}^{t+1}$
		
		\item \textit{turn $30\degree$ to the right}: 
		$x_7^{t+1}=x_7^{t}-\frac{1}{12}$
		
		\item \textit{distance to target does not change}:
		$x_8^{t}=x_8^{t+1}$
	\end{itemize}

	\item \leftOutput action (output at time-step $t$):  
\begin{itemize}
	\item \textit{bounds}:
	$x_i^{t} \in [0.2, 1] 
	\text{ 
		for } i = 
	[0, 1, 2, 3, 4, 5, 6, 8] \text{ and }\land 
	x_7^{t} \in 
	[0, 1]$
	\item \textit{lidar ``sliding 
		window''}: 		
	$\text{ 
		for } i = 
	[1, 2, 3, 4, 5, 6]$: $x_{i-1}^{t} = x_{i}^{t+1}$
	
	\item \textit{turn $30\degree$ to the left}: 
	$x_7^{t+1}=x_7^{t}+\frac{1}{12}$
	
	\item \textit{distance to target does not change}:
	$x_8^{t}=x_8^{t+1}$
\end{itemize}

\end{itemize}

\section{Supplementary Results}
\label{sec:appendix:supplementaryResults}
\mysubsection{TurtleBot Results.} Table.~\ref{table:turtleBot} presents the full results of the four aforementioned approaches on the TurtleBot benchmark.

\begin{table} [H]
	\centering
\caption{\textit{TurtleBot}: columns from left to right: experiment type, 
	method name, 
	method number, time and size of the returned explanation (out of 
	experiments 
	that 
	terminated), the percent of solved instances (the rest timed out), and a 
	column 
	indicating whether the 
	explanation is guaranteed to be minimal. The bold highlighting indicates 
	the 
	method that generated the 
	explanation with the optimal size.}
\label{table:turtleBot}

	\centering
 	\scalebox{1.0}{

	\begin{tabular}{c||c||c||c||ccc||c||c} 
		\hline
		\multirow{2}{*}{\textbf{setting}}                                       
		& \multirow{2}{*}{\textbf{experiment}} & \multirow{2}{*}{\textbf{M}} 
		& \textbf{time (s)} & \multicolumn{3}{c||}{\textbf{size}}         & 
		\multirow{2}{*}{\begin{tabular}[c]{@{}c@{}}\textbf{solved}\\\textbf{(\%)}\end{tabular}}
		& 
		\multirow{2}{*}{\begin{tabular}[c]{@{}c@{}}\textbf{guaranteed}\\\textbf{minimality}\end{tabular}}
		\\
		&                                      &                             & 
		\textbf{avg.}     & \textbf{min} & \textbf{avg.} & \textbf{max} 
		&                                                                       
		&
		
		\\ 
		\hline
		\multirow{3}{*}{\begin{tabular}[c]{@{}c@{}}minimal\\(local)\end{tabular}}
		
		& one-shot                             & 1                           & 
		1,084             & 2            & 6             & 
		12           & 
		91.2                                                                    
		&
		
		\greencheck                                                             
		\\
		
		& independent                          & 
		2                          
		& 1                 & 4            & 24            & 
		54           & 
		100                                                                     
		&
		
		\redxmark                                                               
		\\
		
		& \textbf{incremental}                          & 
		\textbf{3}                           & 
		\textbf{764}               & \textbf{2}            & 
		\textbf{6}             & 
		\textbf{10}           & 
		\textbf{97.1}                                                           
		         
		&
		
		\greencheck                                                             
		\\
		
		\hline
		\multirow{4}{*}{\begin{tabular}[c]{@{}c@{}}minimum\\(global)\end{tabular}}
		
		& one-shot                             & 1                           & 
		2,228             & 2            & 6             & 
		10           & 
		27.1                                                                    
		&
		
		\greencheck                                                             
		\\
		
		& independent                          & 
		2                           & 
		77                & 2            & 17            & 
		37           & 
		100                                                                     
		&
		
		\redxmark                                                               
		\\
		
		& incremental                          & 3                           & 
		637               & 2            & 6             & 
		11           & 
		28.8                                                                    
		&
		
		\greencheck                                                             
		\\
		& \textbf{reversed}                              & 
		\textbf{4}                           & 
		\textbf{267}               & \textbf{2}            & 
		\textbf{5}             & 
		\textbf{10}           & 
		\textbf{96.8}                                                           
		         
		&
		
		\greencheck                                                             
		\\
		\hline
  
	\end{tabular}
 }
\end{table}

\mysubsection{Full Evaluation by Execution Size.} We present here the full 
analysis of the results, evaluated under different execution sizes (number of 
steps) both for the minimal and minimum explanation settings, for the two 
benchmarks.  Fig.~\ref{fig:res_gridworld_local} presents the results for 
GridWorld under the minimal explanation setting, and 
Fig.~\ref{fig:res_gridworld_full} for the minimum explanation setting.  
Fig.~\ref{fig:res_turtlebot_local} presents the results for TurtleBot under the 
minimal explanation setting, and Fig.~\ref{fig:res_turtlebot_full} for the 
minimum explanation setting

\begin{figure}[H]
	\centering
	\captionsetup{justification=centering}
	\begin{center}
		\includegraphics[width=0.35\linewidth]{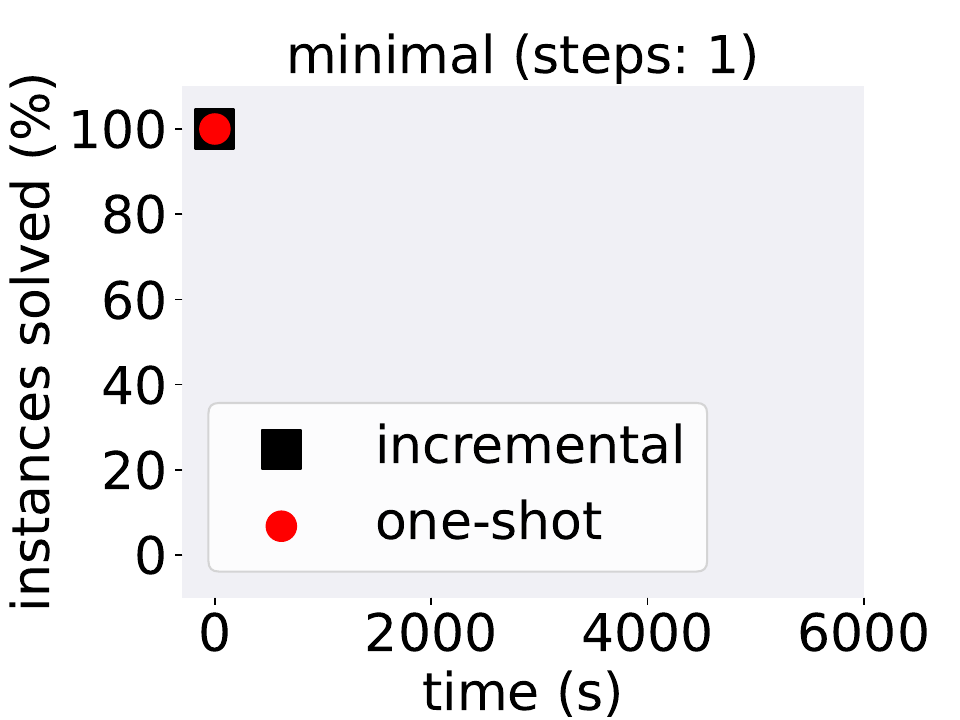}
		\includegraphics[width=0.35\linewidth]{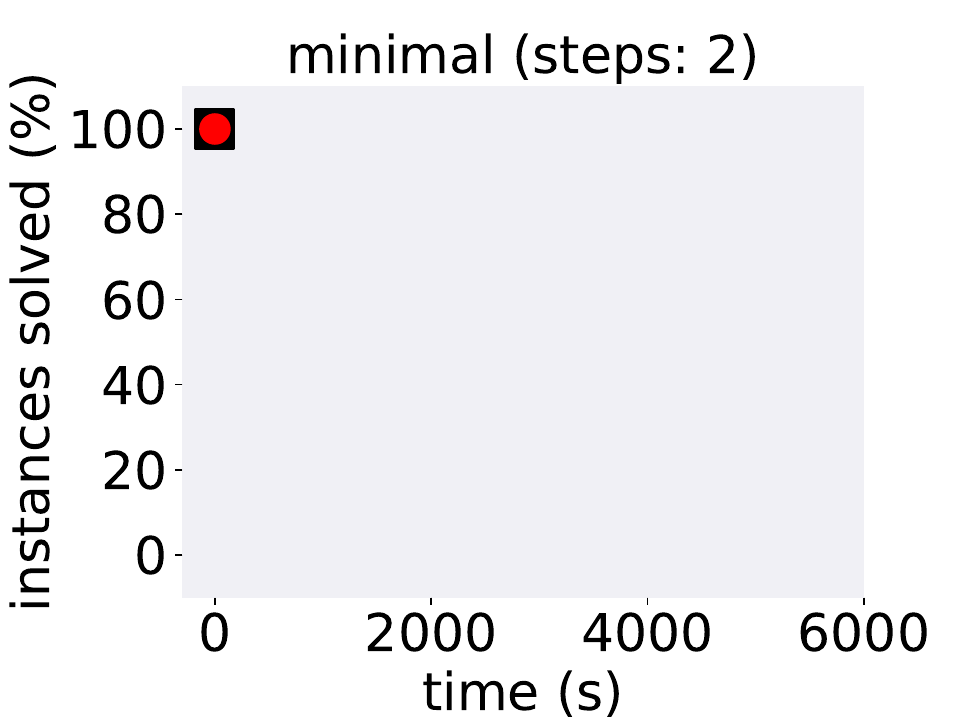}
		\includegraphics[width=0.35\linewidth]{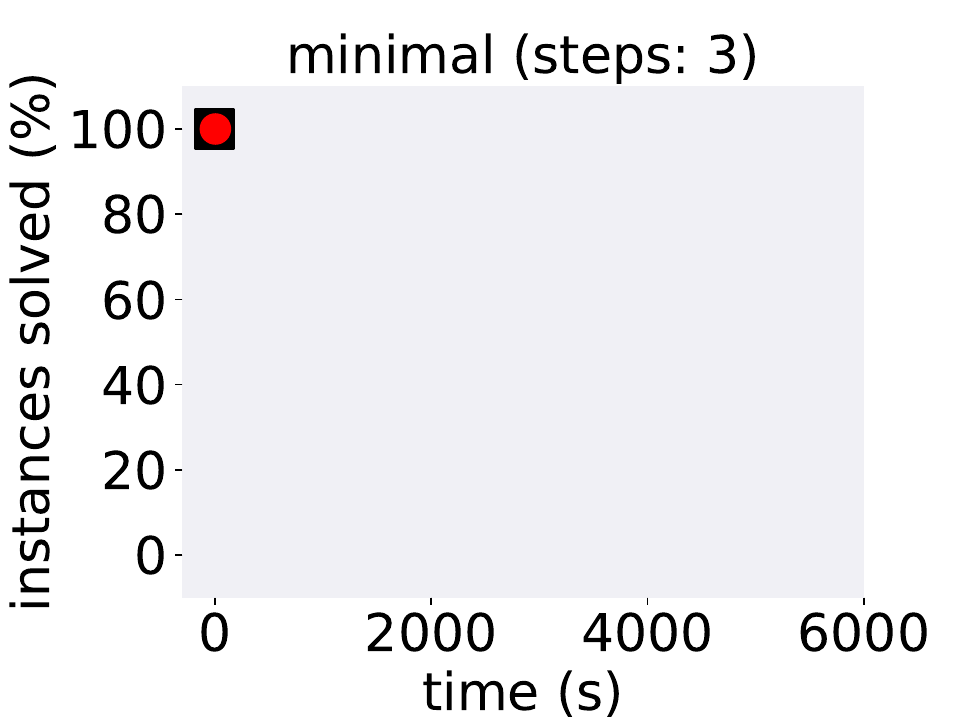}
		\includegraphics[width=0.35\linewidth]{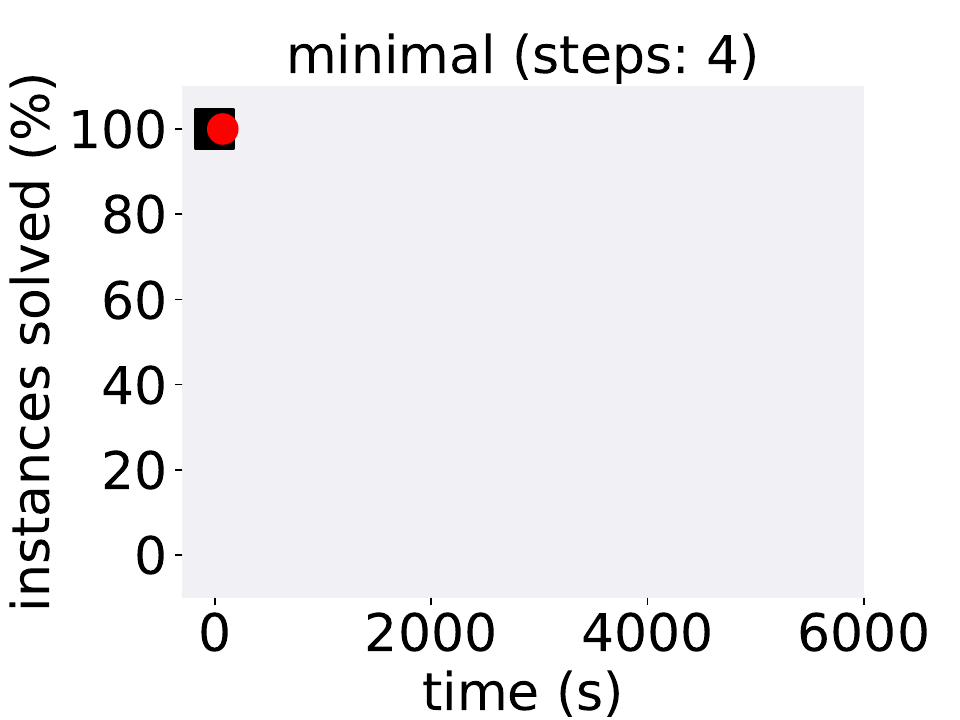}
		\includegraphics[width=0.35\linewidth]{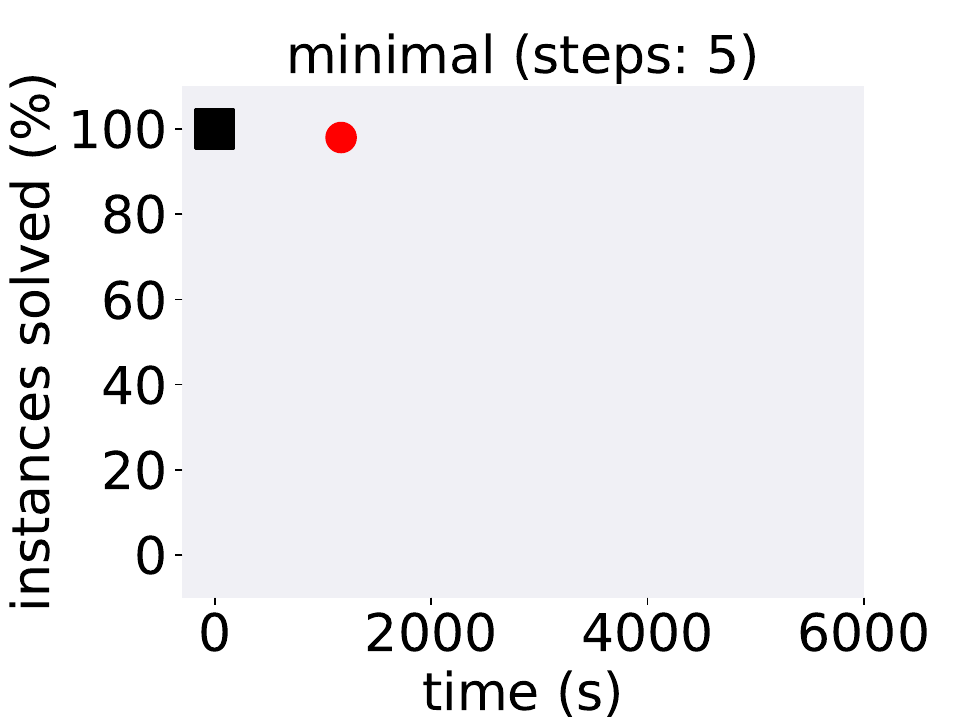}
		\includegraphics[width=0.35\linewidth]{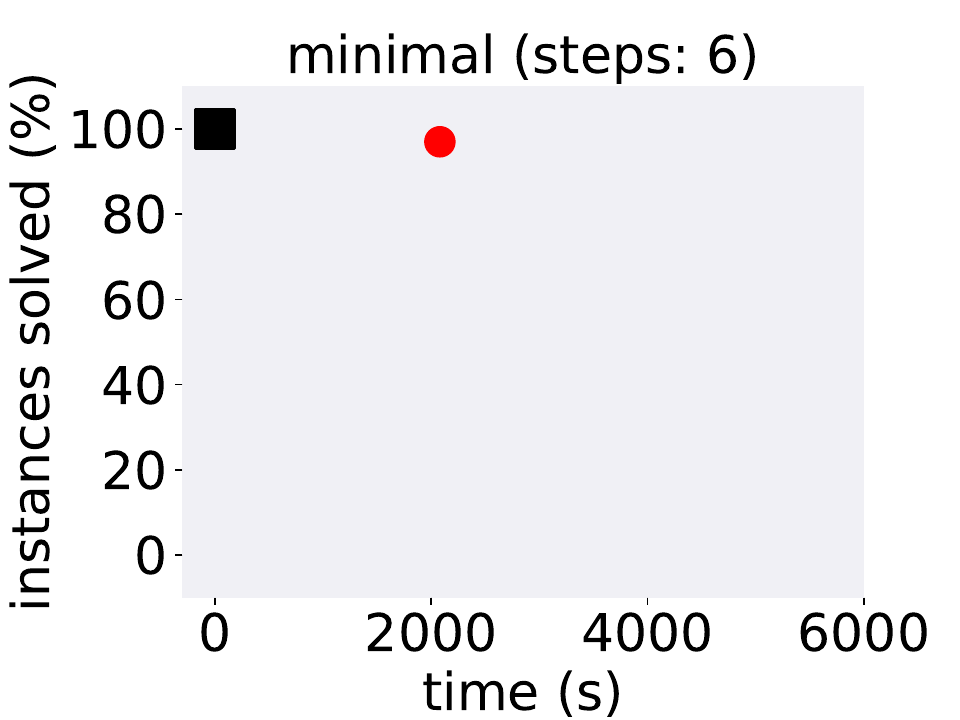}
		\includegraphics[width=0.35\linewidth]{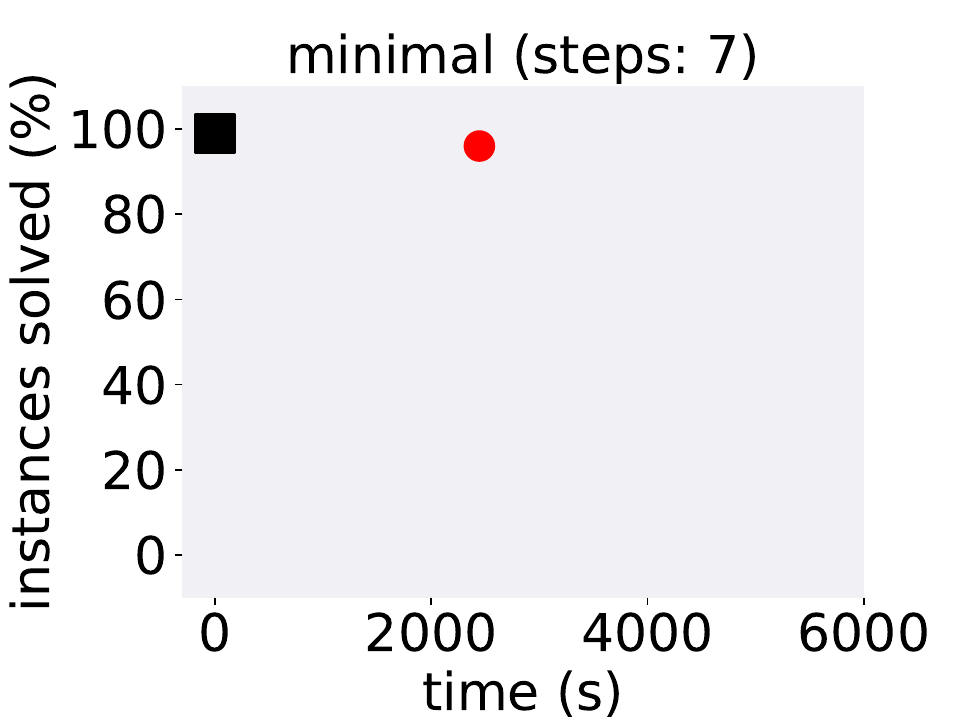}
		\includegraphics[width=0.35\linewidth]{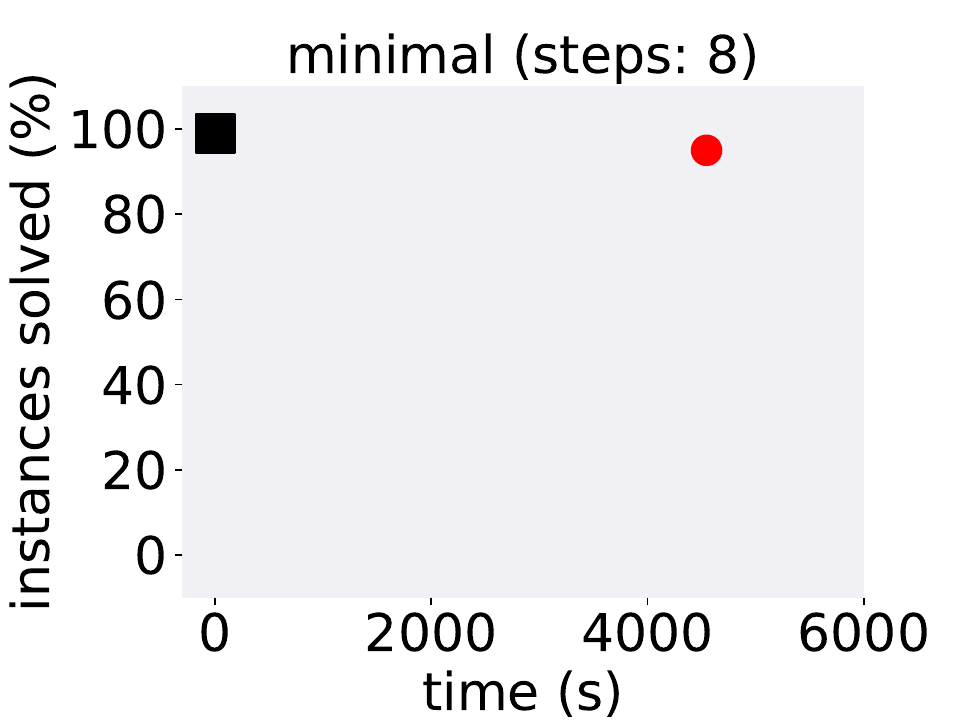}
	\end{center}
		\caption{\textit{GridWorld}: solved instances of minimal explanation 
		search, by (accumulative) time, and $1\leq k \leq 8$ steps.}
	\label{fig:res_gridworld_local}
\end{figure}

\begin{figure}[H]
	\centering
	\captionsetup{justification=centering}
	\begin{center}
		\includegraphics[width=0.35\linewidth]{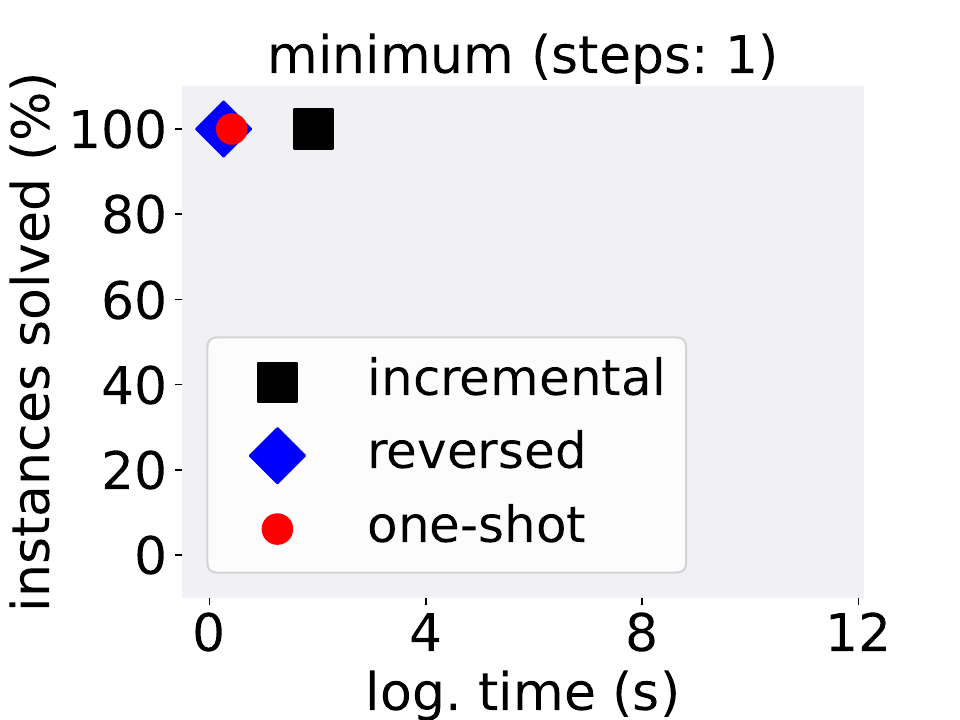}
		\includegraphics[width=0.35\linewidth]{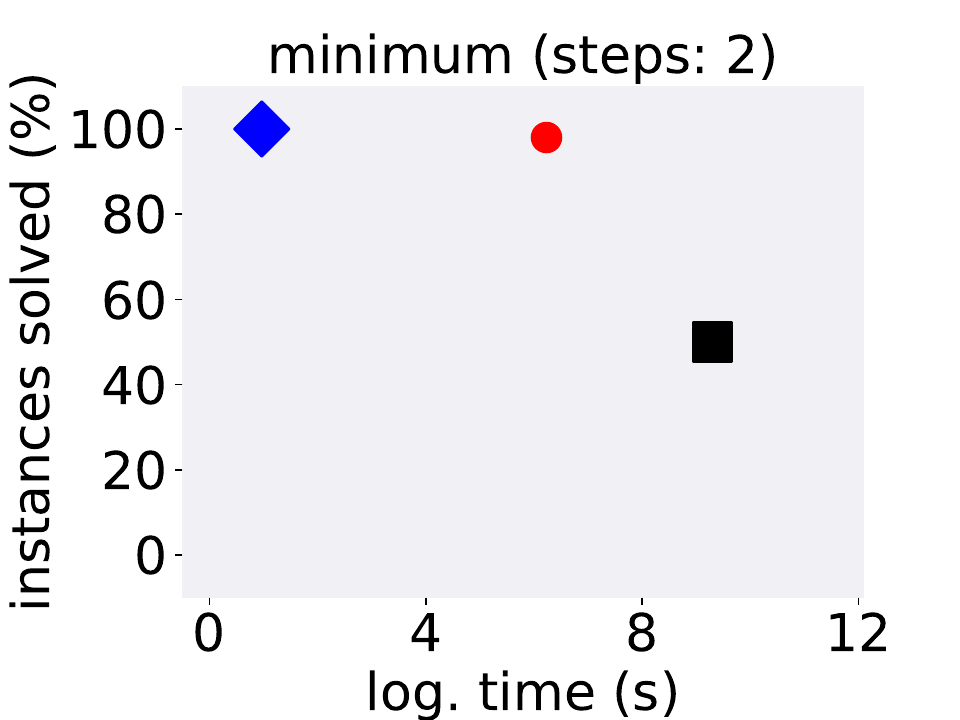}
		\includegraphics[width=0.35\linewidth]{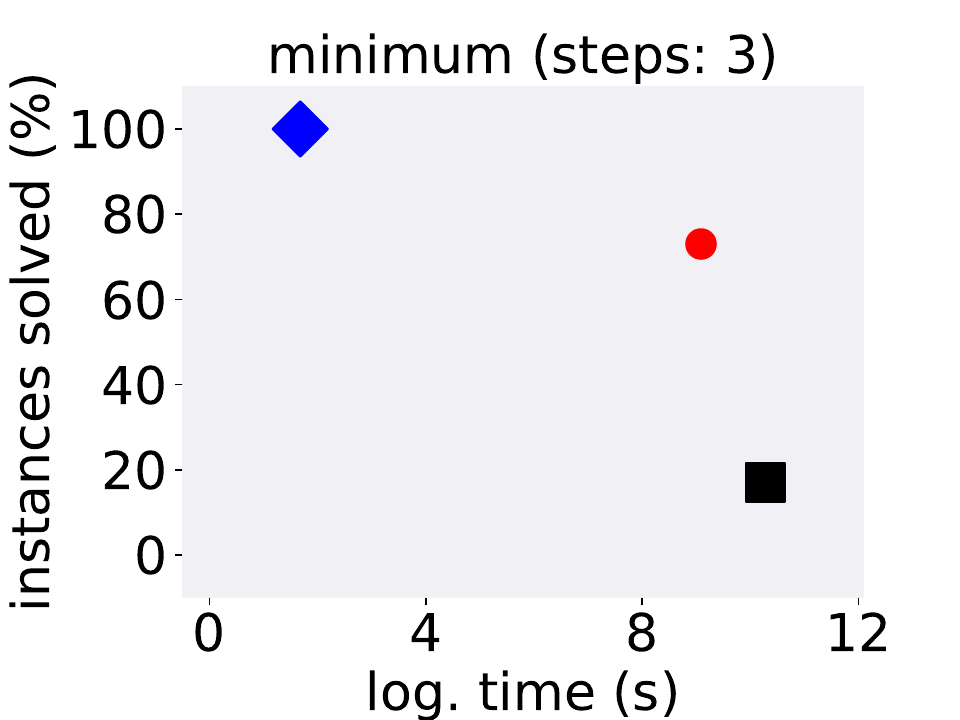}
		\includegraphics[width=0.35\linewidth]{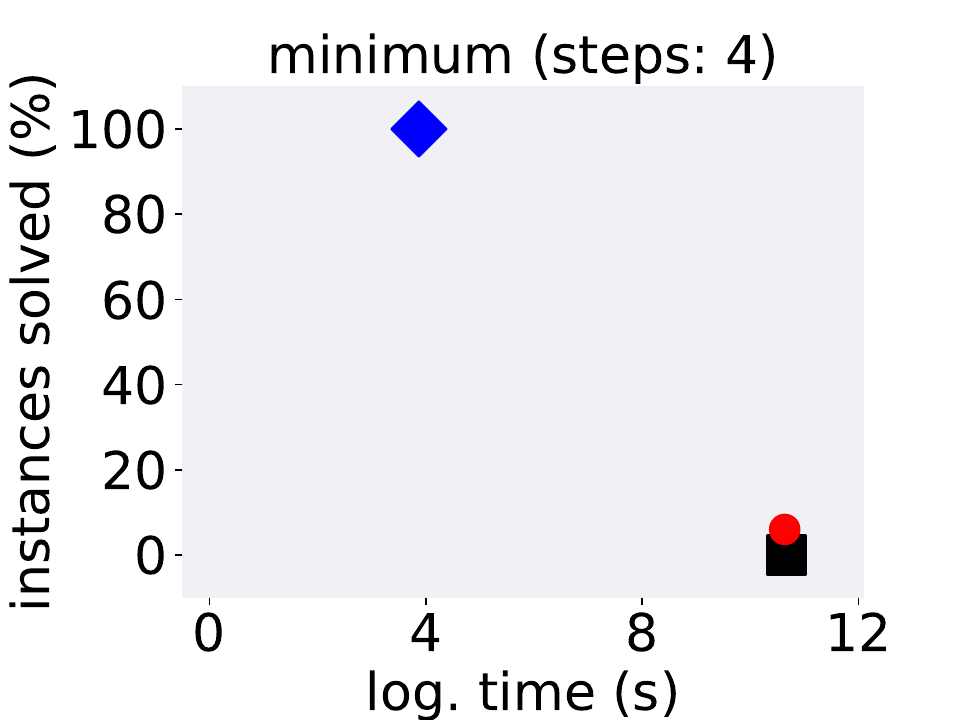}
		\includegraphics[width=0.35\linewidth]{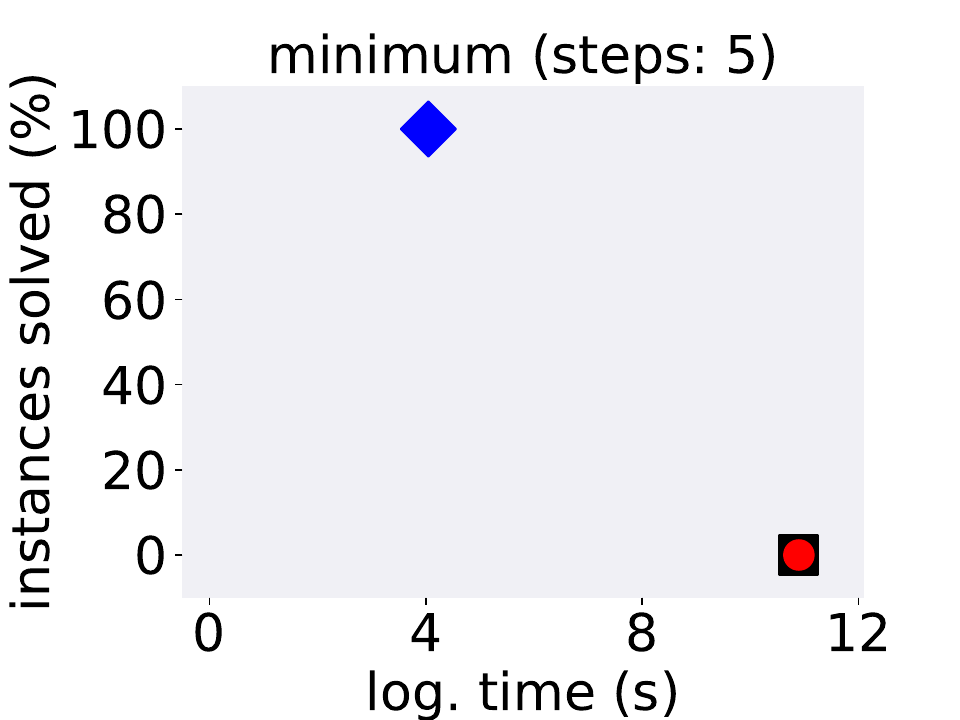}
		\includegraphics[width=0.35\linewidth]{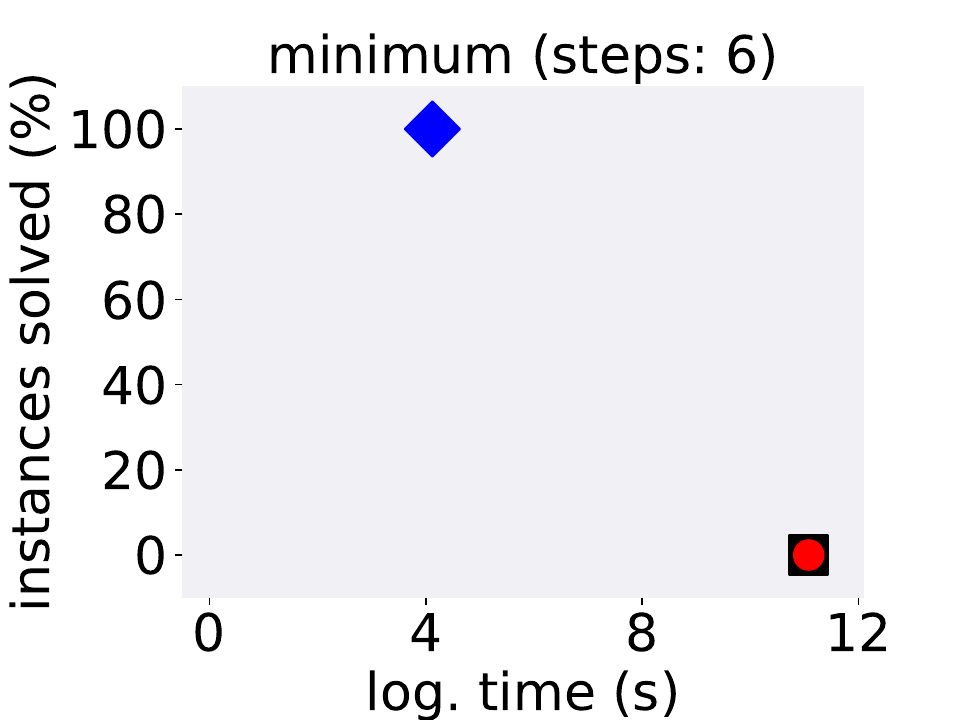}
		\includegraphics[width=0.35\linewidth]{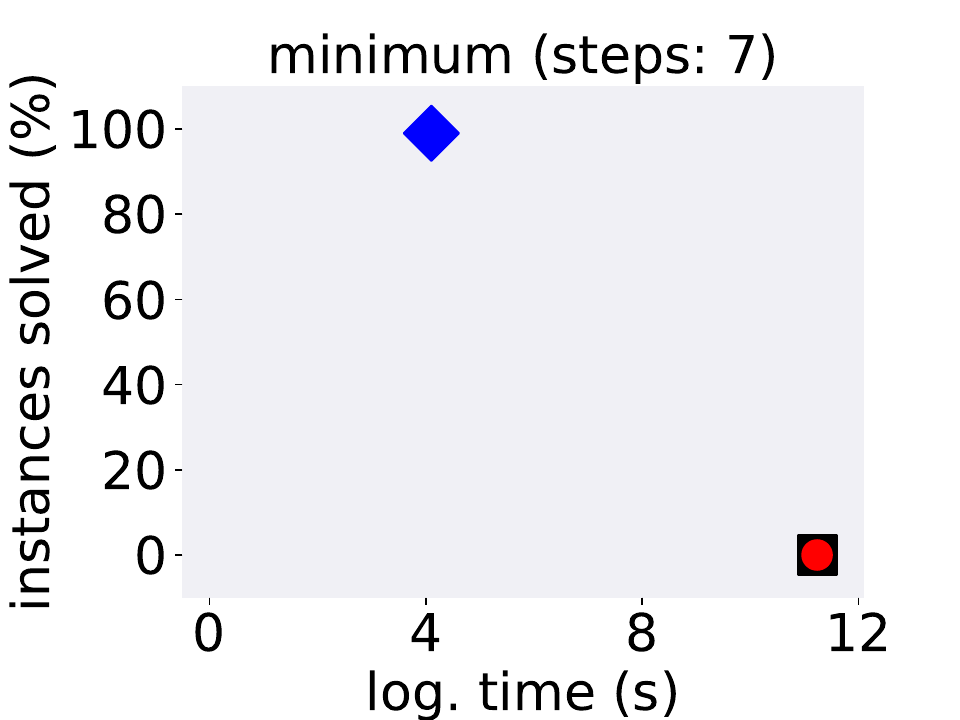}
		\includegraphics[width=0.35\linewidth]{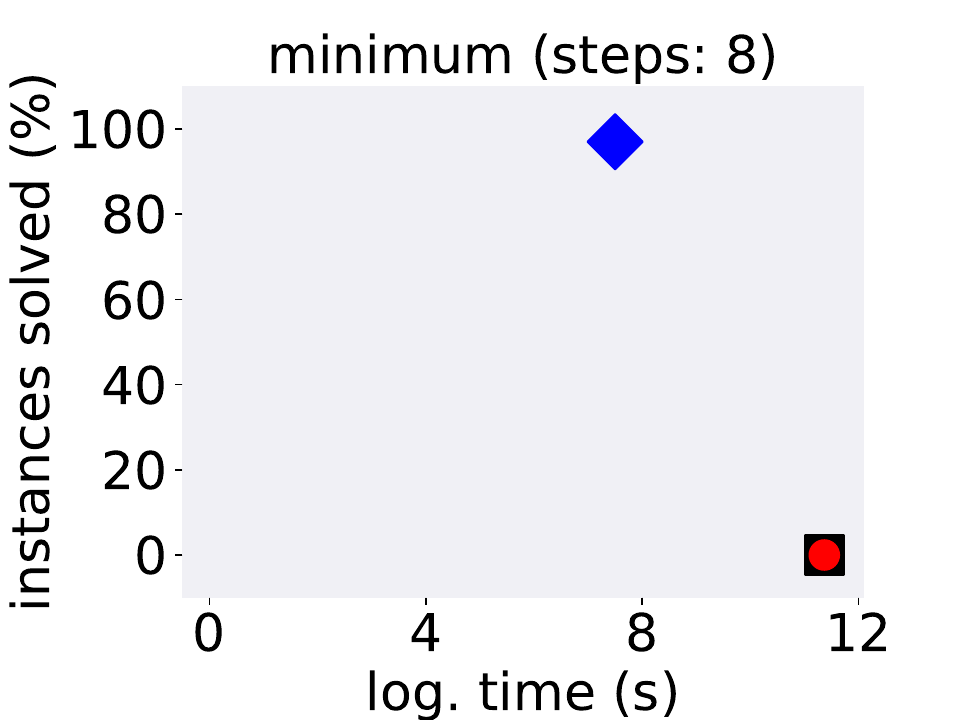}
	\end{center}
			\caption{\textit{GridWorld}: solved instances of minimum 
			explanation 
		search, by (accumulative) time, and $1\leq k \leq 8$ steps.}
	\label{fig:res_gridworld_full}
\end{figure}

\begin{figure}[H]
	\centering
	\captionsetup{justification=centering}
	\begin{center}
		\includegraphics[width=0.35\linewidth]{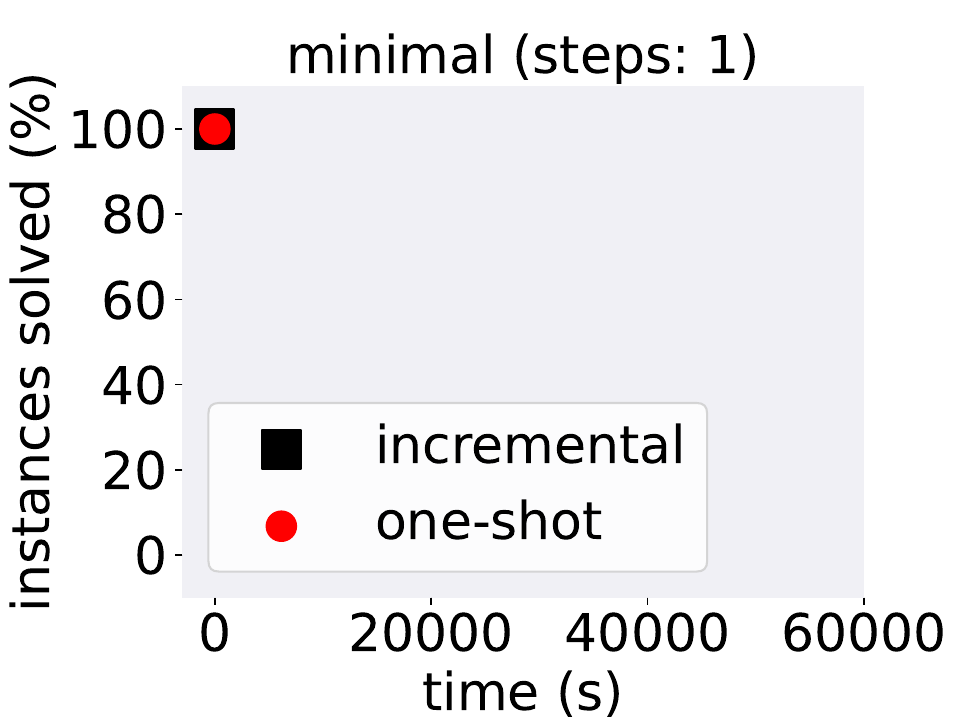}
		\includegraphics[width=0.35\linewidth]{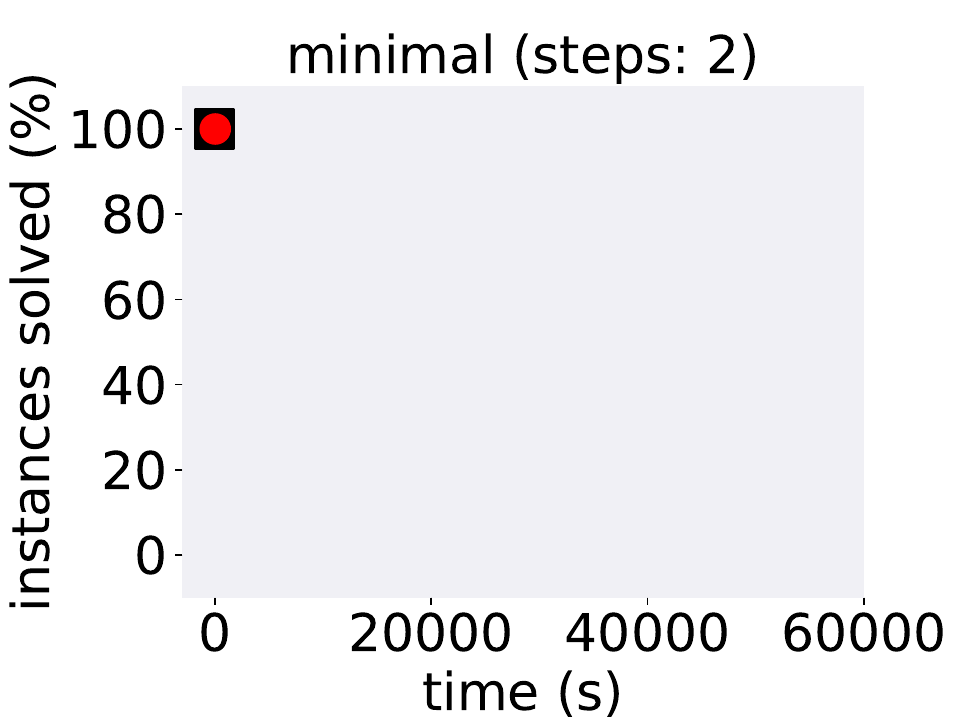}
		\includegraphics[width=0.35\linewidth]{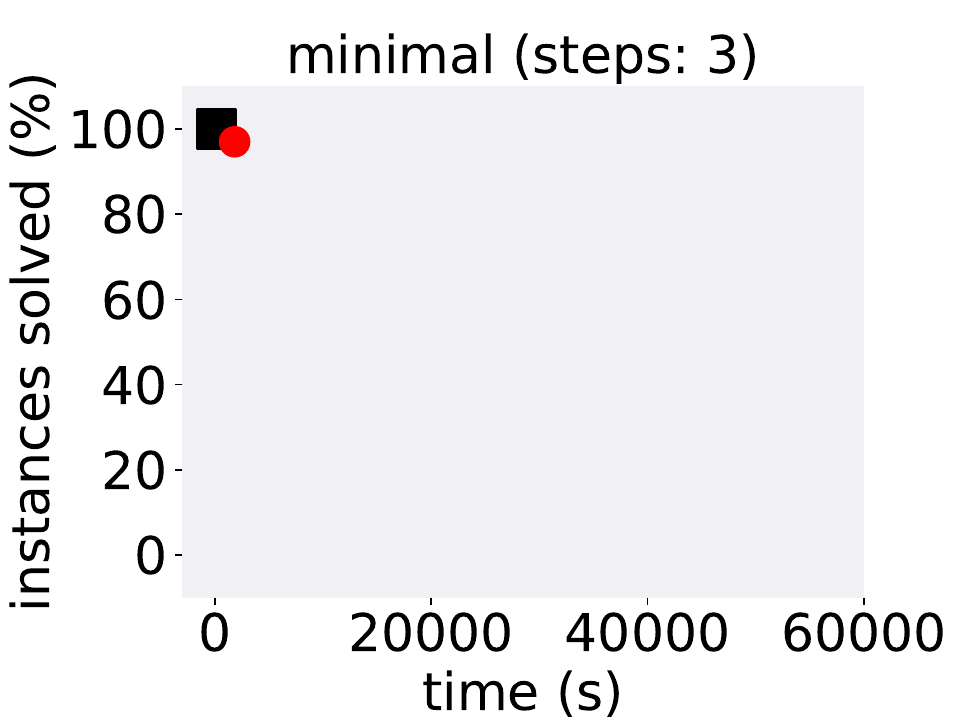}
		\includegraphics[width=0.35\linewidth]{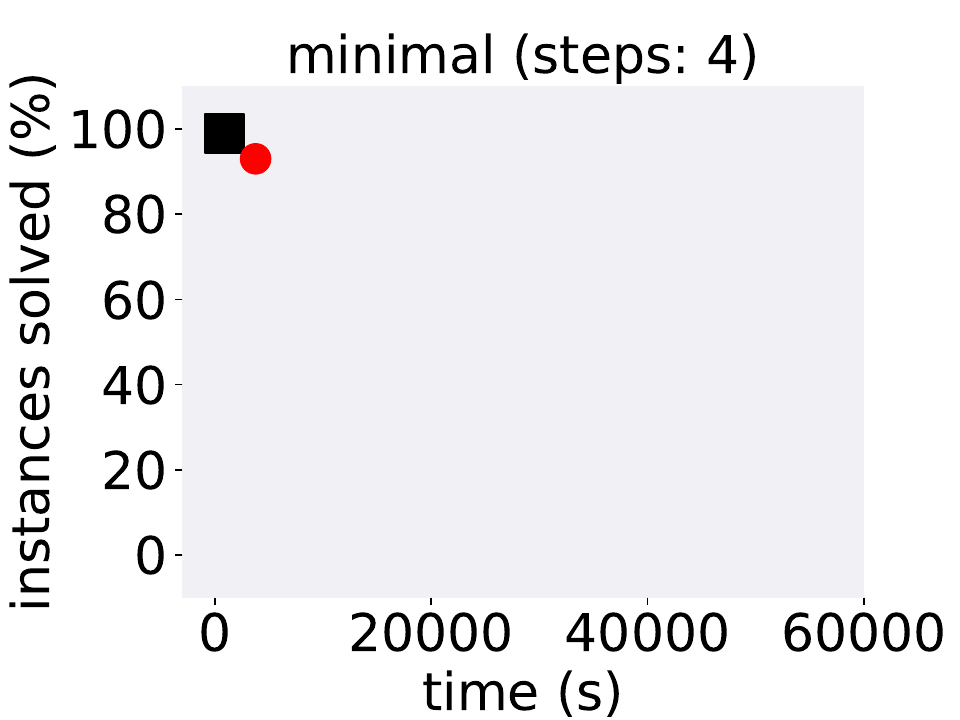}
		\includegraphics[width=0.35\linewidth]{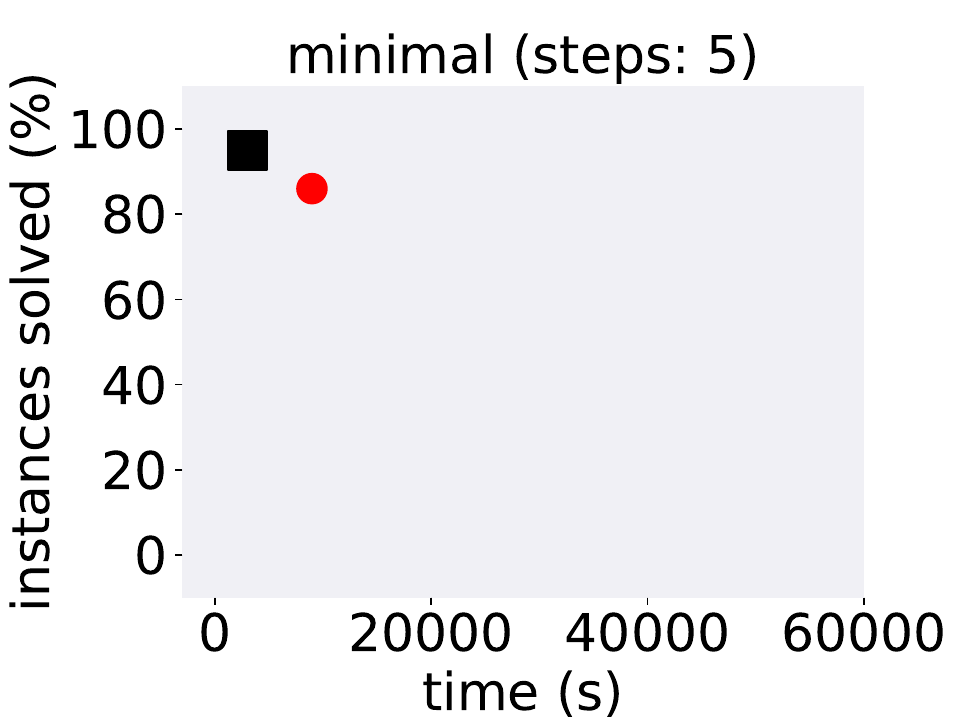}
		\includegraphics[width=0.35\linewidth]{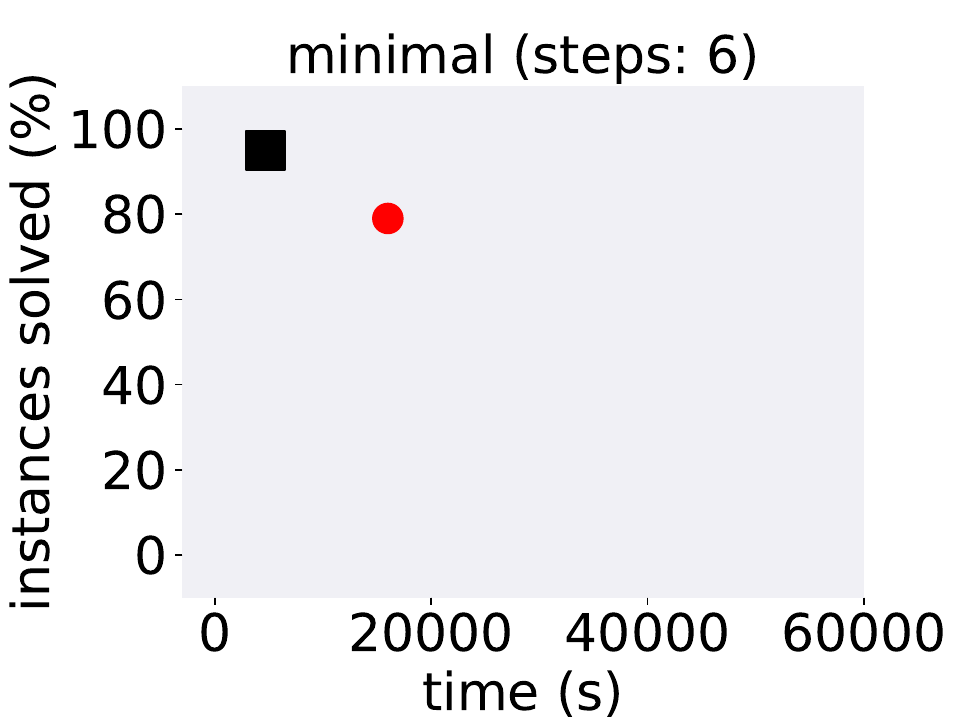}
		\includegraphics[width=0.35\linewidth]{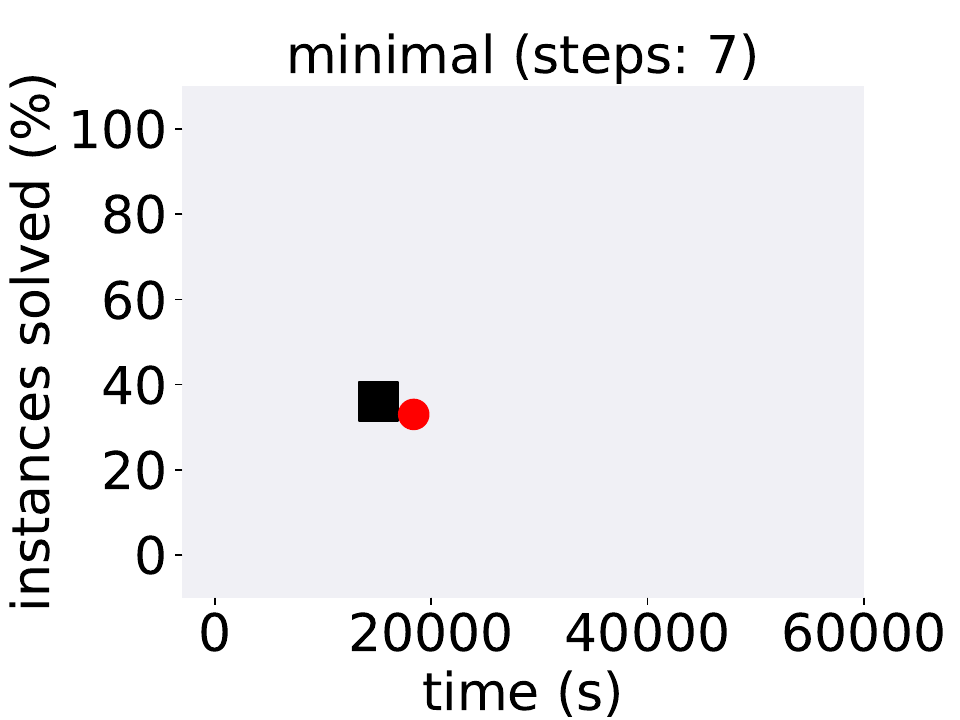}
		\includegraphics[width=0.35\linewidth]{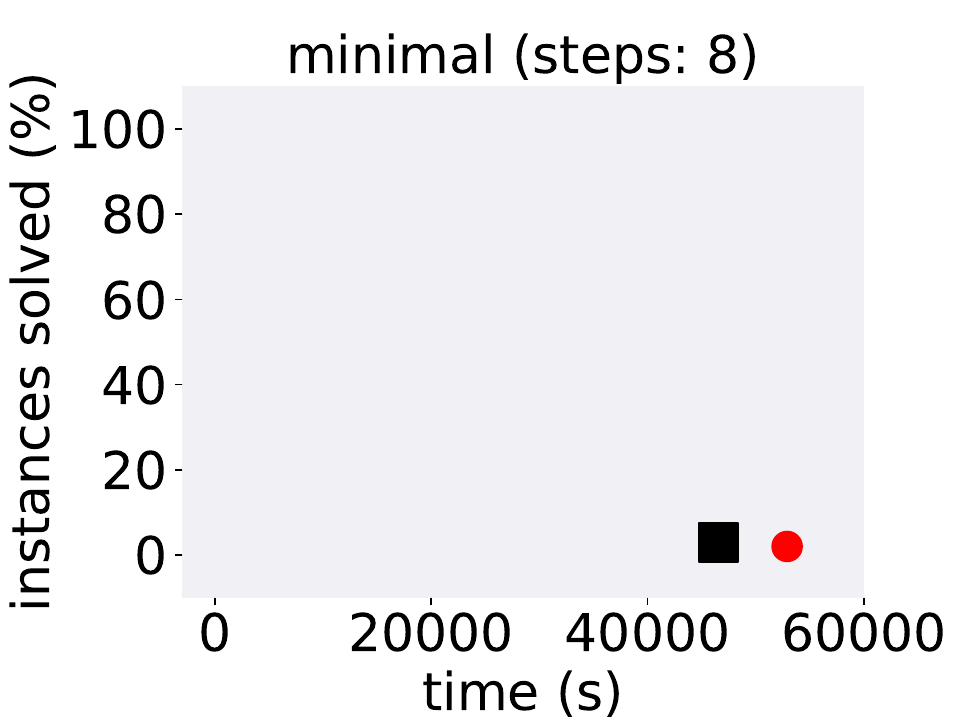}
	\end{center}
		\caption{\textit{TurtleBot}: solved instances of minimal explanation 
	search, by (accumulative) time, and $1\leq k \leq 8$ steps.}
	\label{fig:res_turtlebot_local}
\end{figure}

\begin{figure}[H]
	\centering
	\captionsetup{justification=centering}
	\begin{center}
		\includegraphics[width=0.35\linewidth]{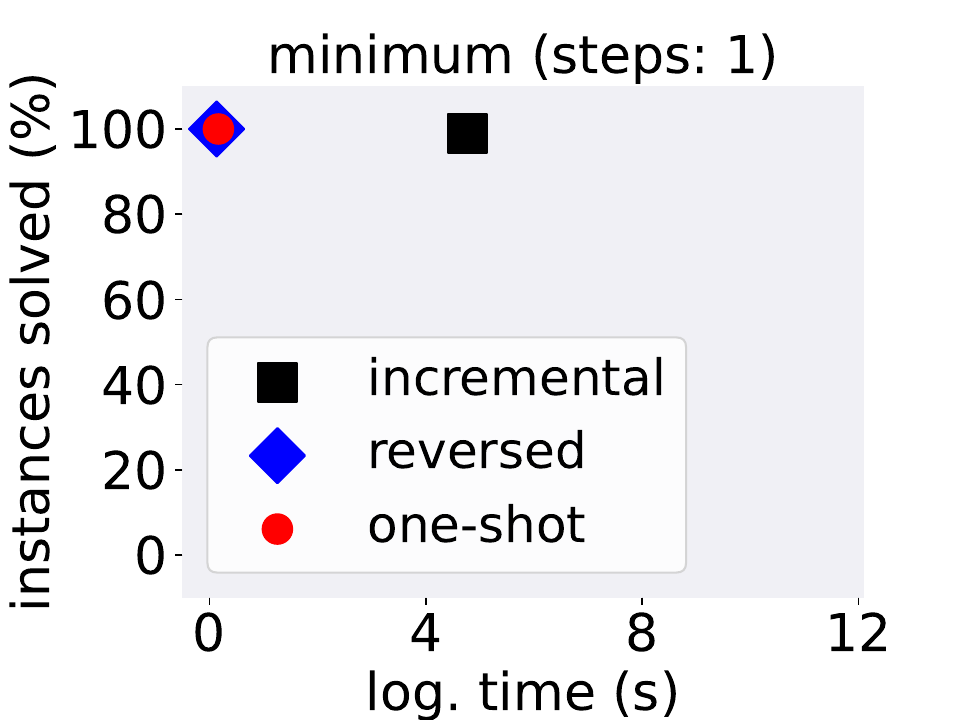}
		\includegraphics[width=0.35\linewidth]{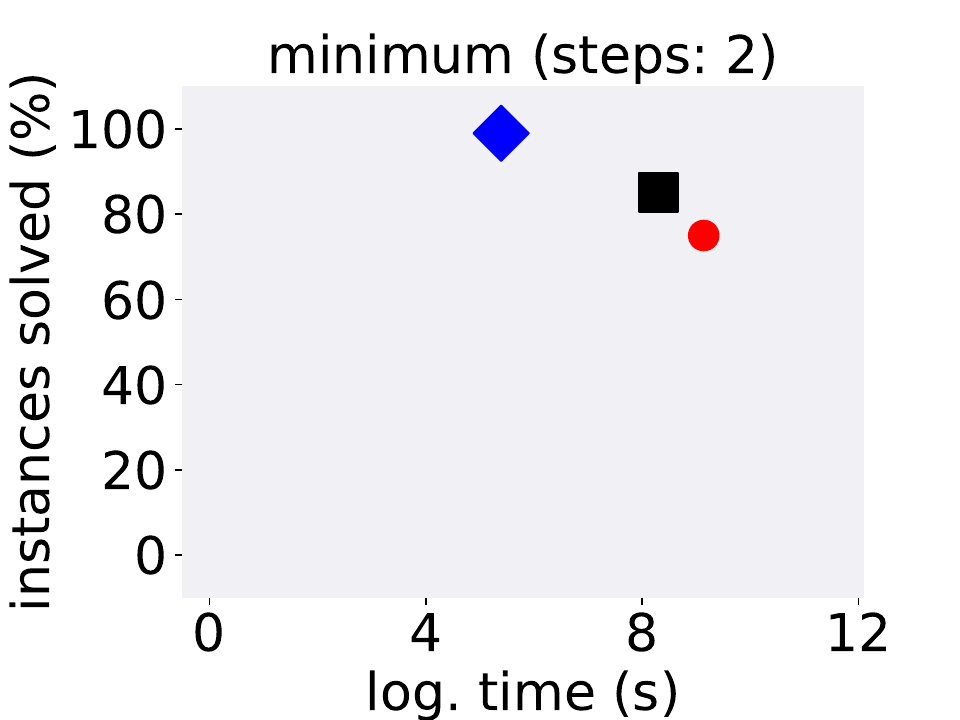}
		\includegraphics[width=0.35\linewidth]{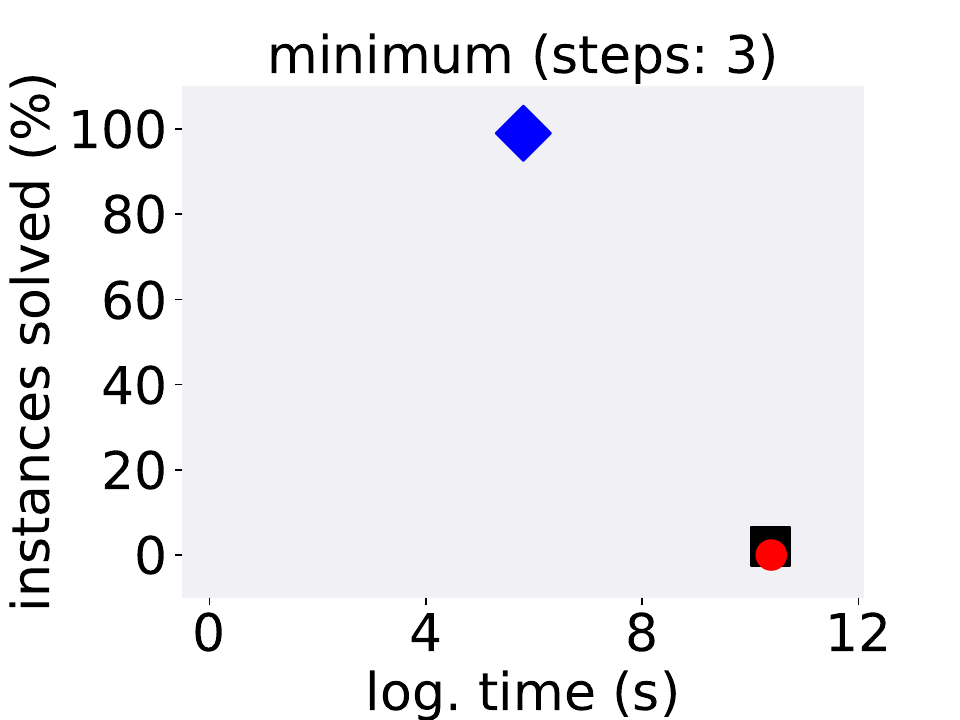}
		\includegraphics[width=0.35\linewidth]{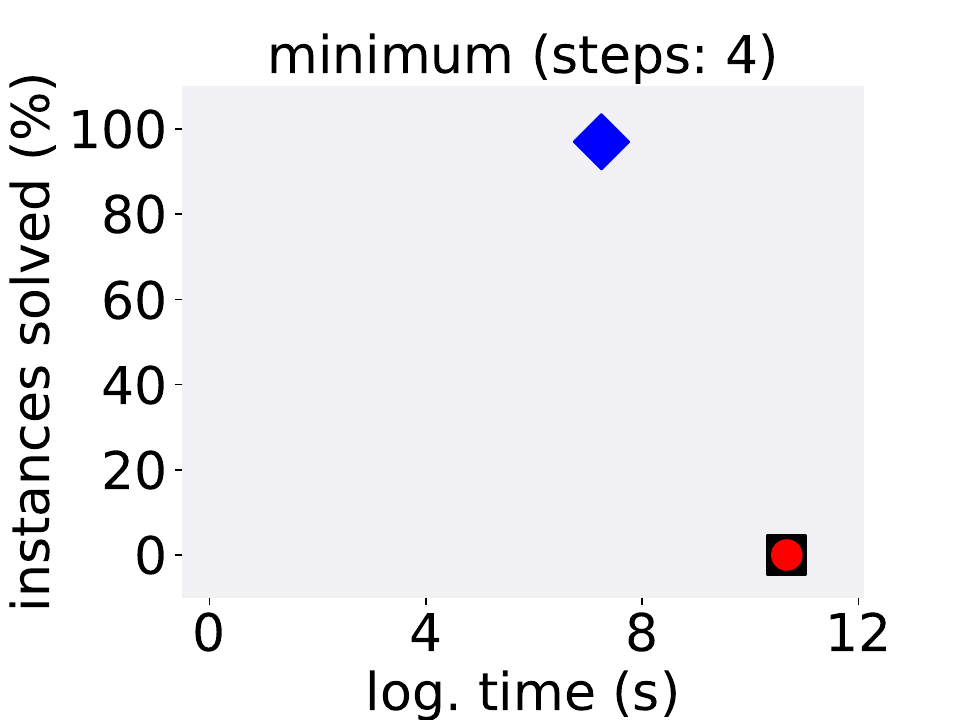}
		\includegraphics[width=0.35\linewidth]{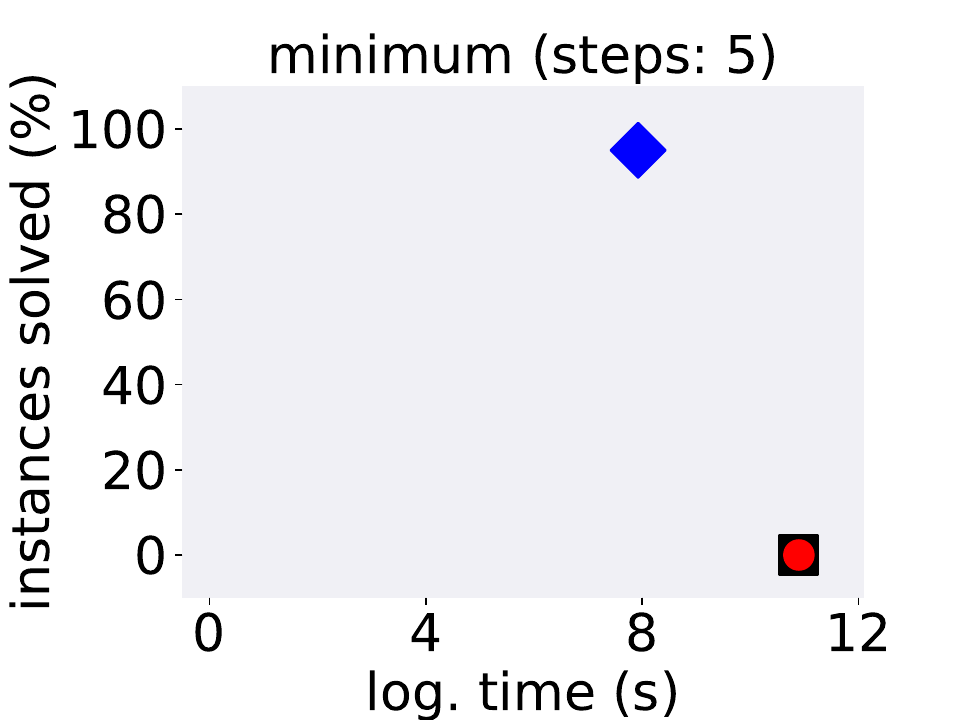}
		\includegraphics[width=0.35\linewidth]{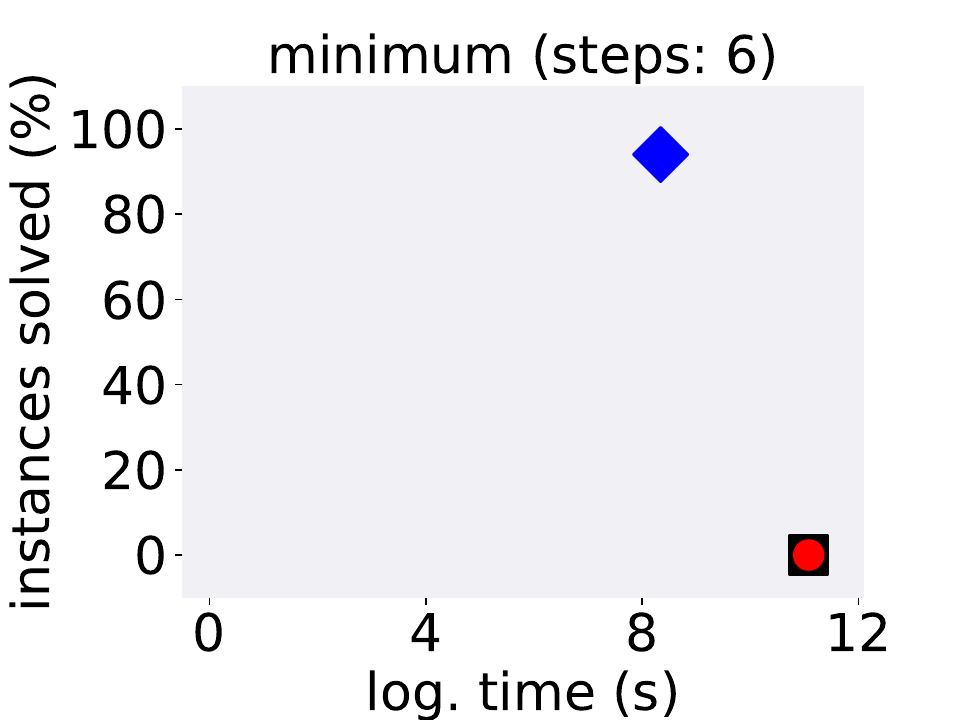}
		\includegraphics[width=0.35\linewidth]{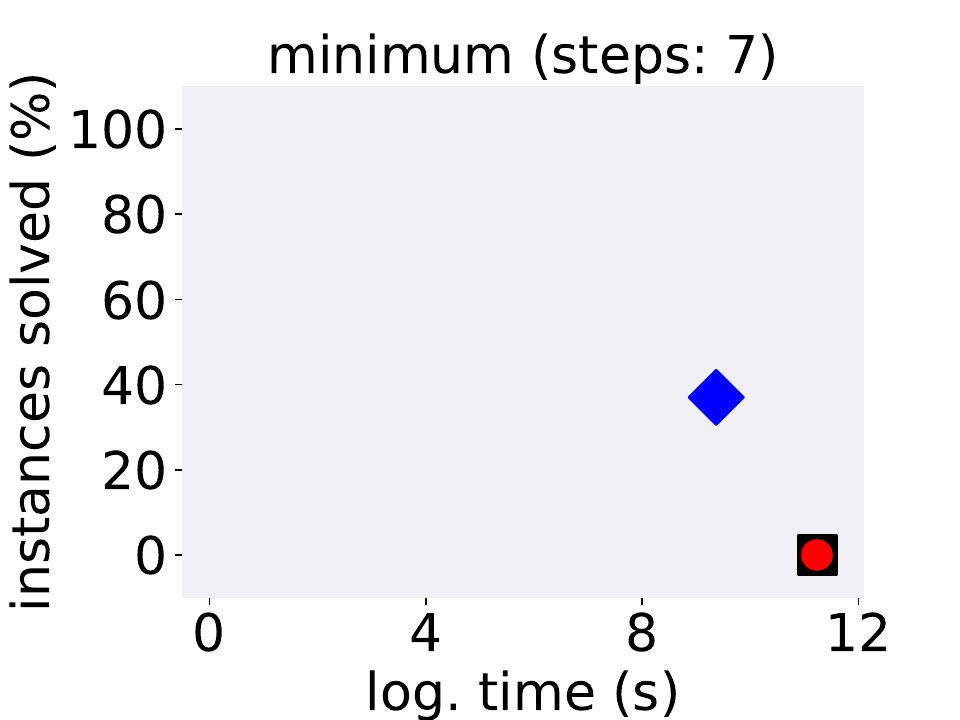}
		\includegraphics[width=0.35\linewidth]{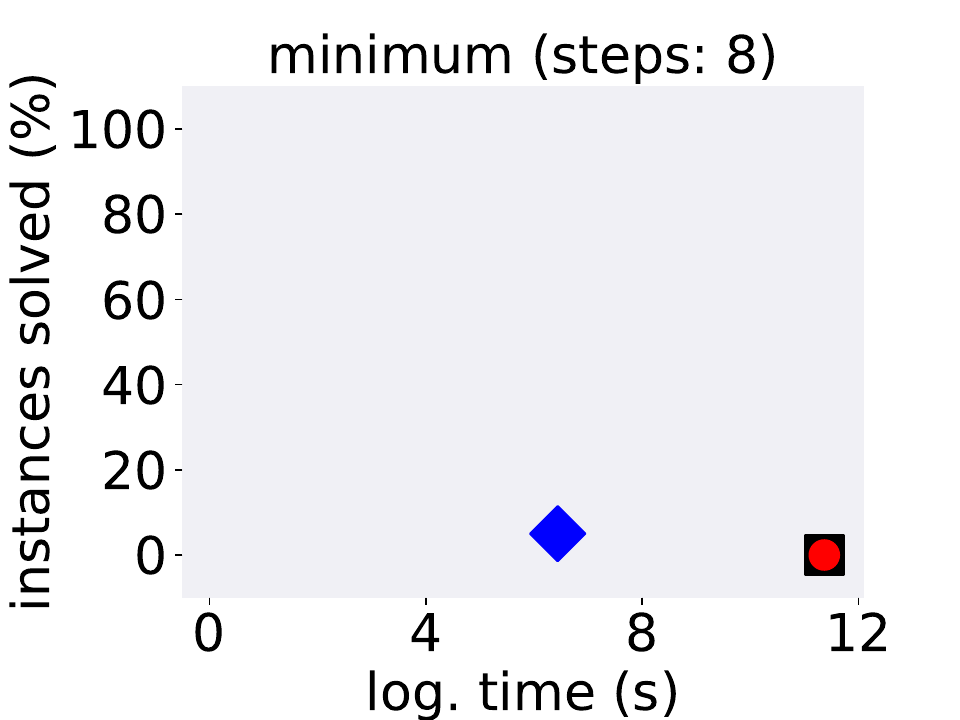}
	\end{center}
	\caption{\textit{TurtleBot}: solved instances of minimum explanation 
	search, by (accumulative) time, and $1\leq k \leq 8$ steps.}
	\label{fig:res_turtlebot_full}
\end{figure}

\newpage

\section{Comparison to Heuristic XAI Methods}
\label{sec:appendix:heuristicMethods}

Many heuristic explainable RL (XRL) methods
intervene in the training phase~\cite{juozapaitis2019explainable,
	amir2018highlights, madumal2020explainable}, and are thus unsuitable
for providing a feature-level, post-hoc explanation --- and
consequently, are incomparable to our approach.  Instead, we focused
on approaches similar to the ones suggested
in~\cite{vouros2022explainable, rizzo2019reinforcement,
	zhang2020explainable, dethise2019cracking}, which generate
explanations for DRL agents using feature-level XAI methods. We
studied two popular methods: \lime~\cite{ribeiro2016should}, and
\shap~\cite{lundberg2017unified}. Specifically, we compared our best-performing 
method, i.e., the reverse incremental enumeration method
(Method 4) to these approaches. We follow common
conventions~\cite{ignatiev2019validating} for comparing between these
heuristic methods and (our) formal XAI methods --- and allow \lime and
\shap to select explanations of the same size as our generated
explanations.
For each trace, we check whether the explanation produced by these competing 
methods (on this multi-step sequence) is valid. This is 
done by checking whether it is a valid hitting set of the produced contrastive 
examples~\cite{ignatiev2019abduction, ignatiev2020contrastive}. 

Our results (summarized in Table~\ref{table:heuristic-xai-comparison}) 
demonstrate the usefulness of our verification-driven method. Although \lime
and \shap are highly scalable, they tend to generate skewed explanations. This 
is often the case even for a single-step execution. This finding is in 
line with previous research~\cite{ignatiev2019validating, 
	camburu2019can}. In addition, it is apparent that when increasing the 
number of 
steps in the execution, the correctness of the explanations provided
by these approaches 
decreases drastically. We believe this is compelling evidence for the 
significance of our approach in generating formally provable, multi-step 
explanations of executions, which can only rarely be correctly generated by 
competing XAI approaches.

\begingroup
\setlength{\tabcolsep}{10pt}
\begin{table}[H]
	\centering
	\caption{Comparing non-verification approaches to our formal explainability 
		method. The columns indicate the ratio of correct results, per step.}
	\label{table:heuristic-xai-comparison}
	\centering
	\begin{tabular}{c||c||ccccccc} 
		\hline
		\multicolumn{9}{c}{\textbf{verified as formal explanations 
				(\%)}}                                                          
		
		\\
		
		\hline
		\multirow{2}{*}{\textbf{benchmark}} & 
		\multirow{2}{*}{\textbf{approach}} & \multicolumn{7}{c}{\textbf{steps 
				(k)}}                                                    \\ 
		\cline{3-9}
		&                                    & \textbf{1} & \textbf{2} & 
		\textbf{3} & \textbf{4} & \textbf{5} & \textbf{6} & \textbf{$\ge$7}  \\ 
		\hline
		\multirow{2}{*}{\textit{GridWorld}}            & 
		\lime                               & 15.0       & 1.0        & 
		2.0        & 0          & 0          & 0          & 0           \\
		& \shap                               & 2.0        & 0          & 
		0          & 0          & 0          & 0          & 0           \\ 
		\hline
		\multirow{2}{*}{\textit{TurtleBot}}            & 
		\lime                               & 20.0       & 0          & 
		0          & 2.1        & 1.1        & 0          & 0           \\
		& \shap                               & 23.0       & 2.0        & 
		2.0        & 1.0        & 0          & 0          & 0           \\
		\hline
	\end{tabular}
	
\end{table}

\end{document}
